\def\eqref#1{equation~\ref{#1}}
\def\1{\bm{1}}
\DeclareMathAlphabet{\mathsfit}{\encodingdefault}{\sfdefault}{m}{sl}
\SetMathAlphabet{\mathsfit}{bold}{\encodingdefault}{\sfdefault}{bx}{n}
\newcommand*\rel@kern[1]{\kern#1\dimexpr\macc@kerna}
\newcommand*\widebar[1]{%
  \begingroup
  \def\mathaccent##1##2{%
    \rel@kern{0.8}%
    \overline{\rel@kern{-0.8}\macc@nucleus\rel@kern{0.2}}%
    \rel@kern{-0.2}%
  }%
  \macc@depth\@ne
  \let\math@bgroup\@empty \let\math@egroup\macc@set@skewchar
  \mathsurround\z@ \frozen@everymath{\mathgroup\macc@group\relax}%
  \macc@set@skewchar\relax
  \let\mathaccentV\macc@nested@a
  \macc@nested@a\relax111{#1}%
  \endgroup
}
\definecolor{darkpastelgreen}{rgb}{0.01, 0.75, 0.24}
	\definecolor{cadmiumgreen}{rgb}{0.0, 0.42, 0.24}
\definecolor{armygreen}{rgb}{0.29, 0.33, 0.13}
\newtheorem{theorem}{Theorem}
\newtheorem{remark}{Remark}
\newtheorem{corollary}{Corollary}
\newtheorem{proposition}{Proposition}
\newtheorem{lemma}{Lemma}
\newtheorem{assumption}{Assumption}
\title{How Does the Task Landscape Affect MAML Performance?}
\author{Liam Collins\thanks{Department of Electrical and Computer Engineering, 
The University of Texas at Austin, Austin, TX,  USA. \qquad\qquad\{liamc@utexas.edu, mokhtari@austin.utexas.edu, sanjay.shakkottai@utexas.edu\}.}, \quad  Aryan Mokhtari$^*$ ,\quad Sanjay Shakkottai$^*$}
\begin{document}

\maketitle

\begin{abstract}
Model-Agnostic Meta-Learning (MAML) has become increasingly popular for training models that can quickly adapt to new tasks via one or few stochastic gradient descent steps. However, the MAML objective is significantly more difficult to optimize compared to standard non-adaptive learning (NAL), and little is understood about how much MAML improves over NAL in terms of the fast adaptability of their solutions in various scenarios. We analytically address this issue in a linear regression setting consisting of a mixture of easy and hard tasks, where hardness is related to the rate that gradient descent converges on the task. Specifically, we prove that in order for MAML to achieve substantial gain over NAL, (i) there must be some discrepancy in hardness among the tasks, and (ii) the optimal solutions of the hard tasks must be closely packed with the center far from the center of the easy tasks optimal solutions. We also give numerical and analytical results suggesting that these insights apply to two-layer neural networks. Finally, we provide few-shot image classification experiments that support our insights for when MAML should be used and emphasize the importance of training MAML on hard tasks in practice.
\end{abstract}

\newpage


\section{Introduction}

Large-scale learning models have achieved remarkable successes in domains such as computer vision and reinforcement learning. However, their high performance has come at the cost of requiring huge amounts of data and computational resources for training, meaning  they cannot be trained from scratch every time they are deployed to solve a new task. Instead, they typically must undergo a single pre-training procedure, then be fine-tuned to solve new tasks in the wild.


{\em Meta-learning} aims to  address this problem by extracting an inductive bias from a large set of pre-training, or meta-training, tasks that can be used to improve test-time adaptation.  {\em Model-agnostic meta-learning} (MAML) \citep{finn2017model}, one of the most popular meta-learning frameworks, formulates this inductive bias as an initial model for a few gradient-based fine-tuning steps. Given that for  the model can be fine-tuned on any meta-test task before evaluating its performance, MAML aims to learn the best initialization for fine-tuning among a set of meta-training tasks. 
To do so, it executes an episodic meta-training procedure in which the current initialization is adapted to specific tasks in an {\em inner loop}, then the initialization is updated in an {\em outer loop}. 
This procedure has led to impressive few-shot learning performance in many settings \citep{finn2017model,antoniou2018train}.

However, 
in some settings MAML's inner loop adaptation and the second derivative calculations resulting thereof may not justify their added computational cost compared to traditional, no-inner-loop pre-training. 
Multiple works have suggested that MAML's few-shot learning performance may not drop when executing the inner loop adaptation for only a fraction of the model parameters \citep{raghu2019rapid, oh2020does} or, in some cases, by ignoring the inner loop altogether \citep{bai2020important, chen2019closer, chen2020new, tian2020rethinking, dhillon2019baseline}. Unfortunately, the underlying reasons for MAML's behavior are still not well-understood, making it difficult to determine when inner loop adaptation  during meta-training yields meaningful benefit. 



In this work, we investigate when and why MAML's inner loop updates provide significant gain for meta-test time adaptation. To achieve this, we focus on how the meta-training loss landscape induced by the MAML objective affects adaptation performance compared to the loss landscape induced by the classical objective without inner loop adaptation, which we term the Non-Adaptive Learning (NAL) method. Notably, MAML should always perform at least as well as NAL because its meta-training procedure aligns with the meta-test time evaluation of performance after a few steps of task-specific SGD. Thus, our goals are to quantify the gain provided by MAML and determine the scenarios in which this gain is most significant. We start by studying the multi-task linear regression setting since it allows us to compare the exact solutions and excess risks for MAML and NAL. 
In doing so, we obtain novel insights on how MAML deals with hard tasks differently than NAL, and show that MAML achieves significant gain over NAL only if certain conditions are satisfied relating to the hardness and geography of the tasks. These observations provide a new understanding of MAML that is distinct from the representation learning perspective considered in recent works \citep{raghu2019rapid, du2020few, tripuraneni2020provable}.
We then give theoretical and empirical evidence generalizing these insights to neural networks.

 \begin{figure*}[t]
\begin{center}
     \begin{subfigure}[b]{0.24\textwidth}
         \centering
         \includegraphics[width=\textwidth]{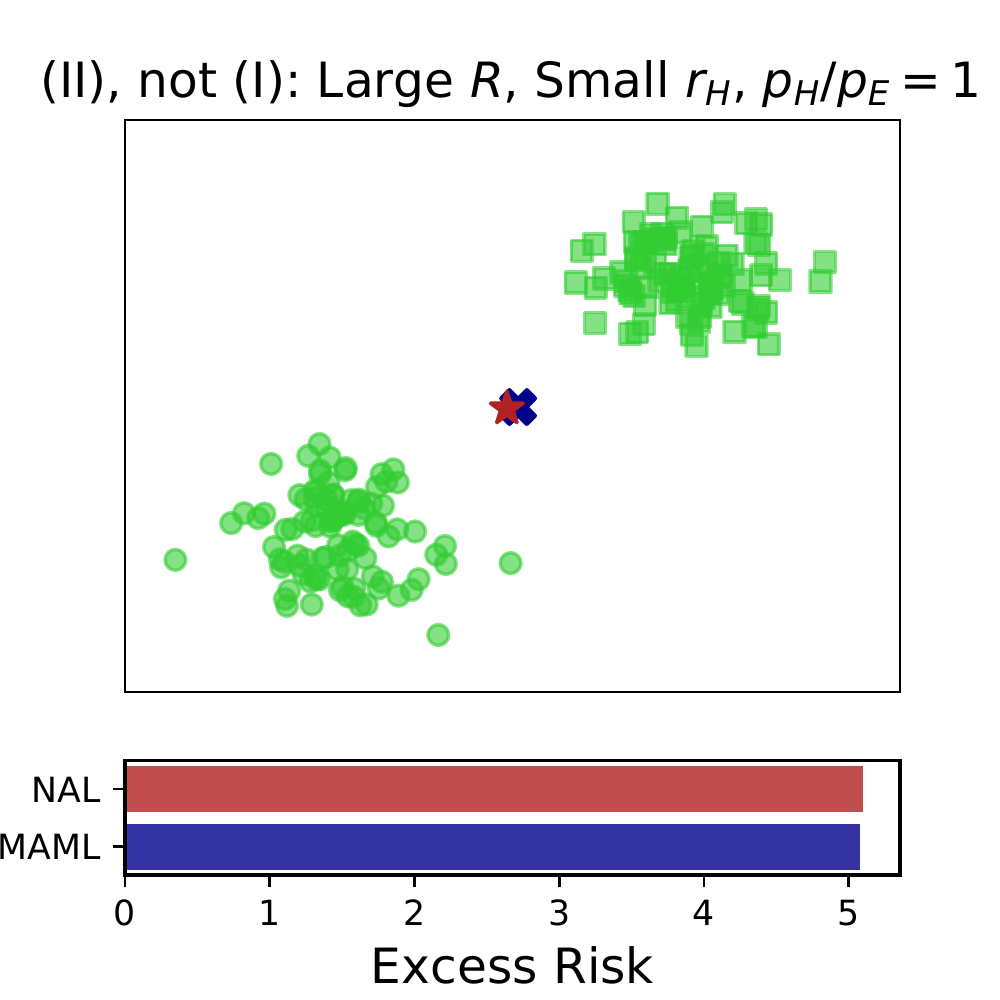}
         \caption{}
         \label{fig:y equals x}
     \end{subfigure}
     \begin{subfigure}[b]{0.24\textwidth}
         \centering
        \includegraphics[width=\textwidth]{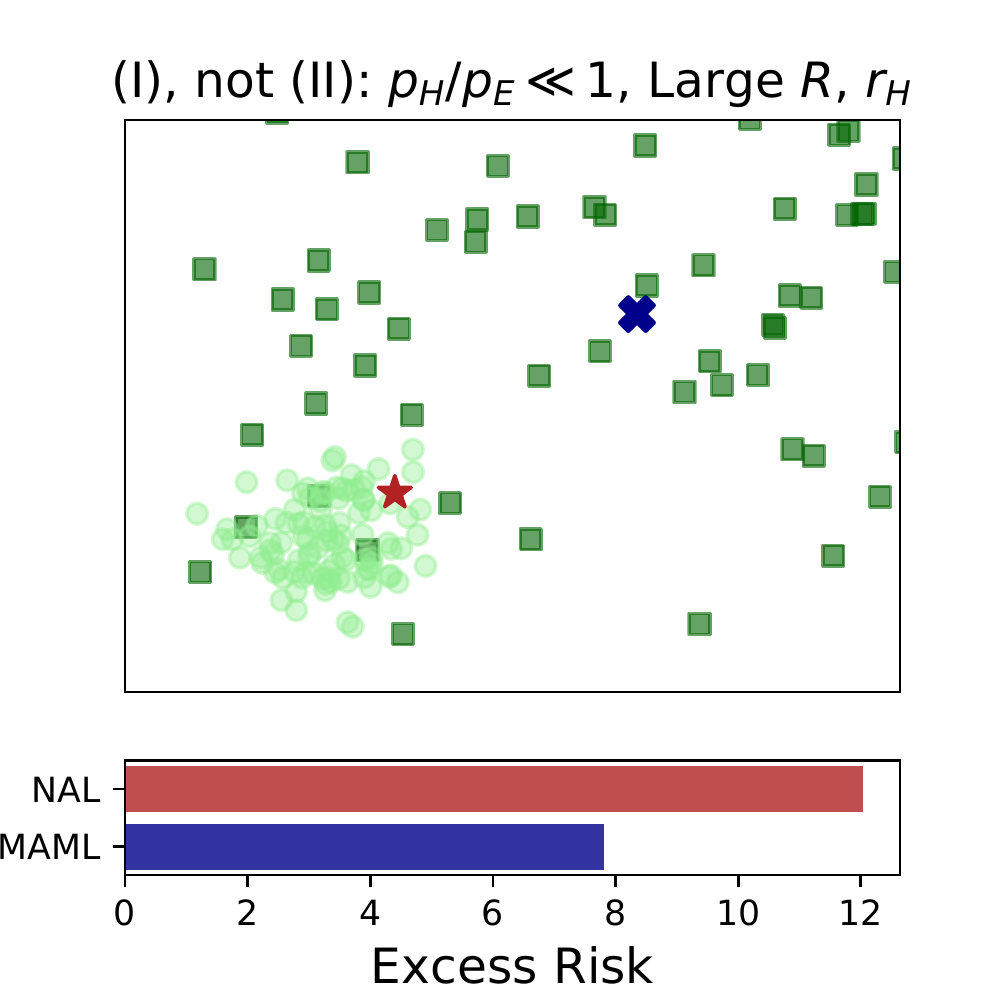}
         \caption{}
         \label{fig:three sin x}
     \end{subfigure}
     \begin{subfigure}[b]{0.24\textwidth}
         \centering
         \includegraphics[width=\textwidth]{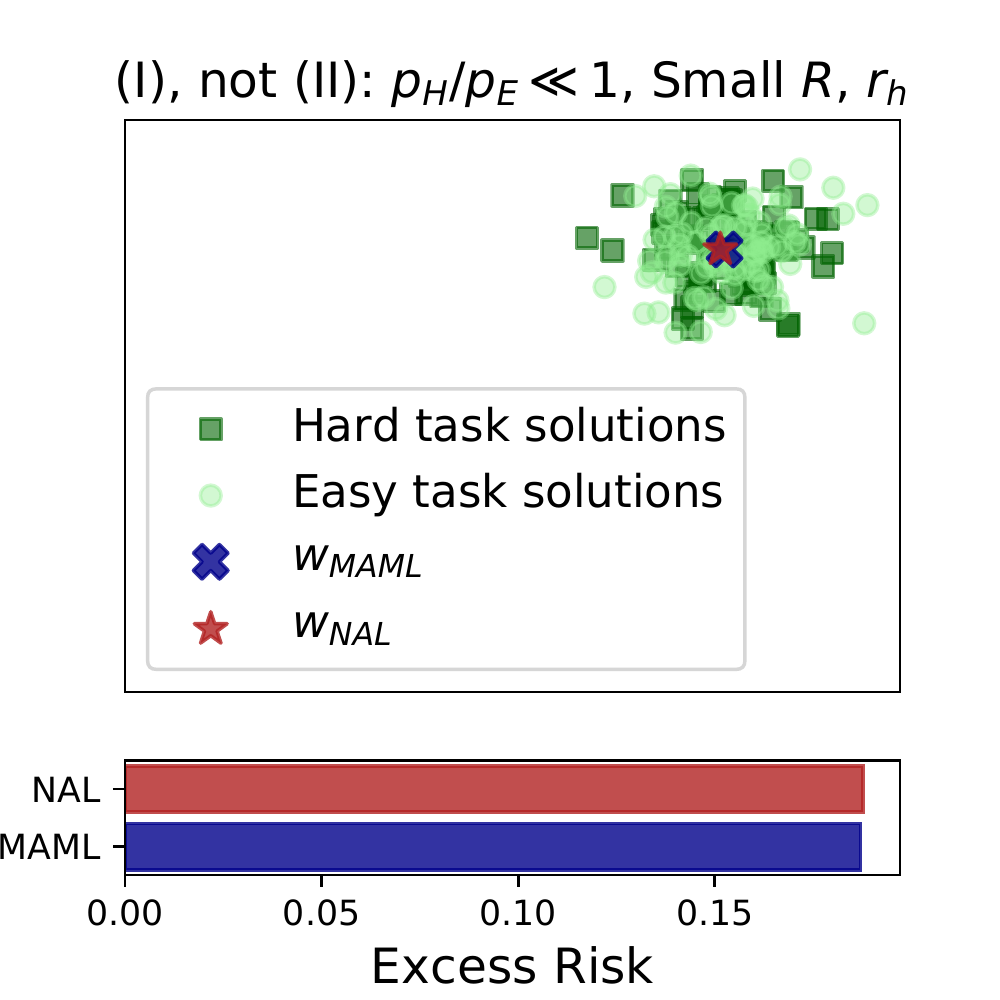}
         \caption{}
         \label{fig:five over x}
     \end{subfigure}
     \begin{subfigure}[b]{0.24\textwidth}
         \centering
         \includegraphics[width=\textwidth]{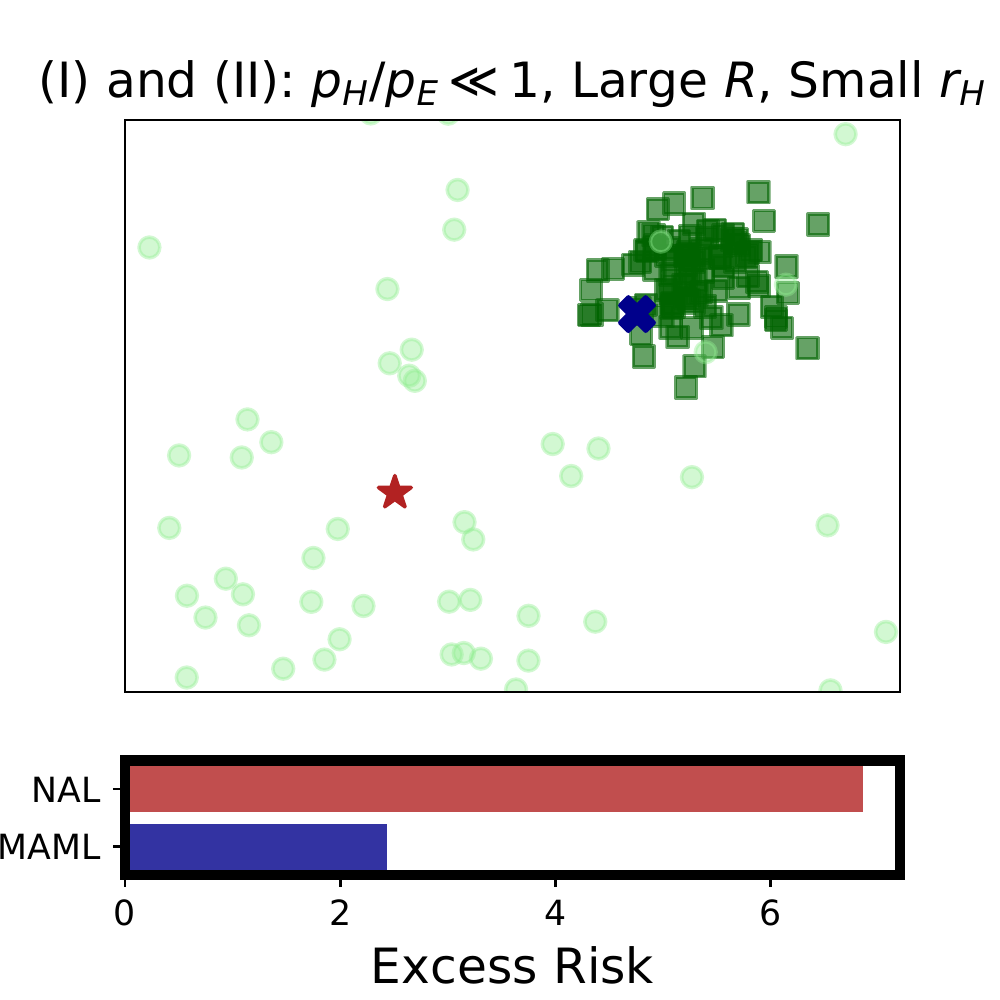}
         \caption{}
         \label{fig:five over x}
     \end{subfigure}
     \vspace{-2mm}
        \caption{MAML, NAL excess risks and optimal solutions in various environments.}
        \label{fig:three graphs}
 \vspace{-4mm}
\label{fig:eh_geo_gen}
\end{center}
\end{figure*}

In particular, our main observations are best captured in the setting in which tasks are either ``hard" or ``easy". We let $\rho_H$ be the hardness parameter for the hard tasks, and $\rho_E$ be the hardness parameter for the easy tasks, where $\rho_E > \rho_H$ (smaller hardness parameter means more hard). As we measure the hardness of a task by the rate at which gradient descent converges for the task, in this case, the hardness parameter is the task loss function's strong convexity parameter. For a particular task environment, we let $R$ be the dimension-normalized distance between the average of easy tasks' optimal solutions and the average of hard tasks' optimal solutions, and let $r_H$ quantify the variance, or dispersion, of the hard tasks' optimal solutions. Assuming that $\frac{\rho_H}{\rho_E}(1-\frac{\rho_H}{\rho_E})^2 \gg \frac{d}{m}$, where $d$ is the problem dimension and $m$ is the number of samples used for adaptation,
then
the ratio of the expected excess risks after task-specific adaptation of the NAL and MAML solutions is approximately (Corollary \ref{prop3}):
$\frac{\mathcal{E}_m(\mathbf{w}_{NAL})}{\mathcal{E}_m(\mathbf{w}_{MAML})} \approx 1 + \frac{ R^2}{r_{H}}$.
Thus, 
the largest gain for MAML over NAL occurs when the task environment satisfies (informal): 
  \vspace{-2mm}
 \renewcommand{\theenumi}{\Roman{enumi}}
 \begin{enumerate}
  \item {\em Hardness discrepancy:} the hard tasks are significantly more difficult than the easy tasks $\frac{\rho_H}{\rho_E}\ll 1$, without being impossibly hard ($\rho_H \geq c > 0$), and
  \vspace{-1mm}
    \item {\em Benign geography:} the optimal solutions of the hard tasks have small variance $r_H$, and the distance between the hard and easy task centers $R$ is large.
      \vspace{-2mm}
 \end{enumerate}
 Figure \ref{fig:eh_geo_gen} summarizes observations (I) and (II) by plotting locations of the easy and hard tasks' optimal solutions sampled from four distinct task environments and the corresponding solutions and excess risks for NAL and MAML. The environment in subfigure (a) violates (I) since $\frac{\rho_H}{\rho_E}=1$. Subfigures (b)-(c) show environments with either small $R$ or  large $r_h$, so  (II) is not satisfied. In contrast, the environment in subfigure (d) has small $r_H$, large $R$, and $\frac{\rho_H}{\rho_E}\!=\!0.2$, in which case as expected MAML achieves the largest gain over NAL, $\frac{\mathcal{E}_m(\mathbf{w}_{NAL})}{\mathcal{E}_m(\mathbf{w}_{MAML})} \approx 3 $.

\textbf{Summary -- Why MAML.} We show  theoretically and empirically that MAML outperforms standard training (NAL) in linear and nonlinear settings by finding a solution that excels on the more-difficult-to-optimize (hard) tasks, as long as the hard tasks are sufficiently similar. Our work thus highlights the importance of task hardness and task geography to the success of MAML.
\subsection{Related work} 

A few works have explored why MAML is effective. 
\cite{raghu2019rapid} posed two hypotheses for MAML's success in training neural networks: {\em rapid learning}, meaning that MAML finds a set of parameters advantageous for full adaptation on new tasks, and {\em feature reuse}, meaning that MAML learns a set of reusable features, among the tasks. The authors' experiments showed that MAML learns a shared representation, supporting the feature reuse hypothesis. Motivated by this idea, \cite{saunshi2020sample} proved that Reptile \citep{nichol2018reptile}, a similar algorithm to MAML, can learn a representation that reduces the new-task sample complexity in a two-task, linear setting. 
Conversely, \cite{oh2020does} gave empirical evidence that MAML performance does not degrade when forced to learn unique representations for each task, 
and \cite{goldblum2020unraveling} showed that while some meta-learning algorithms learn more clustered features compared to standard training, the same cannot be said of MAML. 
Moreover, a series of works have shown that removing the inner loop and learning a distinct classifier for each class of images in the environment can yield representations as well-suited for few-shot image classification as MAML's \citep{chen2019closer, chen2020new, tian2020rethinking, dhillon2019baseline}. In this work, we take a more general, landscape-based perspective on the reasons for MAML's success, and show that MAML can still achieve meaningful gain without necessarily learning a representation.

Much of the theory surrounding MAML and meta-learning more broadly has focused on linear settings, specifically multi-task linear regression or the related linear centroid problem \citep{denevi2018learning}. Some works have studied how to allocate data amongst and within tasks  \citep{cioba2021distribute, bai2020important, saunshi2021representation}. \citet{denevi2018learning} considered meta-learning a common mean for ridge regression, and \cite{kong2020meta} studied whether many small-data tasks and few large-data tasks is sufficient for meta-learning.
Other works have examined the role of the inner loop SGD step size in meta-learning approaches \citep{bernacchia2021meta,charles2020outsized,wang2020guarantees}, while
\cite{gao2020modeling} studied the trade-off between the accuracy and computational cost of the NAL and MAML training solutions. However, unlike our work, these works either did not consider or did not provide any interpretable characterization of the types of task environments in which MAML is effective.
Other theoretical studies of MAML and related methods have focused on convergence rates in general settings
\citep{fallah2020convergence, rajeswaran2019meta, zhou2019efficient, ji2020convergence, ji2020theoretical, collins2020task, wang2020global} and excess risk bounds for online learning \citep{finn2019online, balcan2019provable, khodak2019adaptive}. Like the current work, \cite{fallah2020convergence} noticed that the MAML solution should be closer than the NAL solution to the hard tasks' global minima, but the authors neither quantified this observation nor further compared MAML and NAL.  
\vspace{-2mm}
\section{Problem Setup: Training to Adapt} \label{section:pf}

\vspace{-1mm}

We aim to determine when and why MAML yields models that are significantly more adaptable compared to models obtained by solving the traditional NAL problem in multi-task environments. To this end, we consider the gradient-based meta-learning setting in which a meta-learner tries to use samples from a set of tasks observed during meta-training to compute a model that performs well after one or a few steps of SGD on a new task drawn from the same environment at meta-test time.
 
Specifically, we follow \cite{baxter1998theoretical} by considering an environment $p$ which is a distribution over tasks. Each task $\mathcal{T}_i$ is composed of a data distribution $\mu_i$ over an input space $\mathcal{X}$ and a label space $\mathcal{Y}$. We take our model class to be the family of functions $\{h_\mathbf{w} : \mathcal{X} \rightarrow \mathcal{Y}: \mathbf{w} \in \mathbb{R}^D\}$ where $h_{\mathbf{w}}$ is a model parameterized by $\mathbf{w}$. The population loss $f_i(\mathbf{w})$ on task $i$ is the expected value of  the loss of $h_\mathbf{w}$ on samples drawn from $\mu_i$, namely $f_i(\mathbf{w}) \coloneqq \mathbb{E}_{(\mathbf{x},y)\sim \mu_i}[\ell(h_\mathbf{w}(\mathbf{x}), y)]$, where $\ell$ is some loss function, such as the squared or cross entropy loss.

During training, the meta-learner samples $T$ tasks from $p$ and $n$ points $\{(\mathbf{x}_i^j, y_i^j)\}_{j=1}^n \sim \mu_i^n$ for each sampled task $\mathcal{T}_i$. 
The meta-learner uses this data to compute an initial model $\mathbf{w}$, which is then evaluated as follows. First, the meta-learner samples a new task $\mathcal{T}_i\sim p$ and $m$ labeled points $\{(\mathbf{\hat{x}}_i^j, \hat{y}_i^j) \}_{j=1}^m \sim \mu_i^m$. Next, it updates $\mathbf{w}$ with one step of stochastic gradient descent (SGD) on the loss function $f_i$ using those $m$ samples and step size $\alpha$. Namely, letting $\mathbf{\hat{X}}_i \in \mathbb{R}^{m \times d}$ and $\mathbf{\hat{y}}_i \in \mathbb{R}^{m}$ denote the matrix and vector containing the $m$ feature vectors and their labels, respectively, 
the update of $\mathbf{w}$ is given by $\mathbf{w}_i \coloneqq \mathbf{w} - \alpha \nabla_{\mathbf{w}} \hat{f}_i(\mathbf{w}; \mathbf{\hat{X}}_i, \mathbf{\hat{y}}_i)$, where  $\hat{f}_i(\mathbf{w};\mathbf{\hat{X}}_i, \mathbf{\hat{y}}_i)\!\coloneqq \! \frac{1}{m}\sum_{j=1}^m \ell(h_\mathbf{w}(\mathbf{\hat{x}}_i^j),\hat{y}_i^j)$ is the empirical average of the loss on the $m$ samples in $(\mathbf{\hat{X}}_i, \mathbf{\hat{y}}_i)$.
The test loss of $\mathbf{w}$ is the expected population loss of $\mathbf{w}_i$, where the expectation is taken over tasks and the $m$ samples, specifically
\begin{align} \label{exp_test}
    F_m(\mathbf{w}) &\coloneqq 
  \mathbb{E}_{i} \mathbb{E}_{(\mathbf{\hat{X}}_i, \mathbf{\hat{y}}_i)} \big[ f_i(\mathbf{w}\! -\! \alpha \nabla \hat{f}_i(\mathbf{w};\mathbf{\hat{X}}_i, \mathbf{\hat{y}}_i ))\big]
\end{align}
where we have used the shorthand $\mathbb{E}_{i}\!\coloneqq\! \mathbb{E}_{\mathcal{T}_i\sim p}$ and $\mathbb{E}_{(\mathbf{\hat{X}}_i, \mathbf{\hat{y}}_i)} \!\coloneqq\! \mathbb{E}_{(\mathbf{\hat{X}}_i, \mathbf{\hat{y}}_i) \sim \mu_i^m}$.
For fair evaluation, we 
measure solution quality by the excess risk
\begin{align} \label{eq:excess_risk}
    \mathcal{E}_m(\mathbf{w}) &\coloneqq F_m(\mathbf{w}) - \mathbb{E}_i \big[ \inf_{\mathbf{w}_i \in \mathbb{R}^D} f_i(\mathbf{w}_i) \big]
\end{align}
The excess risk is the difference between the average performance of $\mathbf{w}$ after one step of task-specific adaptation from the average performance of the best model for each task. 
To find a model with small excess risk, one can solve one of two problems during training, NAL or MAML.

\vspace{0.6mm}
\noindent \textbf{NAL.} NAL minimizes the loss $f_i(\mathbf{w})$ on average across the training tasks, which may yield small excess risk $\mathcal{E}_m(\mathbf{w})$ with less computational cost than MAML \citep{gao2020modeling}. 
Denoting the $n$ training examples for the $i$-th task as $\mathbf{{X}}_i \!\in\! \mathbb{R}^{n \times d}$ and their labels as $\mathbf{{y}}_i \!\in\! \mathbb{R}^{n}$, 
NAL 
solves
\begin{align} \label{emp_orig}
    \min_{\mathbf{w}\in \mathbb{R}^D} {F}^{tr}_{NAL}(\mathbf{w}) &\coloneqq \tfrac{1}{T} \textstyle\sum_{i=1}^T \hat{f}_i(\mathbf{w}; \mathbf{X}_i, \mathbf{y}_i),  
\end{align}
which is a surrogate for the expected risk minimization problem defined as
$
    \min_{\mathbf{w}\in \mathbb{R}^D}\ 
 \mathbb{E}_{i} \left[f_i(\mathbf{w})\right]
$.
We let $\mathbf{w}_{NAL}^\ast$ denote the unique solution of the expected risk minimization problem, and let $\mathbf{w}_{NAL}$ denote the unique solution to (\ref{emp_orig}). We emphasize that in our study we evaluate the solution of NAL by its expected error after running one step of SGD using the $m$ samples from a new task that are released at test time, and this error is captured by the excess risk $\mathcal{E}_m(\mathbf{w}_{NAL})$, defined in (\ref{eq:excess_risk}).

\vspace{0.6mm}
\noindent \textbf{MAML.} In contrast to NAL, MAML  minimizes a surrogate loss of (\ref{exp_test}) during training. 
According to the MAML framework, the $n$ samples per task are divided into $\tau$ training ``episodes'', each with  $n_2$ ``inner" samples for the inner SGD step and $n_1$ ``outer" samples for evaluating the loss of the fine-tuned model. Thus, we have that
$\tau(n_1+n_2) = n$. We denote the matrices that contain the outer samples for the $j$-th episode of the $i$-th task as $\mathbf{X}_{i,j}^{out} \in \mathbb{R}^{n_1 \times d}$ and $\mathbf{y}_{i,j}^{out} \in \mathbb{R}^{n_1}$, and the matrices that contain the inner samples as $\mathbf{{X}}_{i,j}^{in} \in \mathbb{R}^{n_2 \times d}$ and $\mathbf{{y}}_{i,j}^{in} \in \mathbb{R}^{n_2}$. The MAML objective is then
\begin{equation}\label{emp_maml}
    \min_{\mathbf{w}\in \mathbb{R}^D}\ 
   {F}_{MAML}^{tr}(\mathbf{w}) \coloneqq 
   \tfrac{1}{T\tau } \sum_{i=1}^T \sum_{j=1}^\tau \hat{f}_i(\mathbf{w} - \alpha \nabla \hat{f}_i(\mathbf{w};\mathbf{{X}}^{in}_{i,j}, \mathbf{{y}}^{in}_{i,j} ); \mathbf{{X}}^{out}_{i,j}, \mathbf{{y}}^{out}_{i,j}).
\end{equation}
We denote the unique solution to (\ref{emp_maml}) by $\mathbf{w}_{MAML}$ and the unique solution to the population version (\ref{exp_test}) by $\mathbf{w}_{MAML}^\ast$. We expect $\mathcal{E}_m(\mathbf{w}_{MAML}) \leq \mathcal{E}_m(\mathbf{w}_{NAL})$ since (\ref{emp_maml}) is a surrogate for the true objective of minimizing (\ref{exp_test}). However, the gain of MAML over NAL may not be significant enough to justify its added computational cost \citep{gao2020modeling}, necessitating a thorough understanding of the relative behavior of MAML and NAL.



\section{Multi-Task Linear Regression}
\label{section:linear}
We first explore the relative behaviors of MAML and NAL in a setting in which we can obtain closed-form solutions: multi-task linear regression. Here, the model $h_\mathbf{w}$ maps inputs $\mathbf{x}$ to predicted labels by taking the inner product with $\mathbf{w}$, i.e. $h_\mathbf{w}(\mathbf{x})\! =\! \langle\mathbf{w}, \mathbf{x}\rangle$. The loss function $\ell$ is the squared loss, therefore $f_i(\mathbf{w})\! =\! \frac{1}{2}\mathbb{E}_{(\mathbf{x}_i, {y}_i)\sim \mu_i}[(\langle\mathbf{w}, \mathbf{x}_i\rangle\! -\! y_i)^2]$ and $\hat{f}_i(\mathbf{w}; \mathbf{X}_i, \mathbf{y}_i)\!=\! \frac{1}{2}\|\mathbf{X}_i\mathbf{w}\! -\! \mathbf{y}_i\|_2^2$  for all $i$.
We consider a realizable setting in which the data for the $i$-th task is Gaussian and generated by a ground truth model $\mathbf{w}_{i}^{\ast}\! \in\! \mathbb{R}^d$. That is, points $(\mathbf{x}_i,y_i)$ are sampled from $\mu_i$ by first sampling $\mathbf{x}_i\! \sim\! \mathcal{N}(\mathbf{0}, \mathbf{\Sigma}_i)$, then sampling $y_i \sim \mathcal{N}(\langle \mathbf{w}_{i}^{\ast}, \mathbf{x}_i \rangle, \nu^2 )$, i.e. $y_i\! =\! \langle \mathbf{w}_{i}^{\ast}, \mathbf{x}_i \rangle\! +\! z_i$ where $z_i\! \sim\! \mathcal{N}(0, \nu^2)$.
In this setting the population-optimal solutions for NAL and MAML 
are given by:
\begin{align}
    \mathbf{w}_{NAL}^\ast &= \mathbb{E}_{i}[\mathbf{\Sigma}_i]^{-1} \mathbb{E}_{i}[\mathbf{\Sigma}_i \mathbf{w}_{i}^{\ast}], \quad \text{ and } \quad 
    \mathbf{w}_{MAML}^\ast = \mathbb{E}_{i}[\mathbf{Q}_i^{(n_2)}]^{-1} \mathbb{E}_{i}[\mathbf{Q}_i^{(n_2)} \mathbf{w}_{i}^{\ast}], \label{maml_pop}
\end{align}
where for any $s\in \mathbb{N}$, we define
    $\mathbf{Q}_i^{(s)}\! \coloneqq ( \mathbf{I}_d\!-\! \alpha \mathbf{\Sigma}_i )\mathbf{\Sigma}_i ( \mathbf{I}_d\!-\!\alpha \mathbf{\Sigma}_i ) +  \frac{\alpha^2}{s}(\text{tr}(\mathbf{\Sigma}_i^2) \mathbf{\Sigma}_i \!+\! \mathbf{\Sigma}_i^3)$.
Note that these $\mathbf{Q}_i^{(s)}$ matrices  are composed of two terms: a preconditioned covariance matrix $\mathbf{\Sigma}_i \left( \mathbf{I}_d- \alpha \mathbf{\Sigma}_i \right)^2$, and a perturbation matrix due to the stochastic gradient variance.
We provide expressions for the empirical solutions $\mathbf{w}_{NAL}$ and $\mathbf{w}_{MAML}$ for this setting and show that they converge to $\mathbf{w}_{NAL}^\ast$ and $\mathbf{w}_{MAML}^\ast$ as $n,T,\tau \rightarrow \infty$ in Appendix \ref{app:converge}. Since our focus is ultimately on the nature of the solutions sought by MAML and NAL, not on their non-asymptotic behavior, we analyze  $\mathbf{w}_{NAL}^\ast$ and $\mathbf{w}_{MAML}^\ast$ in the remainder of this section, starting with the following result.

It is most helpful to interpret the solutions $\mathbf{w}_{MAML}^\ast$ and $\mathbf{w}_{NAL}^\ast$ and their corresponding excess risks through the lens of task hardness.
In this strongly convex setting, we naturally define \textbf{task hardness as the rate at which gradient descent converges to the optimal solution for the task}, with harder tasks requiring more steps of gradient descent to reach an optimal solution. For step size $\alpha$ fixed across all tasks, the rate with which gradient descent traverses each $f_i$ is determined by
the minimum eigenvalue of the Hessian of $f_i$, namely $\lambda_{\min}(\mathbf{\Sigma}_i)$.
So, for ease of interpretation, we can think of the easy tasks as having data with large variance in all directions (all the eigenvalues of their $\mathbf{\Sigma}_i$ are large), while the hard tasks have data with small variance in all directions (all $\lambda(\mathbf{\Sigma}_i)$ are small). 

Note that both $\mathbf{w}_{MAML}^\ast$ and $\mathbf{w}_{NAL}^\ast$ are normalized weighted sums of the task optimal solutions, with weights being functions of the $\mathbf{\Sigma}_{i}$'s.
For simplicity, consider the case in which $m$ and $n_2$ are large, thus the $\mathbf{Q}_i$ matrices are dominated by $\mathbf{\Sigma}_i(\mathbf{I}_d -\alpha \mathbf{\Sigma}_i)^2$. 
Since the weights for $\mathbf{w}_{NAL}^\ast$ are proportional to the $\mathbf{\Sigma}_i$ matrices, $\mathbf{w}_{NAL}^\ast$ \textbf{is closer to the easy task optimal solutions}, as $\mathbf{\Sigma}_i$ has larger eigenvalues for easy tasks and smaller eigenvalues for hard tasks. Conversely, the MAML weights are determined by $\mathbf{\Sigma}_i(\mathbf{I}_d - \alpha \mathbf{\Sigma}_i)^2$, which induces a relatively larger weight on the hard tasks, 
so $\mathbf{w}_{MAML}^\ast$  \textbf{is closer to the hard task optimal solutions}. 

Note that easy tasks can be approximately solved after one step of gradient descent from far away, which is not true for hard tasks.  We therefore expect (I) MAML to perform well on both hard and easy tasks since $\mathbf{w}_{MAML}^\ast$ is closer to the optimal solutions of hard tasks, and (II) NAL to perform well on easy tasks but struggle on hard tasks since $\mathbf{w}_{NAL}^\ast$ is closer to the optimal solutions of easy tasks.
We explicitly compute the excess risks of $\mathbf{w}_{NAL}^\ast$ and $\mathbf{w}_{MAML}^\ast$ as follows.
\begin{proposition} \label{er}
The excess risks for $\mathbf{w}_{NAL}^\ast$ and $\mathbf{w}_{MAML}^\ast$ are:
\begin{align} 
\mathcal{E}_m(\mathbf{w}_{NAL}^\ast) &= \tfrac{1}{2} \mathbb{E}_{i} \Big\| \mathbb{E}_{i'}\big[\mathbf{\Sigma}_{i'}\big]^{-1} \mathbb{E}_{i'}\big[\mathbf{\Sigma}_{i'} (\mathbf{w}_{i'}^\ast\! -\! \mathbf{w}_{i}^\ast)\big]  \Big\|_{\mathbf{Q}_i^{(m)}}^2 \label{erm_loss} \\
\mathcal{E}_m(\mathbf{w}_{MAML}^\ast)
&= \tfrac{1}{2} \mathbb{E}_{i} \Big\| \mathbb{E}_{i'}\big[\mathbf{Q}_{i'}^{(n_2)}\big]^{-1}   \mathbb{E}_{i'}\big[\mathbf{Q}_{i'}^{(n_2)} (\mathbf{w}_{i'}^\ast - \mathbf{w}_{i}^\ast)\big]  \Big\|_{\mathbf{Q}_i^{(m)}}^2\!\! \label{multitask-fin-maml}
\end{align}
where for a vector $\mathbf{v}$ and symmetric matrix $\mathbf{Q}$ of consistent dimension, $\|\mathbf{v}\|_{\mathbf{Q}} \coloneqq \sqrt{\mathbf{v}^\top \mathbf{Q}\mathbf{v}}$.
\end{proposition}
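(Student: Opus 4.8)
The plan is to first derive a closed form for the excess risk $\mathcal{E}_m(\mathbf{w})$ of an \emph{arbitrary} initialization $\mathbf{w}$, and only at the very end substitute the two population solutions from \eqref{maml_pop}; this way the two claimed identities follow from one underlying computation. Because the data is realizable and Gaussian, a direct calculation gives $f_i(\mathbf{w}) = \tfrac12\|\mathbf{w}-\mathbf{w}_i^\ast\|_{\mathbf{\Sigma}_i}^2 + \tfrac{\nu^2}{2}$, so the infimum in \eqref{eq:excess_risk} equals $\nu^2/2$ and $\mathcal{E}_m(\mathbf{w}) = \mathbb{E}_i\mathbb{E}_{(\mathbf{\hat{X}}_i,\mathbf{\hat{y}}_i)}[f_i(\mathbf{w}_i)] - \tfrac{\nu^2}{2}$, where $\mathbf{w}_i = \mathbf{w}-\alpha\nabla\hat{f}_i(\mathbf{w};\mathbf{\hat{X}}_i,\mathbf{\hat{y}}_i)$ is the one-step adapted iterate built from the $m$ test samples. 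Writing $\hat{\mathbf{\Sigma}}_i \coloneqq \tfrac{1}{m}\mathbf{\hat{X}}_i^\top\mathbf{\hat{X}}_i$ and $\mathbf{\hat{y}}_i = \mathbf{\hat{X}}_i\mathbf{w}_i^\ast + \mathbf{\hat{z}}_i$ for the vector of adaptation-sample noises $\mathbf{\hat{z}}_i$, the gradient step unfolds to $\mathbf{w}_i - \mathbf{w}_i^\ast = (\mathbf{I}_d-\alpha\hat{\mathbf{\Sigma}}_i)(\mathbf{w}-\mathbf{w}_i^\ast) + \tfrac{\alpha}{m}\mathbf{\hat{X}}_i^\top\mathbf{\hat{z}}_i$, which is the expression I would plug into $f_i$.

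Next I would take the expectation over the $m$ adaptation samples, i.e.\ over $\mathbf{\hat{X}}_i$ and the independent noise $\mathbf{\hat{z}}_i$. The term linear in $\mathbf{\hat{z}}_i$ vanishes after conditioning on $\mathbf{\hat{X}}_i$ since $\mathbb{E}[\mathbf{\hat{z}}_i]=\mathbf{0}$, leaving a signal quadratic form in $(\mathbf{w}-\mathbf{w}_i^\ast)$ plus a $\mathbf{w}$-independent noise term. The heart of the calculation is evaluating $\mathbb{E}_{\mathbf{\hat{X}}_i}[(\mathbf{I}_d-\alpha\hat{\mathbf{\Sigma}}_i)\mathbf{\Sigma}_i(\mathbf{I}_d-\alpha\hat{\mathbf{\Sigma}}_i)]$: expanding the product produces $\mathbb{E}[\hat{\mathbf{\Sigma}}_i]=\mathbf{\Sigma}_i$ together with the fourth-moment object $\mathbb{E}[\hat{\mathbf{\Sigma}}_i\mathbf{\Sigma}_i\hat{\mathbf{\Sigma}}_i]$. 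I would compute the latter by writing $\hat{\mathbf{\Sigma}}_i\mathbf{\Sigma}_i\hat{\mathbf{\Sigma}}_i = \tfrac{1}{m^2}\sum_{j,k}\mathbf{x}_j\mathbf{x}_j^\top\mathbf{\Sigma}_i\mathbf{x}_k\mathbf{x}_k^\top$, splitting into the $j=k$ and $j\neq k$ cases and applying Isserlis'/Wick's theorem to the diagonal Gaussian fourth moment. The diagonal terms contribute the $\tfrac{1}{m}(\text{tr}(\mathbf{\Sigma}_i^2)\mathbf{\Sigma}_i+\mathbf{\Sigma}_i^3)$ piece and the off-diagonal terms the $\mathbf{\Sigma}_i^3$ piece, so that after collecting everything the signal matrix is exactly $(\mathbf{I}_d-\alpha\mathbf{\Sigma}_i)\mathbf{\Sigma}_i(\mathbf{I}_d-\alpha\mathbf{\Sigma}_i) + \tfrac{\alpha^2}{m}(\text{tr}(\mathbf{\Sigma}_i^2)\mathbf{\Sigma}_i+\mathbf{\Sigma}_i^3) = \mathbf{Q}_i^{(m)}$. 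This Gaussian moment bookkeeping is the main obstacle: getting the two terms of $\mathbf{Q}_i^{(m)}$ to appear with the correct $1/m$ scaling is precisely where the test-sample count enters. The upshot is the clean identity $\mathcal{E}_m(\mathbf{w}) = \tfrac12\mathbb{E}_i\|\mathbf{w}-\mathbf{w}_i^\ast\|_{\mathbf{Q}_i^{(m)}}^2$, where the leftover noise contribution is $\mathbf{w}$-independent and identical for both solutions; I would either fold it into a noiseless reduction or carry it along, as it does not affect the comparison.

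Finally I would substitute the two solutions. For NAL, $\mathbf{w}_{NAL}^\ast - \mathbf{w}_i^\ast = \mathbb{E}_{i'}[\mathbf{\Sigma}_{i'}]^{-1}\mathbb{E}_{i'}[\mathbf{\Sigma}_{i'}\mathbf{w}_{i'}^\ast] - \mathbf{w}_i^\ast$, and since $\mathbf{w}_i^\ast$ is constant in $i'$ the centering identity $\mathbb{E}_{i'}[\mathbf{\Sigma}_{i'}]^{-1}\mathbb{E}_{i'}[\mathbf{\Sigma}_{i'}\mathbf{w}_i^\ast]=\mathbf{w}_i^\ast$ rewrites this as $\mathbb{E}_{i'}[\mathbf{\Sigma}_{i'}]^{-1}\mathbb{E}_{i'}[\mathbf{\Sigma}_{i'}(\mathbf{w}_{i'}^\ast-\mathbf{w}_i^\ast)]$, which gives \eqref{erm_loss}. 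The identical manipulation with $\mathbf{Q}_{i'}^{(n_2)}$ in place of $\mathbf{\Sigma}_{i'}$ yields \eqref{multitask-fin-maml}; here I would be careful to keep the superscripts distinct, since the training solution carries the inner-loop count $n_2$ whereas the outer $\mathbf{Q}_i^{(m)}$ norm comes from the $m$ test-time samples. I do not anticipate difficulty in this closing step beyond the algebra of the centering identity.
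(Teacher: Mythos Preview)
Your proposal is correct and follows essentially the same route as the paper: compute $F_m(\mathbf{w})$ by expanding the one-step update, take the Gaussian fourth moment $\mathbb{E}[\hat{\mathbf{\Sigma}}_i\mathbf{\Sigma}_i\hat{\mathbf{\Sigma}}_i]$ to obtain $\mathbf{Q}_i^{(m)}$ (the paper packages this as a separate lemma for diagonal $\mathbf{\Sigma}_i$, whereas your Isserlis/Wick argument handles general covariances), and then substitute the two population solutions via the centering identity. Your remark about the residual $\mathbf{w}$-independent noise term $\tfrac{\alpha^2\nu^2}{2m}\mathbb{E}_i[\text{tr}(\mathbf{\Sigma}_i^2)]$ is apt; the paper absorbs it into the baseline when subtracting, and since it is common to both solutions it does not affect any downstream comparison.
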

We next formally interpret these excess risks, and develop intuitions (I) and (II), by focusing on two key properties of the task environment: task hardness discrepancy and task geography.

\subsection{Hardness discrepancy}
\label{section:hardness}


We analyze the levels of task hardness that confer a significant advantage for MAML over NAL in this section.
To do so, we compare $\mathbf{w}_{MAML}^\ast$ and $\mathbf{w}_{NAL}^\ast$ in an environment with two tasks of varying hardness. 
We let $n_2\!=\!m$, $\mathbf{\Sigma}_1\!=\!\rho_H \mathbf{I}_d$, and $\mathbf{\Sigma}_2\!=\!\rho_E\mathbf{I}_d$, where $\rho_{H}\!<\!\rho_E$, thus task 1 is the hard task.{\footnote{The effects of task hardness could be removed by scaling the data so that it would have covariance $\alpha^{-1}\mathbf{I}_d$. However, the current setting is useful to build intuition. Further, one can imagine a similar setting in which the first dimension has variance $\alpha^{-1}$ and the rest have variance $\rho_{H}$, in which scaling would not be possible (as it would result in gradient descent not converging in the first coordinate).
}} 
In this setting, the NAL and MAML solutions defined in (\ref{maml_pop}) can be simplified as \begin{align}\mathbf{w}^\ast_{NAL} = \frac{\rho_H}{\rho_H + \rho_E} \mathbf{w}_{1}^{\ast} + \frac{\rho_E}{\rho_H + \rho_E} \mathbf{w}_{2}^{\ast}, \qquad \mathbf{w}_{MAML}^\ast = \frac{a_H}{a_H + a_E} \mathbf{w}_{1}^{\ast} + \frac{a_E}{a_H + a_E} \mathbf{w}_{2}^{\ast},\end{align}
where $a_E \coloneqq (\rho_E(1 - \alpha \rho_E)^2 +\frac{d+1}{m}\alpha^2\rho_E^3)$ and $a_H \coloneqq (\rho_H(1 - 
\alpha\rho_H)^2 +\frac{d+1 }{m}\alpha^2 \rho_H^3)$. 
Note that the natural choice of $\alpha$ is the inverse of the largest task smoothness parameter ($\frac{1}{\rho_E}$ in this case). Setting $\alpha = \frac{1}{\rho_E}$ yields $a_E = \frac{d+1}{m}\rho_E$ and $a_H = \rho_H(1 - 
\frac{\rho_H}{\rho_E})^2 +\frac{(d+1) \rho_H^3 }{m \rho_E^2}$. As a result, we can easily see that for sufficiently large values of $m$, we have $a_H > a_E$. This observation shows that the solution of MAML is closer to the solution of the harder task, i.e., $\mathbf{w}_{1}^{\ast}$. On the other hand, $\mathbf{w}_{NAL}^\ast$ is closer to $\mathbf{w}_{2}^{\ast}$, the solution to the easy task, since $\rho_H < \rho_E$.


Considering these facts, we expect the performance of MAML solution after adaptation to exceed that of NAL. Using Proposition \ref{er}, the excess risks for NAL and MAML in this setting are 
\begin{align} \label{eer}
    \mathcal{E}_m(\mathbf{w}_{NAL}^\ast) = \frac{a_E \rho_H^2+ a_H \rho_E^2}{(\rho_E+ \rho_H)^2}, \quad \mathcal{E}_m(\mathbf{w}_{MAML}^\ast) = \frac{a_E a_H }{a_E+ a_H}.
\end{align}
Recalling that $a_E\approx 0$ for $m\gg d$, we conclude that MAML achieves near-zero excess risk in the case of large $m$. In particular, we require that ${\rho_H}(1- \frac{\rho_H}{\rho_E})^2 \gg \frac{d\rho_E}{m}$, otherwise the $O(d/m)$ terms in $a_E$ and $a_H$ are non-negligible. Meanwhile, for NAL we have $\mathcal{E}_m(\mathbf{w}_{NAL}^\ast) =  \frac{a_H \rho_E^2 }{(\rho_E+ \rho_H)^2}$, which may be significantly larger than zero if $\frac{\rho_H}{\rho_E} \ll 1$, i.e. the harder task is significantly harder than the easy task. Importantly, the error for NAL is dominated by poor performance on the hard task, which MAML avoids by initializing close to the hard task solution.
Thus, these expressions pinpoint the level of hardness discrepancy needed for superior MAML performance: {\em $\frac{\rho_H}{\rho_E}$ must be much smaller than 1, but also larger than 0, such that $\frac{\rho_H}{\rho_E}(1- \frac{\rho_H}{\rho_E})^2 \gg \frac{d}{m}$.} 


 \begin{figure*}[t]
\begin{center}
     \begin{subfigure}[b]{0.263\textwidth}
         \centering
         \includegraphics[width=\textwidth]{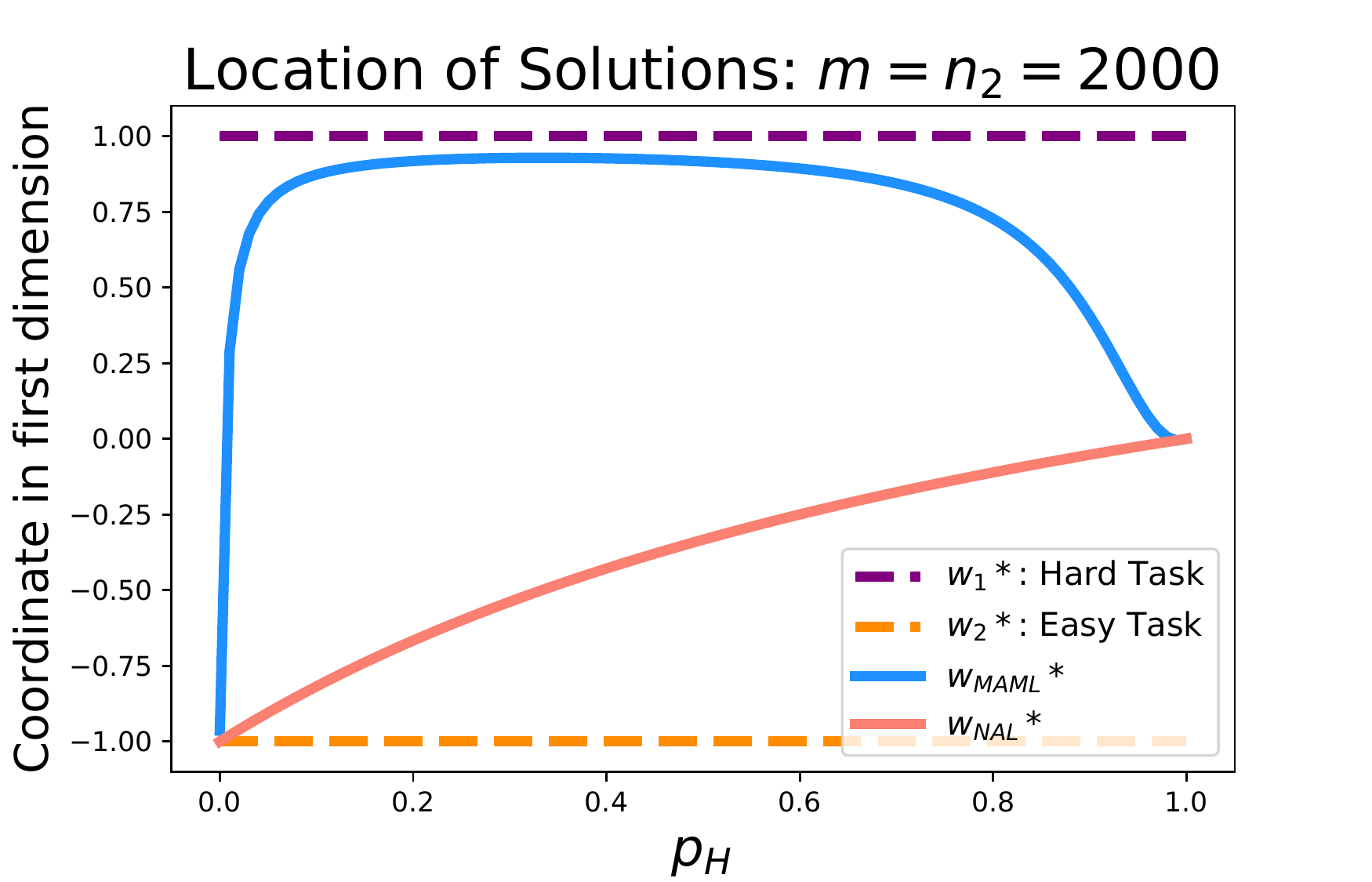}
         \caption{}
         \label{fig:y equals x}
     \end{subfigure}
    \hspace{-4mm}
     \begin{subfigure}[b]{0.23\textwidth}
         \centering
         \includegraphics[width=\textwidth]{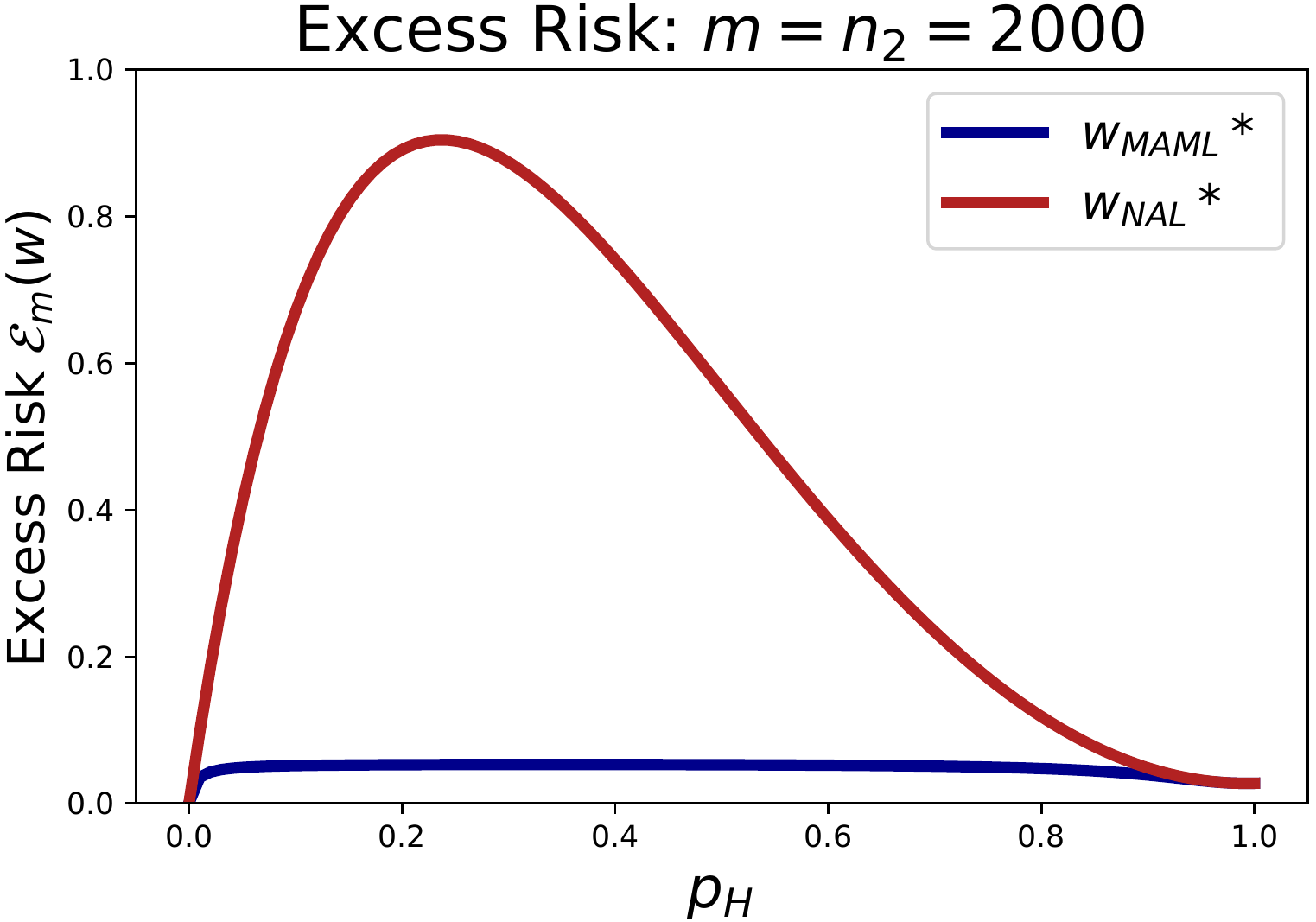}
         \caption{}
         \label{fig:three sin x}
     \end{subfigure}
    \hspace{1mm}
     \begin{subfigure}[b]{0.263\textwidth}
         \centering
         \includegraphics[width=\textwidth]{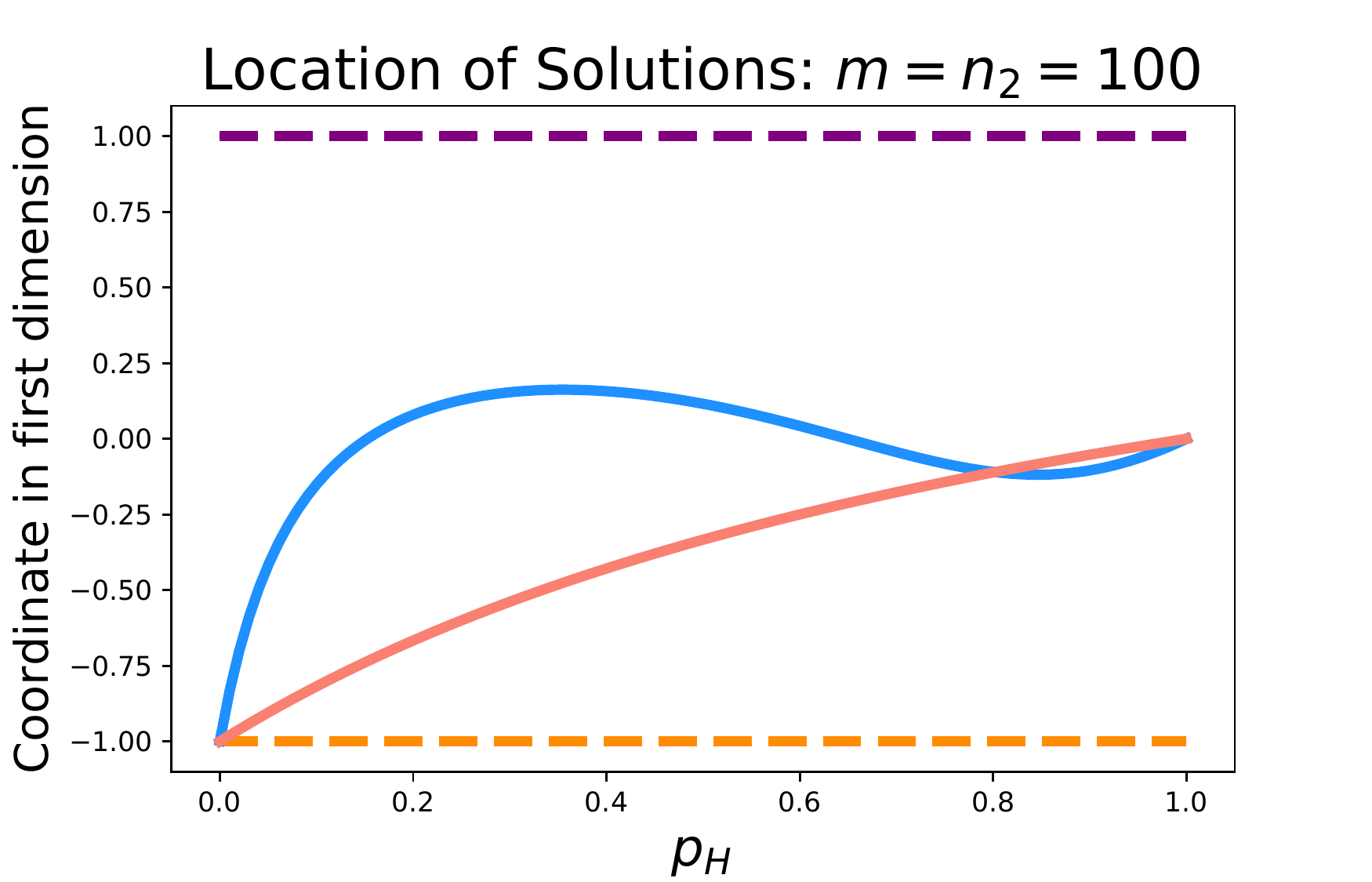}
         \caption{}
         \label{fig:five over x}
     \end{subfigure}
    \hspace{-4mm}
     \begin{subfigure}[b]{0.23\textwidth}
         \centering
         \includegraphics[width=\textwidth]{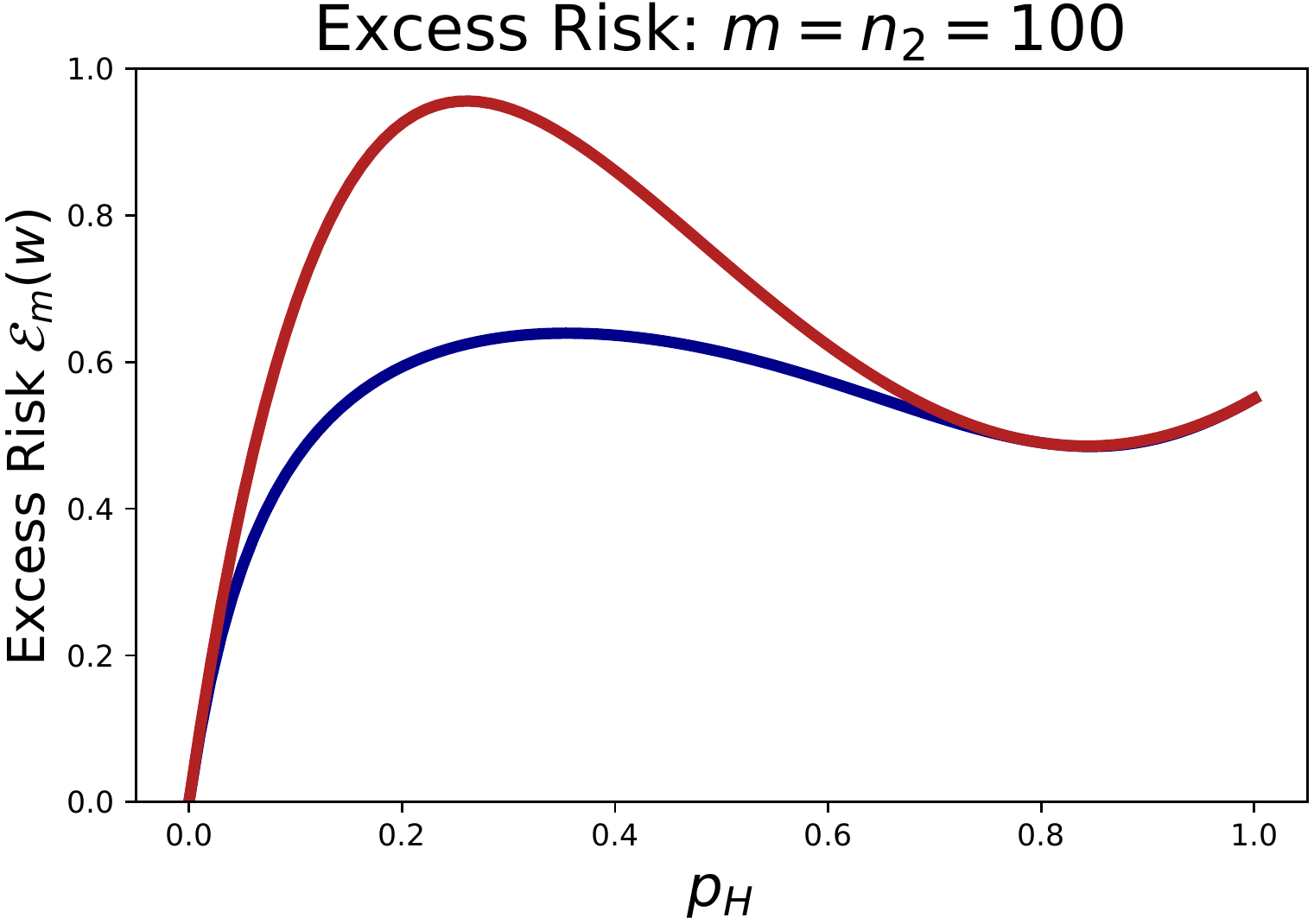}
         \caption{}
         \label{fig:five over x}
     \end{subfigure}
     \vspace{-2mm}
        \caption{First coordinates of $\mathbf{w}_{MAML}^\ast$, $\mathbf{w}_{NAL}^\ast$ and their excess risks for various $\rho_H$ for large $m$ (in subplots (a)-(b) for $m\!=\!2000$) and small $m$  (in subplots (c)-(d) for $m\!=\!100$).}
        \label{fig:three graphs}
 \vspace{-2mm}
\label{fig:11}
\end{center}
\end{figure*}

Figure \ref{fig:11} visualizes these intuitions in the case that $\mathbf{w}_{1}^{\ast} = \mathbf{1}_d$, $\mathbf{w}_{2}^{\ast} = -\mathbf{1}_d$, $d=10$ and $\nu^2= 0.01$. Subfigures (a) and (c) show the locations of the first coordinates of $\mathbf{w}_{NAL}^\ast$ and $\mathbf{w}_{MAML}^\ast$ for varying $\rho_H$, and $m=2000$ and $m=100$, respectively. Subfigures (b) and (d) show the corresponding excess risks. We observe that unlike NAL, MAML initializes closer to the optimal solution of the harder task as long as $\frac{\rho_H}{\rho_E}$ is not close to zero or one, which results in significantly smaller excess risk for MAML compared to NAL in such cases, especially for large $m$.

Figure \ref{fig:11} further shows that the MAML and NAL excess risks go to zero with $\rho_H$. This is also shown in (\ref{eer}) and the definition of $a_H$, and points to the fact that {\em too} much hardness discrepancy causes no gain for MAML. The reason for this is that $\rho_H\!\rightarrow\!0$ corresponds to the hard task data going to zero (its mean), in which case any linear predictor has negligible excess risk. Consequently, both NAL and MAML ignore the hard task and initialize at the easy task to achieve near-zero excess risk here.

\noindent \begin{remark} The condition $\frac{\rho_H}{\rho_E}(1- \frac{\rho_H}{\rho_E})^2 \gg \frac{d}{m}$ requires that $m \gg d$. However, this condition arises due to the simplification $\text{tr}(\mathbf{\Sigma}_i)=O(d)$, where $\text{tr}(\mathbf{\Sigma}_i)$ can be thought of as the effective problem dimension \citep{kalan2020minimax}. In realistic settings, the effective dimension may be $o(d)$, which would reduce the complexity of $m$ accordingly.
\end{remark}

\subsection{Task geography}
\label{section:linear_experiments}

The second important property of the task environment according to Proposition 1 is the location, i.e. geography, of the task optimal solutions. In this section, we study how task geography affects the  MAML and NAL excess risks by considering a task environment with many tasks. 
In particular, the task environment $\mu$ is a mixture over distributions of hard and easy tasks, with mixture weights $0.5$. The optimal solutions $\mathbf{w}_i^*\in  \mathbb{R}^d$ for hard tasks are sampled according to $\mathbf{w}_i^* \sim \mathcal{N}(R \mathbf{1}_d,r_H \mathbf{I}_d)$ and for easy tasks are sampled as $\mathbf{w}_i^* \sim \mathcal{N}( \mathbf{0}_d,r_E \mathbf{I}_d)$. Therefore $R$ is the dimension-normalized distance between the centers of the hard and easy tasks' optimal solutions, and $r_H$ and $r_E$ capture the spread of the hard and easy tasks' optimal solutions, respectively.
The data covariance is $\mathbf{\Sigma}_i=\rho_H \mathbf{I}_d$ for the hard tasks and $\mathbf{\Sigma}_i=\rho_E \mathbf{I}_d$ for the easy tasks, recalling that $\rho_H$ and $\rho_E$ parameterize hardness, with smaller $\rho_H$ meaning  harder task. In this setting the following corollary follows from Proposition \ref{er}.
\begin{corollary} \label{prop2}
In the setting described above, the excess risks of  $\mathbf{w}_{NAL}^\ast$ and $\mathbf{w}_{MAML}^\ast$ are:
     \begin{align}
    \mathcal{E}_m(\mathbf{w}_{NAL}^\ast)
    &=\tfrac{d}{4(\rho_E+\rho_H)^2} \Big[
    (a_E \rho_E^2 + 2 a_E \rho_E \rho_H)r_E + a_E \rho_H^2 (r_E + R^2) \nonumber \\
    &\quad\quad\quad\quad\quad\quad + (a_H \rho_H^2 + 2 a_H \rho_E \rho_H)r_H
      + a_H \rho_E^2 (r_H + R^2) \Big]\label{erm_excess_eh} \\
   \mathcal{E}_m(\mathbf{w}_{MAML}^\ast) &=\tfrac{d}{4(a_E+a_H)^2} \Big[
    (a_E^3 + 2 a_E^2 a_H)r_E  + (a_H^3 + 2 a_E a_H^2)r_H \nonumber \\
    &\quad\quad\quad\quad\quad\quad + a_E a_H^2 (r_E + R^2) +  a_E^2 a_H (r_H + R^2) \Big] \label{maml_excess_eh}
    \vspace{-2mm}
     \end{align}
     \vspace{-2mm}
where $a_E \coloneqq \rho_E(1 - \alpha \rho_E)^2 +\frac{d+1}{m}\alpha^2\rho_E^3$ and $a_H \coloneqq \rho_H(1 - 
\alpha\rho_H)^2 +\frac{d+1 }{m}\alpha^2 \rho_H^3$. 
\end{corollary}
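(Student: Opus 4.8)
The plan is to exploit the fact that in this environment every covariance $\mathbf{\Sigma}_i$ is a scalar multiple of the identity, which forces all the matrix-valued quantities in Proposition \ref{er} to be scalar multiples of $\mathbf{I}_d$; the corollary then reduces to an elementary bias--variance computation for the Gaussian task optima. First I would evaluate $\mathbf{Q}_i^{(s)}$ on a task with $\mathbf{\Sigma}_i=\rho\mathbf{I}_d$. Since $\mathrm{tr}(\mathbf{\Sigma}_i^2)=d\rho^2$ and all factors commute, the definition collapses to $\mathbf{Q}_i^{(s)}=\big(\rho(1-\alpha\rho)^2+\tfrac{(d+1)\alpha^2\rho^3}{s}\big)\mathbf{I}_d$. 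Taking $s=m$ (and, as in the two-task setting, $n_2=m$) this equals $a_H\mathbf{I}_d$ on hard tasks and $a_E\mathbf{I}_d$ on easy tasks, with $a_H,a_E$ exactly as defined in the statement.

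Second, I would simplify the quantity inside each norm in Proposition \ref{er}. The outer index $i$ is independent of the inner averaging index $i'$, so $\mathbf{w}_i^\ast$ is constant with respect to $\mathbb{E}_{i'}$ and hence $\mathbb{E}_{i'}[\mathbf{\Sigma}_{i'}]^{-1}\mathbb{E}_{i'}[\mathbf{\Sigma}_{i'}\mathbf{w}_i^\ast]=\mathbf{w}_i^\ast$. Combined with (\ref{maml_pop}), the NAL inner expression telescopes to $\mathbf{w}_{NAL}^\ast-\mathbf{w}_i^\ast$, and the identical argument gives $\mathbf{w}_{MAML}^\ast-\mathbf{w}_i^\ast$ for MAML, so both excess risks reduce to $\tfrac12\mathbb{E}_i\|\mathbf{w}^\ast-\mathbf{w}_i^\ast\|_{\mathbf{Q}_i^{(m)}}^2$ for the respective population solution. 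I would then read off these solutions from (\ref{maml_pop}): the balanced mixture and the task means $R\mathbf{1}_d$ (hard), $\mathbf{0}_d$ (easy) give $\mathbb{E}_{i}[\mathbf{\Sigma}_i]=\tfrac{\rho_H+\rho_E}{2}\mathbf{I}_d$ and $\mathbb{E}_i[\mathbf{\Sigma}_i\mathbf{w}_i^\ast]=\tfrac{\rho_H R}{2}\mathbf{1}_d$, whence $\mathbf{w}_{NAL}^\ast=\tfrac{\rho_H R}{\rho_H+\rho_E}\mathbf{1}_d$, and the same computation with $a$'s in place of $\rho$'s yields $\mathbf{w}_{MAML}^\ast=\tfrac{a_H R}{a_H+a_E}\mathbf{1}_d$.

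For the final step, since each $\mathbf{Q}_i^{(m)}$ is a scalar multiple of the identity I would write $\|\mathbf{w}^\ast-\mathbf{w}_i^\ast\|_{\mathbf{Q}_i^{(m)}}^2=a_{(i)}\|\mathbf{w}^\ast-\mathbf{w}_i^\ast\|_2^2$ with $a_{(i)}\in\{a_H,a_E\}$ and split the outer expectation over task type. For each type the Gaussian identity $\mathbb{E}\|\mathbf{w}^\ast-\mathbf{w}_i^\ast\|_2^2=\|\mathbf{w}^\ast-\mathbb{E}[\mathbf{w}_i^\ast]\|_2^2+d\,\mathrm{var}$ gives, for NAL, a hard-task contribution $\tfrac{d\rho_E^2R^2}{(\rho_H+\rho_E)^2}+dr_H$ and an easy-task contribution $\tfrac{d\rho_H^2R^2}{(\rho_H+\rho_E)^2}+dr_E$, and analogously for MAML with $\rho$'s replaced by $a$'s. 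Averaging with weights $\tfrac12 a_H$ and $\tfrac12 a_E$ and factoring out $\tfrac{d}{4(\rho_H+\rho_E)^2}$ (respectively $\tfrac{d}{4(a_H+a_E)^2}$) produces the claimed expressions; the exact grouped form in (\ref{erm_excess_eh})--(\ref{maml_excess_eh}) is recovered by expanding the $r_H(\rho_H+\rho_E)^2$ and $r_E(\rho_H+\rho_E)^2$ terms and their MAML analogues.

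I do not expect genuine analytic difficulty: once the matrices collapse to scalars the argument is entirely elementary. The only steps needing care are the telescoping identity $\mathbb{E}_{i'}[\mathbf{\Sigma}_{i'}]^{-1}\mathbb{E}_{i'}[\mathbf{\Sigma}_{i'}\mathbf{w}_i^\ast]=\mathbf{w}_i^\ast$, which hinges on independence of the outer task from the inner averaging index, and the final bookkeeping that rearranges the weighted squared distances into the precise grouped form stated in the corollary.
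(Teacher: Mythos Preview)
Your proposal is correct and is arguably cleaner than the route the paper takes. The paper does not first collapse the inner expression to $\mathbf{w}^\ast-\mathbf{w}_i^\ast$ nor compute $\mathbf{w}_{NAL}^\ast$ and $\mathbf{w}_{MAML}^\ast$ explicitly; instead it works coordinate-wise directly from the Proposition~\ref{er} formula, expands the square $\big(\mathbb{E}_{i'}[q_{i',l}(w_{i',l}^\ast-w_{i,l}^\ast)]\big)^2$, and then applies the law of total expectation twice (once for the outer $i$, once for the inner $i'$) to split into hard/hard, hard/easy, easy/hard, easy/easy cross terms, evaluating each using the Gaussian means and variances. Your approach replaces that case enumeration with a single bias--variance identity after recognising that the inner expression is exactly $\mathbf{w}^\ast-\mathbf{w}_i^\ast$; this is shorter and makes it immediately transparent why the final answer depends only on the squared distances $\tfrac{\rho_E^2R^2}{(\rho_H+\rho_E)^2}$, $\tfrac{\rho_H^2R^2}{(\rho_H+\rho_E)^2}$ (and their $a$-analogues) and the spreads $r_H,r_E$. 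The paper's brute-force expansion has the minor advantage that it does not require separately justifying the telescoping identity, but otherwise your route is preferable.
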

Each excess risk is a normalized weighted sum of the quantities $r_E, r_H$ and $R^2$. So, the comparison between MAML and NAL depends on the relative weights each algorithm induces on these task environment properties. If $\frac{\rho_H}{\rho_E}(1-\frac{\rho_H}{\rho_E})^2 \gg \frac{d}{m}$, then $a_H\gg a_E$ and the dominant weight in $\mathcal{E}_m(\mathbf{w}_{MAML}^\ast)$ is on the $r_H$ term, 
while the dominant weights in $\mathcal{E}_m(\mathbf{w}_{NAL}^\ast)$ are on the $r_H + R^2$ and $r_H$ terms. This observation leads us to obtain the following corollary of Proposition \ref{er}.
\begin{corollary} \label{prop3}
In the above setting, with $\alpha = 1/\rho_E$ and $\frac{\rho_H}{\rho_E}(1-\frac{\rho_H}{\rho_E})^2 \gg \frac{d}{m}$,
the relative excess risk for NAL compared to MAML satisfies
 \vspace{-2mm}
\begin{align}
    \frac{\mathcal{E}_m(\mathbf{w}_{NAL}^\ast)}{\mathcal{E}_m(\mathbf{w}_{MAML}^\ast)} \approx 1+\frac{ R^2}{ r_H}.
 \end{align}  
 \vspace{-1mm}
\end{corollary}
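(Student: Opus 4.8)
The plan is to start directly from the closed forms in Corollary~\ref{prop2} and carry out a two-scale asymptotic simplification driven entirely by the hypothesis $\frac{\rho_H}{\rho_E}(1-\frac{\rho_H}{\rho_E})^2 \gg \frac{d}{m}$. The first step is to translate this hypothesis into a single statement about the weights $a_E$ and $a_H$. With $\alpha = 1/\rho_E$ the leading term of $a_E$ vanishes, so $a_E = \frac{d+1}{m}\rho_E = O(d/m)$, while the $O(d/m)$ correction in $a_H$ is lower order and $a_H \approx \rho_H(1-\rho_H/\rho_E)^2$. Dividing gives $\frac{a_H}{a_E} \approx \frac{m}{d}\frac{\rho_H}{\rho_E}(1-\rho_H/\rho_E)^2 \gg 1$, so the hypothesis is precisely the assertion $a_H \gg a_E$. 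This inequality is the only quantitative fact that the rest of the argument needs.

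Next I would simplify $\mathcal{E}_m(\mathbf{w}_{MAML}^\ast)$ from (\ref{maml_excess_eh}). In that bracket every summand other than $(a_H^3 + 2a_E a_H^2)r_H$ carries at least one extra factor of $a_E$ relative to $a_H$; in particular the two $R^2$-bearing terms are $O(a_E a_H^2)$ and $O(a_E^2 a_H)$. Using $(a_E+a_H)^2 \approx a_H^2$, the MAML excess risk collapses to $\frac{d}{4}a_H r_H$, with the $R^2$ contribution suppressed by the factor $a_E/a_H \ll 1$. This is the formal statement that MAML is insensitive to the hard--easy separation $R$, matching intuition (I).

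I would then handle $\mathcal{E}_m(\mathbf{w}_{NAL}^\ast)$ from (\ref{erm_excess_eh}): every $r_E$ term, and indeed every term prefixed by $a_E$, is negligible against the $a_H$ terms, so the expression reduces to $\frac{d a_H}{4(\rho_E+\rho_H)^2}\big[(\rho_H^2 + 2\rho_E\rho_H + \rho_E^2)r_H + \rho_E^2 R^2\big]$. Recognizing $\rho_H^2 + 2\rho_E\rho_H + \rho_E^2 = (\rho_E+\rho_H)^2$, this factors cleanly into $\frac{d a_H}{4}\big[r_H + \frac{\rho_E^2}{(\rho_E+\rho_H)^2}R^2\big]$; unlike MAML, here the $R^2$ term keeps its full weight $a_H\rho_E^2$. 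Taking the ratio, the common prefactor $\frac{d}{4}a_H$ cancels outright (so the explicit value of $a_H$ is never needed), leaving $1 + \frac{\rho_E^2}{(\rho_E+\rho_H)^2}\frac{R^2}{r_H}$, and invoking the hardness-discrepancy regime $\rho_H/\rho_E \ll 1$ sends $\frac{\rho_E^2}{(\rho_E+\rho_H)^2}\to 1$ to yield the claimed $1 + R^2/r_H$.

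I expect the only genuine obstacle to be the bookkeeping of which monomials in $a_E, a_H$ dominate, and in particular justifying rigorously that the $R^2$-carrying terms in the MAML risk are truly negligible: this needs the implicit side condition that $R^2/r_H$ is not so large as to overwhelm the gap $a_E/a_H$. I would also flag explicitly that reducing $\frac{\rho_E^2}{(\rho_E+\rho_H)^2}$ to $1$ uses $\rho_H \ll \rho_E$ beyond the stated inequality, consistent with the ``largest gain'' regime described in the introduction.
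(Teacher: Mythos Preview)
Your proposal is correct and follows essentially the same route as the paper: translate the hypothesis into $a_H \gg a_E$, drop all $a_E$-prefixed terms in both excess risks from Corollary~\ref{prop2}, reduce the MAML risk to $\tfrac{d}{4}a_H r_H$ and the NAL risk to $\tfrac{d}{4}a_H\big[r_H + \tfrac{\rho_E^2}{(\rho_E+\rho_H)^2}R^2\big]$, and divide. The only cosmetic difference is the last step: you invoke $\rho_H/\rho_E \ll 1$ to send $\tfrac{\rho_E^2}{(\rho_E+\rho_H)^2}\to 1$, whereas the paper simply observes $\tfrac{1}{4}\le \tfrac{\rho_E^2}{(\rho_E+\rho_H)^2}\le 1$ and absorbs the constant into the $\approx$.
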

 \vspace{-3mm}
Corollary \ref{prop3} shows  that MAML achieves large gain over NAL when: {\em (i) the hard tasks' solutions are closely packed} ($r_H$ is small) and  {\em (ii) the hard tasks' solutions are far from the center of the easy tasks' solutions} ($R$ is large). Condition (i) allows MAML to achieve a small excess risk by initializing in the center of the hard task optimal solutions, while condition (ii) causes NAL to struggle on the hard tasks since it initializes close to the easy tasks' solutions. These conditions are reflected by the fact that the MAML excess risk weighs $r_H$ (the spread of the hard tasks) most heavily, whereas the NAL excess risk puts the most weight on $R^2$ (distance between hard and easy task solutions) as well as $r_H$.

Note that the above discussion holds under the condition that $a_H \gg a_E$. In order for $a_H \gg a_E$, we must have $\frac{\rho_H}{\rho_E}(1-\frac{\rho_H}{\rho_E})^2 \gg \frac{d}{m}$, i.e. $m\gg d$ and $0 < \frac{\rho_H}{\rho_E} \ll 1$, as we observed in our discussion on hardness discrepancy. Now, we see that even with appropriate hardness discrepancy, the hard tasks must be both closely packed and far from the center of the easy tasks in order for MAML to achieve significant gain over NAL. This conclusion adds greater nuance to prior results \citep{balcan2019provable,jose2021information}, in which the authors argued that gradient-based meta-learning (e.g. MAML) is only effective when all of the tasks' optimal solutions are close to each other (in our case, all of $r_E$, $r_H$ and $R$ are small). 
Crucially, our analysis shows that NAL would also perform well in this scenario, and neither $r_E$ nor $R$ need to be small for MAML to achieve small excess risk.



To further explain these insights, we return to Figure \ref{fig:eh_geo_gen} in which easy and hard task optimal solutions are each sampled from $10$-dimensional Gaussian distributions of the form described above, and 100 random task optimal solutions are shown, along with the population-optimal MAML and NAL solutions. In subfigures (b), (c) and (d), we set $\rho_E = 0.9$, $\rho_H = 0.1$, and $m = 500$. Among these plots, the largest gain for MAML is in the case that the hard tasks' optimal solutions are closely packed (small $r_H$), and their centers are far from each other (large $R$), 
{\em demonstrating the the primary dependence of relative performance on $R^2/r_H$.} 

We plot more thorough results for this setting in Figure \ref{fig:complex}. Here, we vary $R$ and compare the performance of $\mathbf{w}_{NAL}^\ast$ and $\mathbf{w}_{MAML}^\ast$ in settings with relatively large $r_H$ and small $r_E$, specifically choosing $r_H$ and $r_E$ such that $\frac{R^2}{r_H}\!=\!0.5$ and $\frac{R^2}{r_E}\!=\!10$ in (a)-(b), and $\frac{R^2}{r_H}\!=\!10$ and $\frac{R^2}{r_E}\!=\!0.5$ in (c)-(d). 
Here we also plot the empirical solutions $\mathbf{w}_{NAL}$ and $\mathbf{w}_{MAML}$. 
{\em Again, MAML significantly outperforms NAL when $R^2/r_H$ is large (subfigures (c)-(d)), but not otherwise}.


 \begin{figure*}[t]
\begin{center}\begin{subfigure}[b]{0.24\textwidth}
         \centering
         \includegraphics[width=\textwidth]{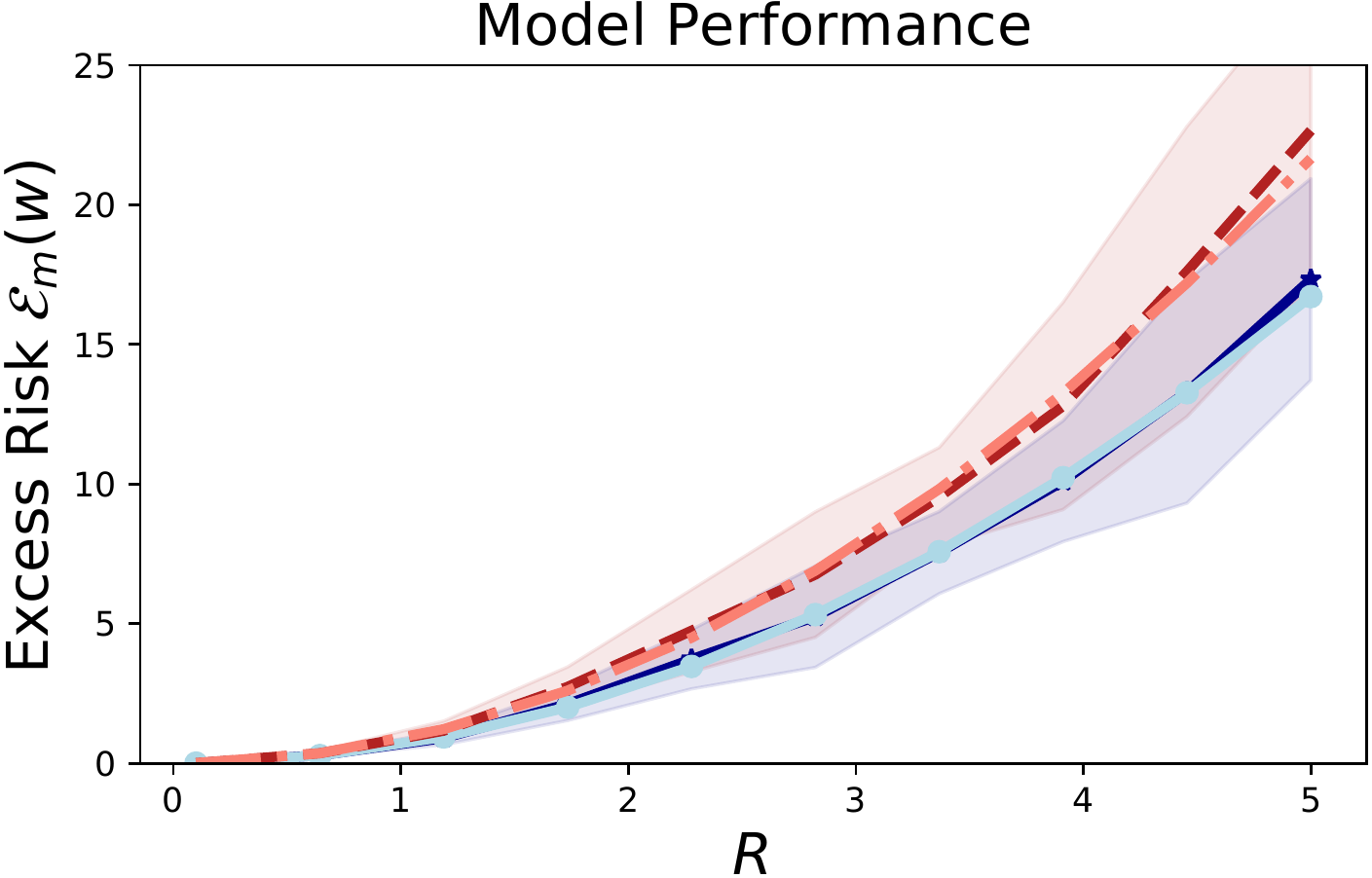}
         \caption{$\frac{R^2}{r_H}\!=\!0.5,\; \frac{R^2}{r_E}\!=\!10$} 
         \label{fig:five over x}
     \end{subfigure}
     \begin{subfigure}[b]{0.24\textwidth}
         \centering
         \includegraphics[width=\textwidth]{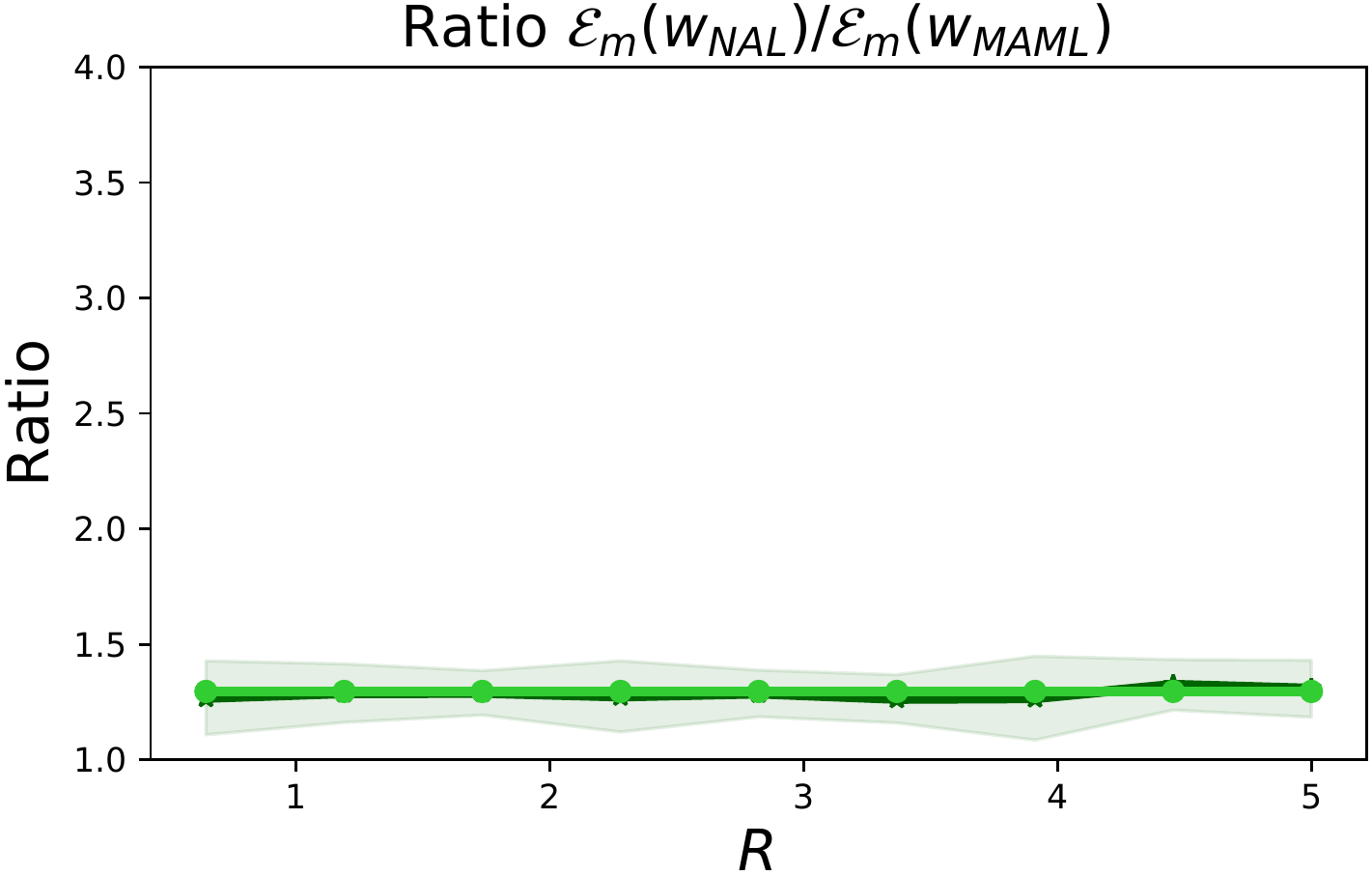}
         \caption{$\frac{R^2}{r_H}\!=\!0.5,\; \frac{R^2}{r_E}\!=\!10$} 
         \label{fig:five over x}
     \end{subfigure}
     \hfill
     \begin{subfigure}[b]{0.24\textwidth}
         \centering
         \includegraphics[width=\textwidth]{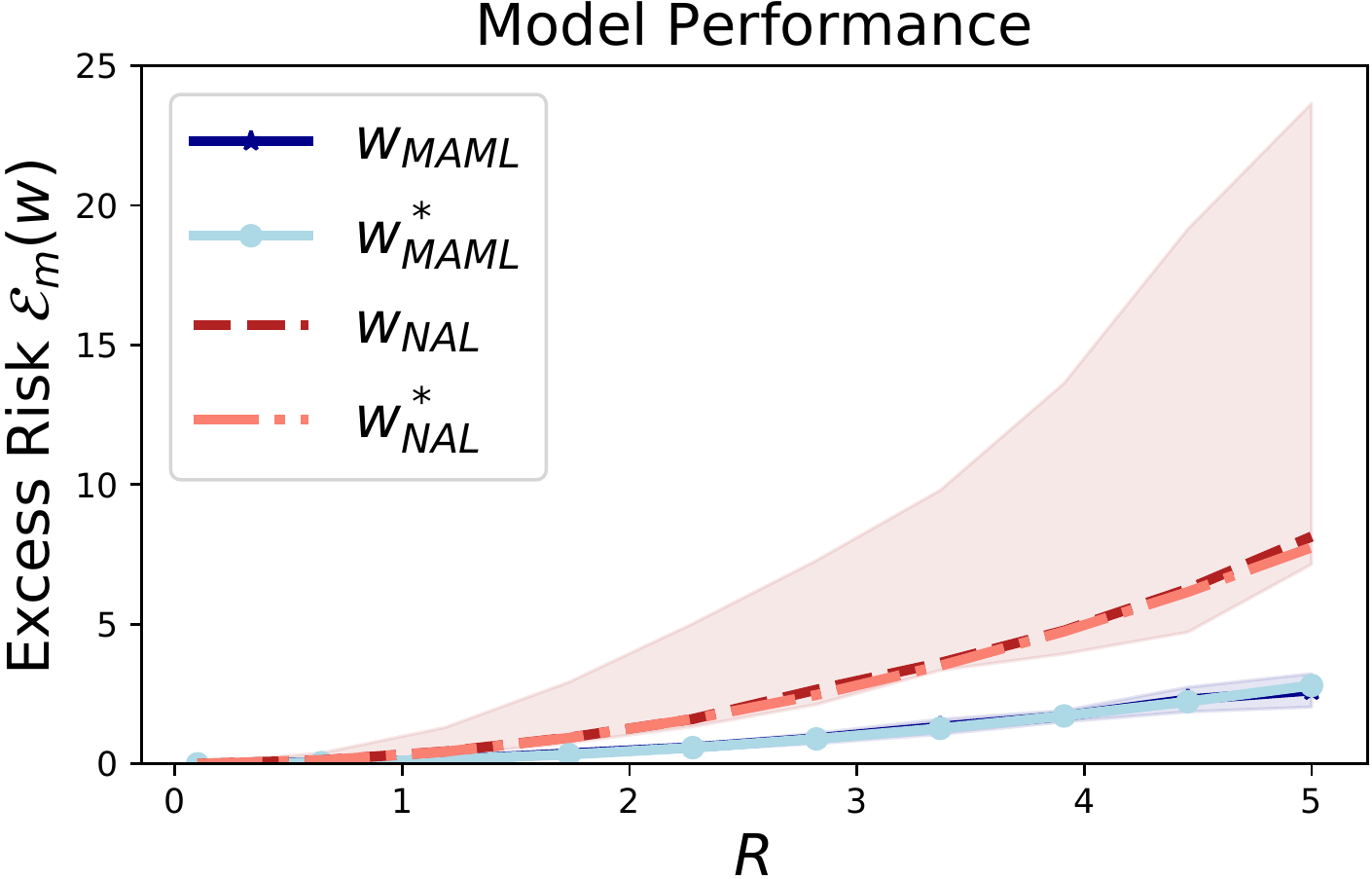}
         \caption{$\frac{R^2}{r_H}\!=\!10,\; \frac{R^2}{r_E}\!=\!0.5$} 
         \label{fig:y equals x}
     \end{subfigure}
     \begin{subfigure}[b]{0.24\textwidth}
         \centering
         \includegraphics[width=\textwidth]{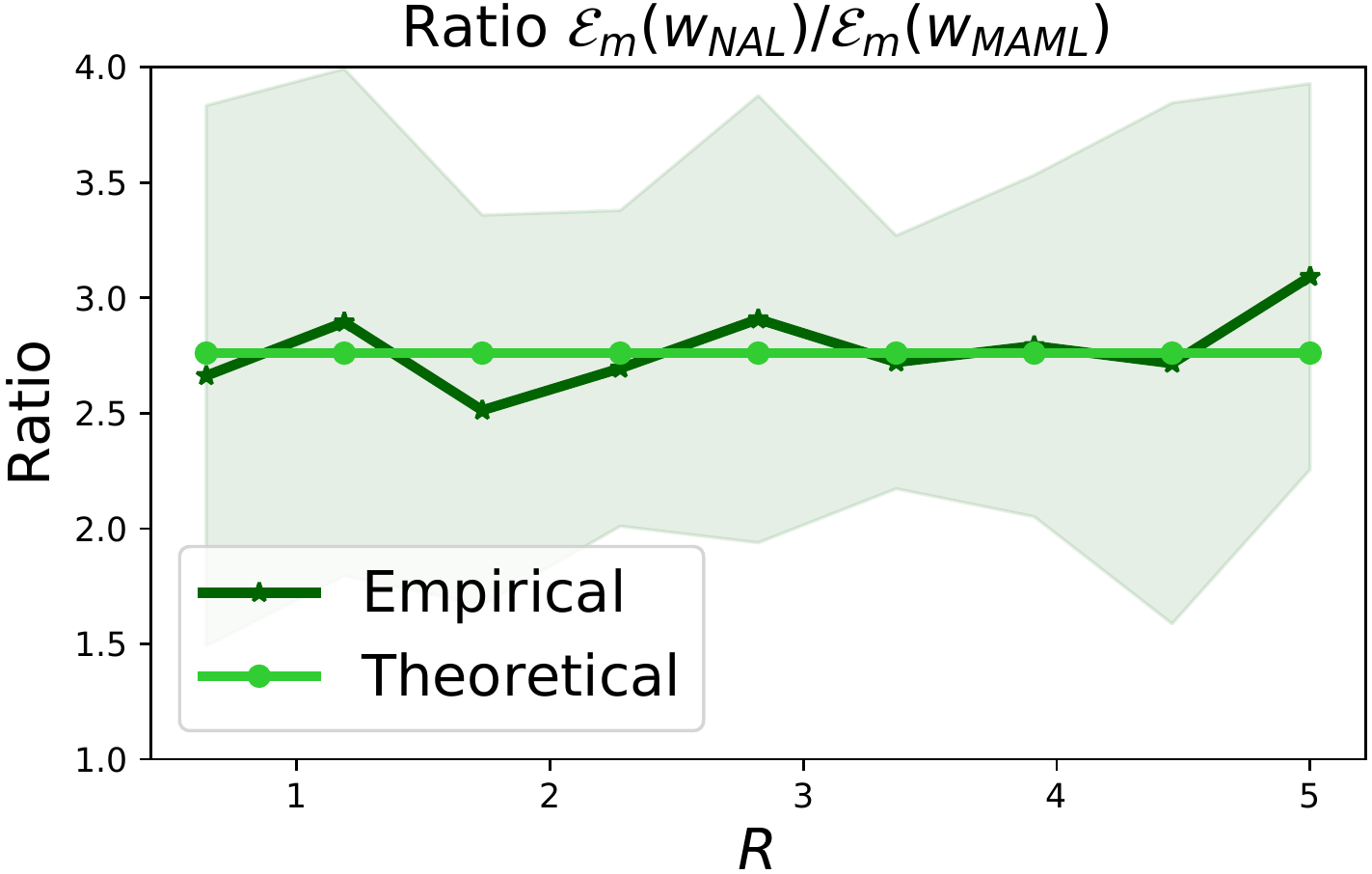}
        \caption{$\frac{R^2}{r_H}\!=\!10,\; \frac{R^2}{r_E}\!=\!0.5$}
         \label{fig:three sin x}
     \end{subfigure}
        \caption{Theoretical and empirical excess risks for NAL and MAML and ratios of the NAL to MAML excess risk in the setting described in Section \ref{section:linear_experiments}. 
        }
\label{fig:complex}
\end{center}
\vskip -0.1in
\end{figure*}
\vspace{-0mm}
\section{Two-layer Neural Network}
\vspace{0mm}
\label{section:nonlinear}


In this section, we consider a  non-linear setting in which each task is a regression problem with a two-layer neural network with a fixed second layer. The $k$-th neuron in the network maps $\mathbb{R}^d \rightarrow \mathbb{R}$ via $\sigma(\langle \mathbf{w}^k, \mathbf{x}\rangle )$, where $\sigma : \mathbb{R} \rightarrow \mathbb{R}$ may be the ReLU, Softplus, Sigmoid, or tanh activation function, and $\mathbf{w}^k \!\in\!\mathbb{R}^d$ is the parameter vector. The network contains $M$ neurons for a total of $D =Md$ parameters, which are contained in the matrix $\mathbf{W} \coloneqq [\mathbf{w}^1, \dots, \mathbf{w}^M] \in \mathbb{R}^{d \times M}$. The predicted label for the data point $\mathbf{x}$ is the sum of the neuron outputs, namely $h_\mathbf{W}(\mathbf{x}) \coloneqq \sum_{k=1}^M \sigma(\langle\mathbf{w}^k, \mathbf{x}\rangle)$. The loss function is again the squared loss, i.e. $f_i(\mathbf{W}) = \frac{1}{2}\mathbb{E}_{(\mathbf{x}_i, {y}_i)\sim \mu_i}[(\sum_{k=1}^M\sigma(\langle\mathbf{w}^k, \mathbf{x}_i\rangle) - y_i)^2]$. Ground-truth models $\mathbf{W}_{i}^{\ast}$ generate the data for task $i$. We sample $(\mathbf{x}_i, y_i) \sim \mu_i$ by first sampling $\mathbf{x}_i \sim \mathcal{N}(\mathbf{0}_d, \mathbf{I}_d)$ then computing $y_i = \sum_{k=1}^M \sigma(\langle (\mathbf{w}_{i}^{\ast})^k, \mathbf{x}_i\rangle)$. The following result demonstrates an important property of the MAML objective function in the two-task version of this setting.

\begin{theorem}\label{lem:sc}
Suppose that in the setting described above, the task environment is the uniform distribution over two tasks.
Define $s_{i,k}$ as the $k$-th singular value of $\mathbf{W}_{i}^{\ast}$, $\kappa_i \coloneqq \|\mathbf{W}_{i}^{\ast}\|_2/s_{i,M}$, and $\lambda_i \coloneqq \frac{\Pi_{k=1}^M s_{i,k}}{s_{i,M}^M}$ for $i\!\in\!\{1,2\}$.
Further, define the regions
$\mathcal{S}_i \coloneqq \{\mathbf{W} : \|\mathbf{W} - \mathbf{W}_{i}^{\ast}\|_2 \leq c_1 /(s_{i,1}^c \lambda_i \kappa_i^2 M^2)\}$
and the parameters $\beta_i \coloneqq \frac{c_2}{\lambda_i \kappa_i^2}$ and $L_i \coloneqq c_3 M s_{i,1}^{2c}$ for $i \in \{1,2\}$ and absolute constants $c,c_1,c_2,$ and $c_3$. Let $\beta_1\!<\!\beta_2$, $L_1\!<\!L_2$ and $\alpha \leq 1/L_2$.
Then any stationary point $\mathbf{W}_{MAML}^\ast\!\in \mathcal{S}_1 \cap \mathcal{S}_2$ of the MAML population objective (\ref{exp_test}) with full inner gradient step ($m=\infty$) satisfies:
\begin{equation}
    \| \nabla f_1(\mathbf{W}_{MAML}^\ast) \|_2 \leq {\frac{1 - 2\alpha\beta_2}{(1- \alpha L_1)^2}} \|\nabla f_2(\mathbf{W}_{MAML}^\ast)\|_2 + O(\alpha^2).
\end{equation}
\vspace{-2mm}
\end{theorem}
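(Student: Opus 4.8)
The plan is to exploit the explicit first-order stationarity condition of the MAML population objective and convert it into the stated gradient-ratio bound using local strong convexity and smoothness of each task loss. With $m=\infty$ the inner stochastic gradient collapses to the exact gradient, so (\ref{exp_test}) becomes $F(\mathbf{W}) = \tfrac12\big(f_1(\phi_1(\mathbf{W})) + f_2(\phi_2(\mathbf{W}))\big)$, where $\phi_i(\mathbf{W}) \coloneqq \mathbf{W} - \alpha\nabla f_i(\mathbf{W})$. First I would differentiate through the inner step by the chain rule, using that the Jacobian of $\phi_i$ is the symmetric matrix $\mathbf{I} - \alpha\nabla^2 f_i(\mathbf{W})$, to obtain
\begin{equation*}
\nabla F(\mathbf{W}) = \tfrac12\sum_{i=1}^2 \big(\mathbf{I} - \alpha\nabla^2 f_i(\mathbf{W})\big)\,\nabla f_i\big(\phi_i(\mathbf{W})\big).
\end{equation*}
Writing $\mathbf{W}^\ast \coloneqq \mathbf{W}_{MAML}^\ast$ and setting $\nabla F(\mathbf{W}^\ast)=0$ yields the balance $\big(\mathbf{I} - \alpha\nabla^2 f_1(\mathbf{W}^\ast)\big)\nabla f_1(\phi_1(\mathbf{W}^\ast)) = -\big(\mathbf{I} - \alpha\nabla^2 f_2(\mathbf{W}^\ast)\big)\nabla f_2(\phi_2(\mathbf{W}^\ast))$, so the two sides have equal Euclidean norm. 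Everything is carried out in the vectorized parameter space $\mathbb{R}^{Md}$, where $\|\cdot\|_2$ on gradients coincides with the Frobenius norm.

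The second ingredient is that on each region $\mathcal{S}_i$ the loss $f_i$ is $\beta_i$-strongly convex and $L_i$-smooth, i.e. $\beta_i\mathbf{I} \preceq \nabla^2 f_i(\mathbf{W}) \preceq L_i\mathbf{I}$ for $\mathbf{W}\in\mathcal{S}_i$. Assuming this, I would rewrite the inner-step gradient via the fundamental theorem of calculus as $\nabla f_i(\phi_i(\mathbf{W}^\ast)) = (\mathbf{I} - \alpha\bar{\mathbf{H}}_i)\nabla f_i(\mathbf{W}^\ast)$, with averaged Hessian $\bar{\mathbf{H}}_i \coloneqq \int_0^1 \nabla^2 f_i(\mathbf{W}^\ast - t\alpha\nabla f_i(\mathbf{W}^\ast))\,dt$. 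Because $\mathbf{W}^\ast\in\mathcal{S}_i$, which is a ball centered at the task minimizer $\mathbf{W}_i^\ast$, and one gradient-descent step with $\alpha\le 1/L_i$ contracts toward that minimizer, the endpoint $\phi_i(\mathbf{W}^\ast)$ also lies in $\mathcal{S}_i$; convexity of the ball then places the entire integration segment in $\mathcal{S}_i$, so $\beta_i\mathbf{I}\preceq\bar{\mathbf{H}}_i\preceq L_i\mathbf{I}$ as well.

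With these spectral bounds the estimate is elementary. Each task-$1$ factor $\mathbf{I}-\alpha\nabla^2 f_1(\mathbf{W}^\ast)$ and $\mathbf{I}-\alpha\bar{\mathbf{H}}_1$ has smallest singular value at least $1-\alpha L_1>0$ (using $\alpha\le 1/L_2<1/L_1$), so the left side of the balance has norm at least $(1-\alpha L_1)^2\|\nabla f_1(\mathbf{W}^\ast)\|_2$ via $\|\mathbf{A}\mathbf{B}\mathbf{v}\|\ge\sigma_{\min}(\mathbf{A})\sigma_{\min}(\mathbf{B})\|\mathbf{v}\|$. Symmetrically, each task-$2$ factor has spectral norm at most $1-\alpha\beta_2$, so the right side has norm at most $(1-\alpha\beta_2)^2\|\nabla f_2(\mathbf{W}^\ast)\|_2$. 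Equating and rearranging gives $\|\nabla f_1(\mathbf{W}^\ast)\|_2 \le \frac{(1-\alpha\beta_2)^2}{(1-\alpha L_1)^2}\|\nabla f_2(\mathbf{W}^\ast)\|_2$, and expanding $(1-\alpha\beta_2)^2 = (1-2\alpha\beta_2) + \alpha^2\beta_2^2$ moves the trailing term into $O(\alpha^2)$, since $1-\alpha L_1$ is bounded away from $0$ and $\|\nabla f_2\|_2$ is bounded on the compact region $\mathcal{S}_2$. This is exactly the claimed inequality.

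The main obstacle is the structural input to the second paragraph: establishing the local curvature constants $\beta_i = c_2/(\lambda_i\kappa_i^2)$ and $L_i = c_3 M s_{i,1}^{2c}$ together with the radius of $\mathcal{S}_i$. This requires a Hessian analysis of the two-layer squared-loss population objective near the ground truth $\mathbf{W}_i^\ast$ — bounding its extreme eigenvalues in terms of the singular values $s_{i,k}$, the condition number $\kappa_i$, and $\lambda_i$ — which is activation-dependent and is the genuinely technical part; the operator-level manipulation above is then routine. I would either derive these curvature bounds directly for each listed activation or invoke the corresponding shallow-network landscape results, taking care that the radius is large enough to contain a single gradient step so that the segment argument in the second paragraph remains valid.
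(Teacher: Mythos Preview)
Your proposal is correct and mirrors the paper's proof: write the stationarity condition of $F$ as a balance between the two preconditioned post-adaptation gradients, bound the preconditioners $\mathbf{I}-\alpha\nabla^2 f_i$ spectrally via local strong convexity/smoothness on $\mathcal{S}_i$, and defer the curvature constants $\beta_i,L_i$ and the radii of $\mathcal{S}_i$ to the shallow-network Hessian analysis of \cite{zhong2017recovery} (their Lemmas D.3 and D.10, combined via Weyl's inequality). The only technical difference is in how $\nabla f_i(\phi_i(\mathbf{W}^\ast))$ is related to $\nabla f_i(\mathbf{W}^\ast)$: the paper uses a first-order Taylor expansion at $\mathbf{W}^\ast$ (this is precisely where the $O(\alpha^2)$ term comes from, and it only requires the Hessian bound at the single point $\mathbf{W}^\ast$), whereas you use the exact averaged-Hessian integral together with a contraction argument to keep the segment inside $\mathcal{S}_i$ --- a slightly cleaner route that yields the sharper factor $(1-\alpha\beta_2)^2$ before you deliberately weaken it to $1-2\alpha\beta_2+O(\alpha^2)$.
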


\textbf{Interpretation: MAML prioritizes hard tasks.} In the proof in Appendix \ref{app:nn},  we use results from  \cite{zhong2017recovery} to show that $f_i$ is  {$\beta_i$-strongly convex and $L_i$-smooth in the region $\mathcal{S}_i$.
As a result,} task 1 is the harder task since $\beta_1$ and $\beta_2$ control the rate with which gradient descent converges to the ground-truth solutions, with smaller $\beta_i$ implying slower convergence, and $\beta_1 < \beta_2$. Thus, Theorem
\ref{lem:sc}
shows that, any stationary point of the MAML objective in $\mathcal{S}_1 \cap \mathcal{S}_2$ has smaller gradient norm on the hard task than on the easy task as long as there is sufficient hardness discrepancy, specifically $\beta_2 > L_1$. {This
condition means that the curvature of the loss function for the easy task is more steep than the curvature of the hard task in all directions around their ground-truth solutions.}
The smaller gradient norm of MAML stationary points on the harder task suggests that {\em  MAML solutions prioritize hard-task performance.} 
In contrast, we can easily see that any stationary point $\mathbf{w}_{NAL}$ of the NAL population loss must satisfy the condition $\|\nabla f_1 (\mathbf{w}_{NAL})\|_2 = \|\nabla f_2 (\mathbf{w}_{NAL})\|_2$, meaning that {\em NAL has no such preference for the hard task.}

Figure \ref{fig:softplus}
demonstrates the importance of task hardness in comparing NAL and MAML in this setting. Here, for ease of visualization we consider the case in which $M=1$ and $d=2$, i.e. the learning model consists of one neuron (with Softplus activation) mapping from $\mathbb{R}^2 \rightarrow \mathbb{R}$. Each subfigure plots the NAL or MAML population loss landscape for different task environments with $m\!=\!250$, as well as the ground-truth neuron for each task. 
In light of prior work showing that the number of gradient steps required to learn a single neuron diminishes with the variance of the data distribution (Theorem 3.2 in \cite{yehudai2020learning}), we again control task hardness via the data variance. In all plots, $\mathbf{\Sigma}_i = 0.5 \mathbf{I}_2$ for hard tasks and $\mathbf{\Sigma}_i =  \mathbf{I}_2$ for easy tasks. Note that the MAML loss (evaluated after one step of adaptation) is the evaluation metric we ultimately care about. 

We observe that when all tasks are equally hard, the MAML and NAL solutions are identical (subfigures (a)-(b)), whereas when one task is hard, MAML initializes closer to the hard task and achieves significantly better post-adaptation performance than the NAL solution, which is closer to the centroid of the easy tasks (c)-(d). This supports our intuition that MAML achieves significant gain over NAL in task environments in which it can leverage improved performance on hard tasks.

 \begin{figure*}[t]
\begin{center}
    \begin{subfigure}[b]{0.2488\textwidth}
         \centering
        \includegraphics[width=\textwidth]{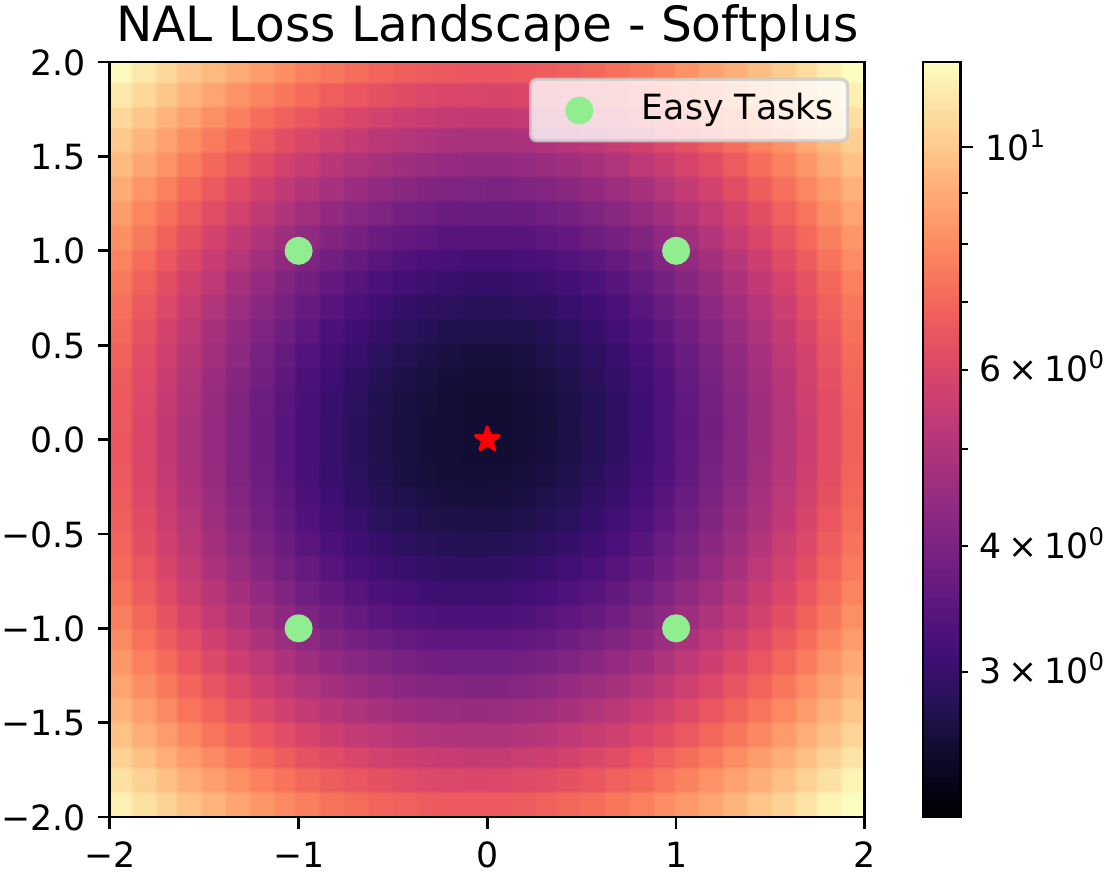}
         \caption{}
         \label{fig:y equals x}
     \end{subfigure}
     \begin{subfigure}[b]{0.2434\textwidth}
         \centering
        \includegraphics[width=\textwidth]{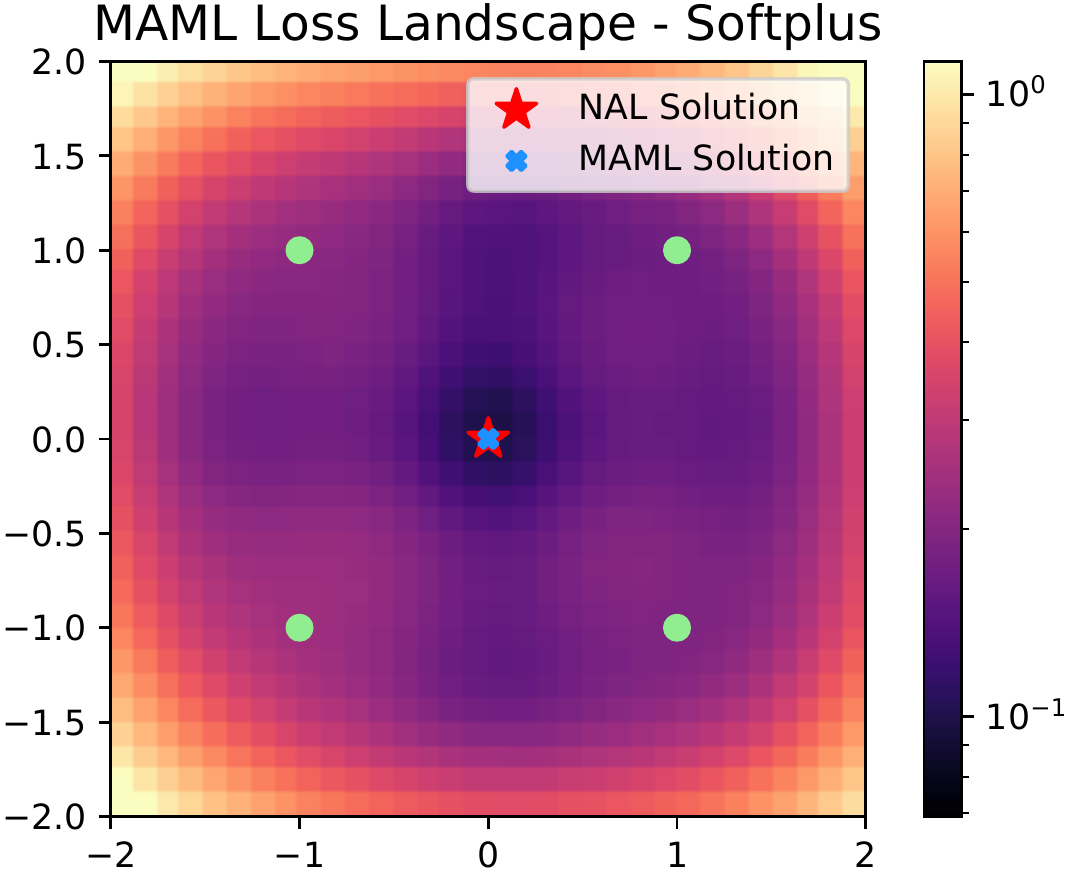}
         \caption{}
         \label{fig:three sin x}
     \end{subfigure}
     \hfill
    \begin{subfigure}[b]{0.2488\textwidth}
         \centering
        \includegraphics[width=\textwidth]{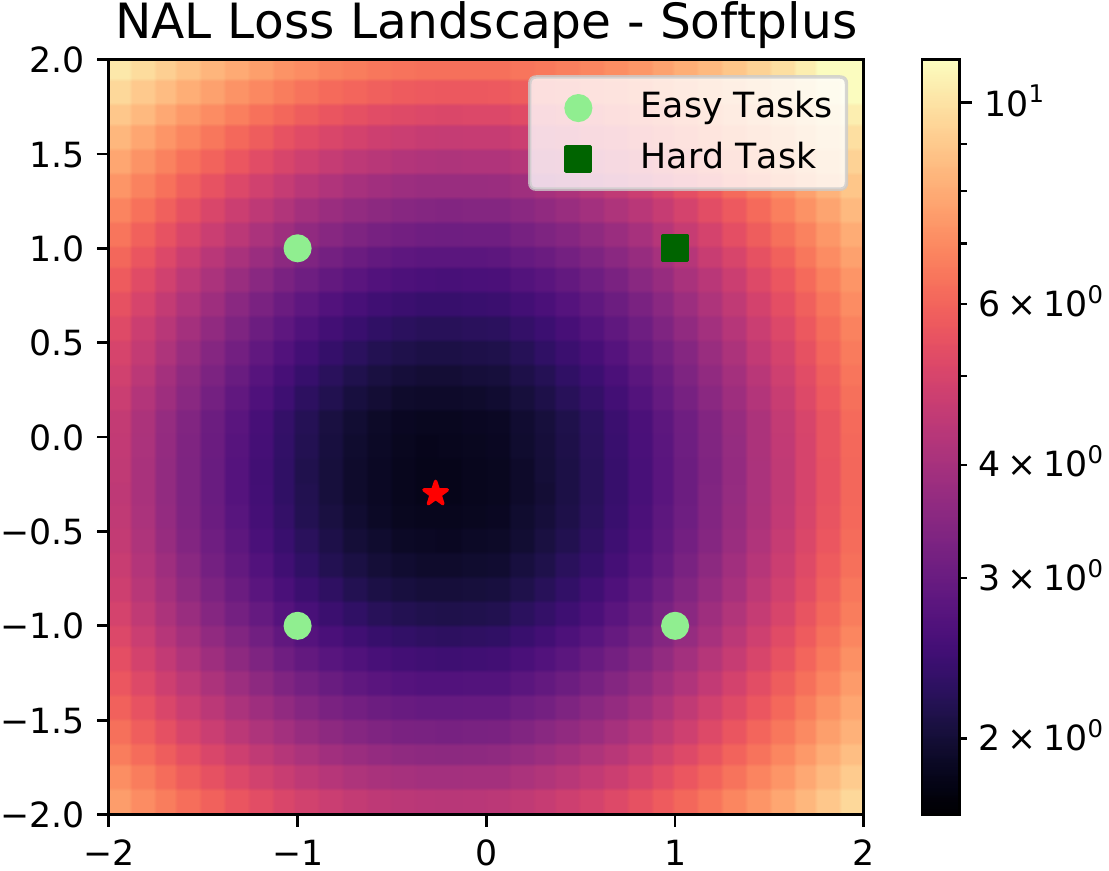}
         \caption{}
         \label{fig:five over x}
     \end{subfigure}
     \begin{subfigure}[b]{0.2369\textwidth}
         \centering
        \includegraphics[width=\textwidth]{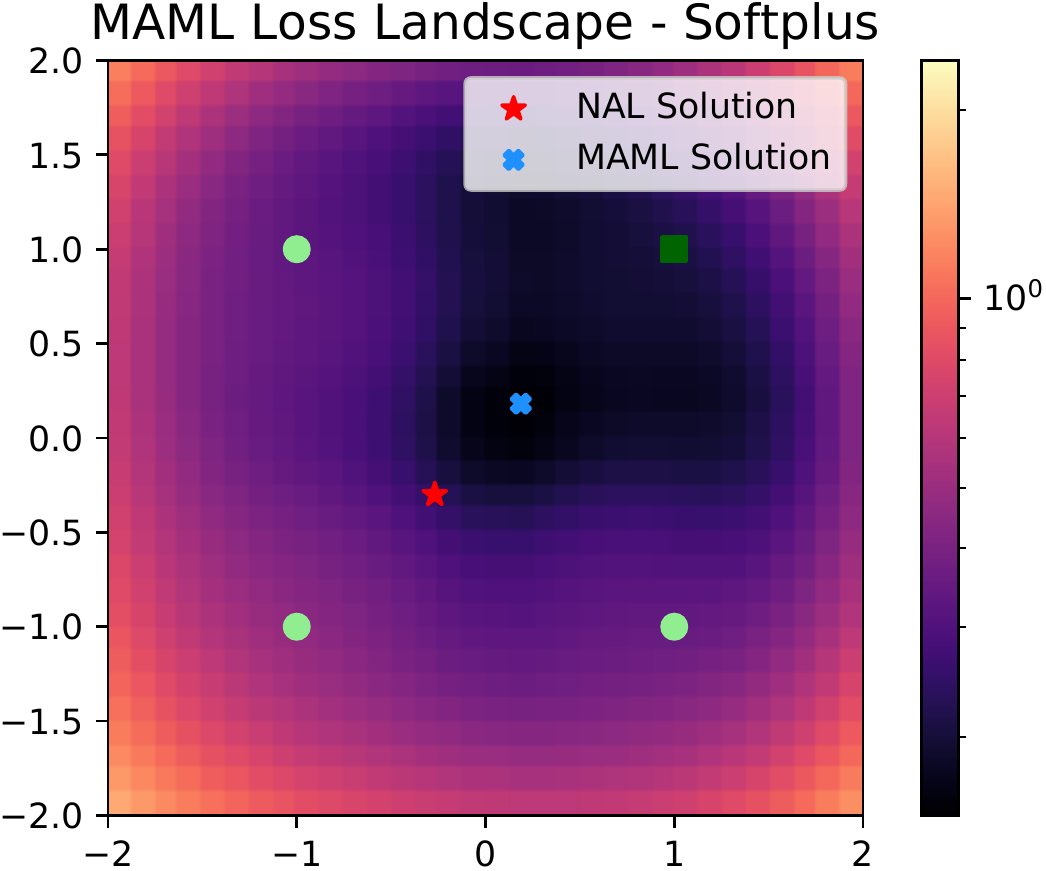}
         \caption{}
         \label{fig:five over x}
     \end{subfigure}
     \vspace{-1mm}
        \caption{
        Loss landscapes and ground-truth neurons for NAL (a,c) and MAML (b,d) for two distinct, four-task environments and Softplus activation.
        }
\label{fig:softplus}
\end{center}
 \vskip -0.1in
\end{figure*}

\vspace{-2mm}
\section{Experiments} \label{section:experiments}
\vspace{-2mm}
We next experimentally study whether our observations generalize to problems closer to those seen in practice. 
We consider image classification on the Omniglot \citep{lake2019omniglot} and FS-CIFAR100 \citep{oreshkin2018tadam} datasets. Following convention, tasks are $N$-way, $K$-shot classification problems, i.e., classification problems among $N$ classes where $K$ labeled images from each class are available for adapting the model.
For both the Omniglot and FS-CIFAR100 experiments, we use the five-layer CNN used by \cite{finn2017model} and \cite{vinyals2016matching}. NAL is trained equivalently to MAML with no inner loop and $n\!=\!n_1\!+\!n_2$. 
Further details and error bounds are in Appendix \ref{app:experiments}.

\textbf{Omniglot.}
Omniglot contains images of 1623 handwritten characters from 50 different alphabets. Characters from the same alphabet typically share similar features, so are harder to distinguish compared to characters from differing alphabets.
We thus define easy tasks as classification problems among characters from {\em distinct} alphabets, and hard tasks as classification problems among characters from the {\em same} alphabet, consistent with prior definitions of {\em semantic hardness} \citep{zhou2020expert}. Here we use $N\!=\!5$ and $K\!=\!1$.
For NAL, we include results for $K=10$ (NAL-10) in addition to $K=1$ (NAL-1). 
We split the 50 alphabets into four disjoint sets: easy train (25 alphabets), easy test (15), hard train (5), and hard test (5). During training, tasks are drawn by first choosing `easy' or `hard' with equal probability. If `easy' is chosen, 5 characters from 5 distinct alphabets among the easy train alphabets are selected. If `hard' is chosen, a hard alphabet is selected from the hard train alphabets, then $N\!=\!5$ characters are drawn from that alphabet. After training, we evaluate the models on new tasks drawn analogously from the test (unseen) alphabets as well as the train alphabets. 

\begin{table}[t]
\begin{minipage}{.5\linewidth}
 \caption{Omniglot accuracies.} 
 \vspace{-4mm}
 \label{table:1}
\begin{center}
\begin{small}
\begin{tabular}{llllll}
\toprule
   \multicolumn{2}{c}{{Setting}} & \multicolumn{2}{c}{{Train Tasks}}  &  \multicolumn{2}{c}{{Test Tasks}}          \\
\cmidrule(r){1-2}
\cmidrule(r){3-4}
    \cmidrule(r){5-6}  
$r_H$ &{Alg.}  &{Easy} &{Hard} &{Easy} &{Hard} \\
\hline 
Large &MAML         & $\mathbf{99.2}$ & ${96.0}$ & ${98.0}$ & ${81.2}$  \\
Large &NAL-1             & $69.4$ & $41.5$ & $57.8 $ & $45.2$  \\
Large & NAL-10            & $70.0$ & $45.3$ & $67.2$ & $47.9$ \\
\hline 
\textbf{Small} &MAML         & $\mathbf{99.2}$ & $\mathbf{99.1}$ & $\mathbf{98.1}$ & $\mathbf{95.4}$ \\
Small &NAL-1             &$69.2$  & $46.0$ & $55.8$ & $45.8$ \\
Small &NAL-10             &$70.2$  & $44.0$ & $67.8$ & $48.9$  \\
\bottomrule
\end{tabular}
\end{small}
\end{center}
\end{minipage}
\begin{minipage}{0.5\linewidth}
 \caption{FS-CIFAR100 accuracies.} 
 \vspace{-4mm}
 \label{table:fs-cifar}
\begin{center}
\begin{small}
\begin{tabular}{llllll}
\toprule
   \multicolumn{2}{c}{{Setting}} & \multicolumn{2}{c}{{Train Tasks}}  &  \multicolumn{2}{c}{{Test Tasks}}          \\
\cmidrule(r){1-2}
\cmidrule(r){3-4}
    \cmidrule(r){5-6}  
$p$ &{Alg.}  &{Easy} &{Hard} &{Easy} &{Hard} \\
\hline
    \multirow{2}{*}{0.99} & NAL     &  78.2 & 9.9  &  74.8  & 9.5  \\
    & MAML     & 92.9 & 50.1 & 84.6     & 21.7   \\ 
    \hline
    \multirow{2}{*}{0.5} & NAL & 81.9   & 9.5 & 76.5 &  8.6    \\
    & MAML     & 93.6 & 41.1 & 84.3 & 17.9   \\ 
    \hline
    \multirow{2}{*}{0.01} & NAL &  82.4 & 7.6 &   76.9  & 8.2  \\
    & MAML     & 94.0 & 15.7  &   85.0   & 11.8    \\ 
    \bottomrule
\end{tabular}
\end{small}
\end{center}
\end{minipage}
\end{table}

  



Table \ref{table:1} gives the average errors  after completing training for two experiments: the first (Large) when the algorithms use all 5 hard train and test alphabets, and the second (Small) when the algorithms use only one hard train alphabet (Sanskrit) and one hard test (Bengali). The terms `Large' and `Small' describe the hypothesized size of $r_H$ in each experiment: the optimal solutions of hard tasks drawn from ten (train and test) different alphabets are presumably more dispersed than hard tasks drawn from only two (train and test) similar alphabets. By our previous analysis, we expect MAML to achieve more gain over NAL in the Small $r_H$ setting. Table \ref{table:1} supports this intuition, as MAML's performance improves significantly with more closely-packed hard tasks (Small), while NAL's does not change substantially. Hence, {\em MAML's relative gain over NAL increases with smaller $r_H$}.

\textbf{FS-CIFAR100.} FS-CIFAR100 has 100 total classes that are split into 80 training classes and 20 testing classes. We further split the 600 images in each of the training classes into 450 training images and 150 testing images in order to measure test accuracy on the training classes in Table \ref{table:fs-cifar}. Here we use $N$ and $K$ as proxies for hardness, with larger $N$ and smaller $K$ being more hard as the agent must distinguish more classes with fewer training samples/class. Specifically, easy tasks have $(N,K) = (2,10)$, and the hard tasks have $(N,K) = (20,1)$.
During training, hard tasks are sampled with probability$~p$ and easy tasks with probability $1\!-\!p$. Observe that the largest performance gains for MAML in Table \ref{table:fs-cifar} are on the hard tasks, consistent with our conclusion that MAML outperforms NAL primarily by achieving relatively high accuracy on the hard tasks.
However, the improvement by MAML on the hard tasks disappears when the hard tasks are scarce $(p=0.01)$, supporting the idea of oversampling hard tasks during training in order to improve MAML performance \citep{zhou2020expert,sun2020meta}.

\section{Discussion}
Our work highlights the importance of hard tasks to the success of MAML. We show that MAML outperforms standard training (NAL) in both linear and nonlinear settings by finding a solution that outperforms NAL on the more-difficult-to-optimize (hard) tasks, as long as they are sufficiently similar. 
Our results provide theoretical justification for the empirical benefits of focusing on the hard tasks while training meta-learning methods \citep{zhou2020expert,sun2020meta} and give intuition to practitioners on when they can save computational resources by using standard NAL for meta-learning instead of the more expensive MAML. Still, the notions of task hardness and similarity we use in our experiments cannot be applied to all multi-task learning problems. We leave the further development of methods to measure task hardness and similarity to future work.







\subsubsection*{Acknowledgements}
This research is supported in part by NSF Grants 2127697, 2019844, 2107037, and 2112471, ARO Grant W911NF2110226, ONR Grant N00014-19-1-2566, the Machine Learning Lab (MLL) at UT Austin, and the Wireless Networking and Communications Group (WNCG) Industrial Affiliates Program.




\appendix

\section{Proofs for Section \ref{section:linear}}
\subsection{Proof of Proposition \ref{er}}


For all $i$, let $\mathbf{y}_i^{in} = \mathbf{X}_i^{in} \mathbf{w}_{i,\ast} + \mathbf{z}_{i}^{in}$ and $\mathbf{y}_i^{out} = \mathbf{X}_i^{out} \mathbf{w}_{i,\ast} + \mathbf{z}_{i}^{out}$. In other words, $\mathbf{z}_i^{in}\in\mathbb{R}^{n_2}$ is the vector containing the additive noise for the inner samples and $\mathbf{z}_i^{out}\in\mathbb{R}^{n_1}$ is the vector containing the additive noise for the outer samples.

From (\ref{erm_soln}) and (\ref{maml_soln}), we have that
\begin{align}
    \mathbf{w}_{NAL}^\ast &= \mathbb{E}_{i}[\mathbf{\Sigma}_i]^{-1} \mathbb{E}_{i}[\mathbf{\Sigma}_i \mathbf{w}_i^\ast]  \label{erm_pop_app} \\
    \mathbf{w}_{MAML}^\ast &= \mathbb{E}_{i}[\mathbf{Q}_i^{(n_2)}]^{-1} \mathbb{E}_{i}[\mathbf{Q}_i^{(n_2)} \mathbf{w}_i^\ast], \label{maml_pop_app}
\end{align}


Next, we compute a closed-form expression for $F_m(\mathbf{w})$ (defined in (\ref{exp_test})). For all $i$, let $\mathbf{\hat{y}}_i = \mathbf{\hat{X}}_i \mathbf{w}_{i,\ast} + \mathbf{\hat{z}}_{i}$ and ${y}_i = \mathbf{x}_i^\top \mathbf{w}_{i,\ast} + {z}_{i}$. In other words, $\mathbf{z}_i^{in}\in\mathbb{R}^{m}$ is the vector containing the additive noise for the inner samples and ${z}_i\in\mathbb{R}$ is the scalar additive noise containing the outer (evaluation) sample. Then we have
\begin{align*}
    F(\mathbf{w}) &= \frac{1}{2} \mathbb{E}_{i} \mathbb{E}_{(\mathbf{x}_i,y_i)} \mathbb{E}_{(\mathbf{\hat{X}}_i, \mathbf{\hat{y}}_i)}\left[\left(\left\langle \mathbf{x}_i, \mathbf{w} -  \tfrac{\alpha}{m}\mathbf{\hat{X}}_i^\top(\mathbf{\hat{X}}_i \mathbf{w} - \mathbf{\hat{y}}_i)\right\rangle - y_i \right)^2\right] \\
    &= \frac{1}{2} \mathbb{E}_{i} \mathbb{E}_{(\mathbf{x}_i, y_i)} \mathbb{E}_{(\mathbf{\hat{X}}_i, \mathbf{\hat{y}}_i)}\left[ \left(\left\langle \mathbf{x}_i,  \mathbf{P}_i(\mathbf{w} - \mathbf{w}_i^\ast) \right\rangle + (z_{i} + \dfrac{\alpha}{m} \mathbf{x}_{i}^\top \mathbf{\hat{X}}_{i}^\top \mathbf{\hat{z}}_{i})  \right)^2\right] \\
     &= \frac{1}{2} \mathbb{E}_{i} \mathbb{E}_{(\mathbf{x}_i, y_i)} \mathbb{E}_{(\mathbf{\hat{X}}_i, \mathbf{\hat{y}}_i)}\left[ \left\langle \mathbf{x}_i,  \mathbf{P}_i(\mathbf{w} - \mathbf{w}_i^\ast) \right\rangle^2 + \left({z}_{i} + \dfrac{\alpha}{m} \mathbf{x}_{i}^\top \mathbf{\hat{X}}_{i}^\top \mathbf{\hat{z}}_{i}  \right)^2\right] \\
     &= \frac{1}{2} \mathbb{E}_{i} \mathbb{E}_{(\mathbf{x}_i, {y}_i)} \mathbb{E}_{(\mathbf{\hat{X}}_i, \mathbf{\hat{y}}_i)}\left[ \left\langle \mathbf{x}_i,  \mathbf{P}_i(\mathbf{w} - \mathbf{w}_i^\ast) \right\rangle^2 + {\sigma^2}+ \dfrac{\alpha^2 }{m^2} \text{tr} (\mathbf{x}_{i}^\top \mathbf{\hat{X}}_{i}^\top \mathbf{\hat{z}}_{i} \mathbf{\hat{z}}_{i}^\top \mathbf{\hat{X}}_{i} x_{i})\right] \\
     &= \frac{1}{2} \mathbb{E}_{i}\left[ \mathbb{E}_{\mathbf{\hat{X}}_i}\left\| \mathbf{P}_i(\mathbf{w} - \mathbf{w}_i^\ast) \right\|^2_{\mathbf{\Sigma}_i} + {\sigma^2} + \dfrac{\alpha^2 \sigma^2}{m} \text{tr} ( \mathbf{\Sigma}_i^2)\right] \label{cite}
\end{align*}
where $\mathbf{P}_i \coloneqq \mathbf{I}_d - \frac{\alpha}{m} \mathbf{\hat{X}}_i^\top \mathbf{\hat{X}}_i$. We have
\begin{align}
    \frac{1}{2}\mathbb{E}_i\bigg[\inf_{\mathbf{w}_i\in \mathbb{R}^d} f_i(\mathbf{w}_i)\bigg] = \frac{1}{2} {\sigma^2} + \dfrac{\alpha^2 \sigma^2}{2m} \text{tr} ( \mathbf{\Sigma}_i^2), 
\end{align}
therefore, by \eqref{cite} and the definition of the excess risk,
\begin{align}
\mathcal{E}_m(\mathbf{w}_{NAL}) &= F_m(\mathbf{w}_{NAL}) -  \frac{1}{2} \mathbb{E}_{i}\left[ \sigma^2 + \dfrac{\alpha^2 \sigma^2}{m} \text{tr} ( \mathbf{\Sigma}_i^2)\right] \nonumber \\
&= \mathbb{E}_{i} \mathbb{E}_{\mathbf{\hat{X}}_i}\left\| \mathbf{P}_i(\mathbf{w}_{NAL} - \mathbf{w}_i^\ast) \right\|^2_{\mathbf{\Sigma}_i} \nonumber \\
&= \mathbb{E}_{i} \mathbb{E}_{\mathbf{\hat{X}}_i}\left[(\mathbf{w}_{NAL} - \mathbf{w}_i^\ast)^\top  \mathbf{P}_{i}^\top {\mathbf{\Sigma}_i} \mathbf{P}_{i} (\mathbf{w}_{NAL} - \mathbf{w}_i^\ast) \right] \nonumber \\
&= \mathbb{E}_{i} \mathbb{E}_{\mathbf{\hat{X}}_i}\big[\left(\mathbb{E}_{i'}[\mathbf{\Sigma}_{i'}]^{-1} \mathbb{E}_{i'}[\mathbf{\Sigma}_{i'} (\mathbf{w}_{i'}^\ast - \mathbf{w}_{i}^\ast )]\right)^\top \nonumber \\
&\quad \quad \quad \quad \quad  \mathbf{P}_{i}^\top {\mathbf{\Sigma}_i} \mathbf{P}_{i} \mathbb{E}_{i'}[\mathbf{\Sigma}_{i'}]^{-1} \mathbb{E}_{i'}[\mathbf{\Sigma}_{i'} (\mathbf{w}_{i'}^\ast - \mathbf{w}_{i}^\ast) \big] \big]\nonumber \\
&=\mathbb{E}_{i}\Big[ \left(\mathbb{E}_{i'}[\mathbf{\Sigma}_{i'}]^{-1} \mathbb{E}_{i'}[\mathbf{\Sigma}_{i'} (\mathbf{w}_{i'}^\ast - \mathbf{w}_{i}^\ast)] \right)^\top \nonumber \\
&\quad \quad \mathbb{E}_{\mathbf{\hat{X}}_i} \left[\mathbf{P}_{i}^\top {\mathbf{\Sigma}_i} \mathbf{P}_{i}\right] \left(\mathbb{E}_{i'}[\mathbf{\Sigma}_{i'}]^{-1} \mathbb{E}_{i'}[\mathbf{\Sigma}_{i'} (\mathbf{w}_{i'}^\ast - \mathbf{w}_{i}^\ast)]\right) \Big] \nonumber \\
&= \mathbb{E}_{i} \Big[\left(\mathbb{E}_{i'}[\mathbf{\Sigma}_{i'}]^{-1} \mathbb{E}_{i'}[\mathbf{\Sigma}_{i'} (\mathbf{w}_{i'}^\ast - \mathbf{w}_{i}^\ast)] \right)^\top  \nonumber \\
&\quad \quad \quad \quad  \mathbf{Q}_{i}^{(m)} \left(\mathbb{E}_{i'}[\mathbf{\Sigma}_{i'}]^{-1} \mathbb{E}_{i'}[\mathbf{\Sigma}_{i'} (\mathbf{w}_{i'}^\ast - \mathbf{w}_{i}^\ast)]\right)\Big]   \label{qlabel1}\\
&= \mathbb{E}_{i} \left\| \left(\mathbb{E}_{i'}[\mathbf{\Sigma}_{i'}]^{-1} \mathbb{E}_{i'}[\mathbf{\Sigma}_{i'} (\mathbf{w}_{i'}^\ast - \mathbf{w}_{i}^\ast)]\right)\right\|^2_{\mathbf{Q}_{i}^{(m)}} 
\end{align}
where (\ref{qlabel1}) follows from Lemma \ref{fourth}.
Similarly, for MAML we have the following chain of equalities for $\mathcal{E}_m(\mathbf{w}_{MAML}):$
\begin{align}
\mathcal{E}_m(\mathbf{w}_{MAML})
&= F_m(\mathbf{w}_{MAML}) -  \frac{1}{2} \mathbb{E}_{i}\left[ \sigma^2 + \dfrac{\alpha^2 \sigma^2}{m} \text{tr} ( \mathbf{\Sigma}_i^2)\right]& \nonumber \\
&= \mathbb{E}_{i} \mathbb{E}_{\mathbf{\hat{X}}_i}\left\| \mathbf{P}_i(\mathbf{w}_{MAML} - \mathbf{w}_i^\ast) \right\|^2_{\mathbf{\Sigma}_i}  \nonumber \\
&= \mathbb{E}_{i} \mathbb{E}_{\mathbf{\hat{X}}_i}\left[(\mathbf{w}_{MAML} - \mathbf{w}_i^\ast)^\top \mathbf{P}_{i}^\top {\mathbf{\Sigma}_i} \mathbf{P}_{i} (\mathbf{w}_{MAML} - \mathbf{w}_i^\ast) \right] \nonumber \\
&= \mathbb{E}_{i} \mathbb{E}_{\mathbf{\hat{X}}_i}\left[\left(\mathbb{E}_{i'}[\mathbf{Q}_{i'}^{(n_2)}]^{-1} \mathbb{E}_{i'}[\mathbf{Q}_{i'}^{(n_2)} \mathbf{w}_{i'}^\ast] - \mathbf{w}_{i}^\ast \right)^\top \mathbf{P}_{i}^\top {\mathbf{\Sigma}_i} \mathbf{P}_{i} \left(\mathbb{E}_{i'}[\mathbf{Q}_{i'}^{(n_2)}]^{-1} \mathbb{E}_{i'}[\mathbf{Q}_{i'}^{(n_2)} \mathbf{w}_{i'}^\ast] - \mathbf{w}_{i}^\ast\right) \right] \nonumber \\
&= \mathbb{E}_{i} \mathbb{E}_{\mathbf{\hat{X}}_i}\bigg[\left(\mathbb{E}_{i'}[\mathbf{Q}_{i'}^{(n_2)}]^{-1} \mathbb{E}_{i'}[\mathbf{Q}_{i'}^{(n_2)} (\mathbf{w}_{i'}^\ast - \mathbf{w}_{i}^\ast)] \right)^\top \nonumber \\
&\quad \quad \quad \quad \quad \quad \mathbf{P}_{i}^\top {\mathbf{\Sigma}_i} \mathbf{P}_{i} \left(\mathbb{E}_{i'}[\mathbf{Q}_{i'}^{(n_2)}]^{-1} \mathbb{E}_{i'}[\mathbf{Q}_{i'}^{(n_2)} (\mathbf{w}_{i'}^\ast - \mathbf{w}_{i}^\ast)]\right) \bigg] \nonumber \\
&=\mathbb{E}_{i}\bigg[ \left(\mathbb{E}_{i'}[\mathbf{Q}_{i'}^{(n_2)}]^{-1} \mathbb{E}_{i'}[\mathbf{Q}_{i'}^{(n_2)} (\mathbf{w}_{i'}^\ast - \mathbf{w}_{i}^\ast)] \right)^\top \nonumber\\
&\quad \quad \quad \quad \mathbb{E}_{\mathbf{\hat{X}}_i} \left[\mathbf{P}_{i}^\top {\mathbf{\Sigma}_i} \mathbf{P}_{i}\right] \left(\mathbb{E}_{i'}[\mathbf{Q}_{i'}^{(n_2)}]^{-1} \mathbb{E}_{i'}[\mathbf{Q}_{i'}^{(n_2)} (\mathbf{w}_{i'}^\ast - \mathbf{w}_{i}^\ast)]\right) \bigg] \nonumber \\
&= \mathbb{E}_{i} \left[\left(\mathbb{E}_{i'}[\mathbf{Q}_{i'}^{(n_2)}]^{-1} \mathbb{E}_{i'}[\mathbf{Q}_{i'}^{(n_2)} (\mathbf{w}_{i'}^\ast - \mathbf{w}_{i}^\ast)] \right)^\top  \mathbf{Q}_{i}^{(m)} \left(\mathbb{E}_{i'}[\mathbf{Q}_{i'}^{(n_2)}]^{-1} \mathbb{E}_{i'}[\mathbf{Q}_{i'}^{(n_2)} (\mathbf{w}_{i'}^\ast - \mathbf{w}_{i}^\ast)]\right)\right]   \label{qlabel2}\\
&= \mathbb{E}_{i} \left\| \left(\mathbb{E}_{i'}[\mathbf{Q}_{i'}^{(n_2)}]^{-1} \mathbb{E}_{i'}[\mathbf{Q}_{i'}^{(n_2)} (\mathbf{w}_{i'}^\ast - \mathbf{w}_{i}^\ast)]\right)\right\|^2_{\mathbf{Q}_{i}^{(m)}}
\end{align}
where (\ref{qlabel2}) follows from Lemma \ref{fourth}.


\subsection{Proof of Corollary \ref{prop2}}

We first write out the dimension-wise excess risks for NAL and MAML, using the results in Proposition \ref{er}. Let $w_{i,l}^\ast$ denote the $l$-th element of the vector $\mathbf{w}_i^\ast$. Since each $\mathbf{\Sigma}_i$ is diagonal, we have that $\mathbf{Q}_{i}^{(m)}$ is diagonal. Let $\lambda_{i,l}$ denote the $l$-th diagonal element of the matrix $\mathbf{\Sigma}_{i}$ and let $q_{i,l}$ denote the $l$-th diagonal element of the matrix $\mathbf{Q}_{i}^{(m)}$, thus $q_{i,l} = \lambda_{i,l} \left( 1- \alpha \lambda_{i,l} \right)^2 + \dfrac{ \alpha^2}{m}(\text{tr}(\mathbf{\Sigma}_i^2) \lambda_{i,l} + \lambda_{i,l}^3)$. Recall that we assume $n_2 = m$. The error for NAL in the $l$-th dimension is:
\begin{align}
    \mathcal{E}^{(l)}_m(\mathbf{w}_{NAL}^\ast)
    &= \frac{1}{2}\mathbb{E}_i \left[ q_{i,l} \left(\dfrac{\mathbb{E}_{i'} \left[ \lambda_{i',l} \left(w_{i',l}^\ast - w_{i,l}^\ast\right]\right)}{\mathbb{E}_{i'}\left[ \lambda_{i',l}\right]}\right)^2\right] \nonumber \\
    &= \frac{1}{2 \mathbb{E}_i\left[ \lambda_{i,l} \right]^2} \mathbb{E}_i \left[\mathbb{E}_{i'} \left[\mathbb{E}_{i''}\left[ q_{i,l} \lambda_{i',l} \lambda_{i'',l} \left({w}_{i',l}^\ast - {w}_{i,l}^\ast\right)\left({w}_{i'',l}^\ast - {w}_{i,l}^\ast\right)\right]\right]\right] \nonumber
\end{align}
Likewise, the error for MAML in the $l$-th dimension is:
\begin{align}
    \mathcal{E}^{(l)}_m(\mathbf{w}_{MAML}^\ast)
    &= \frac{1}{2}\mathbb{E}_i \left[ q_{i,l} \left(\dfrac{\mathbb{E}_{i'} \left[ q_{i',l} \left(w_{i',l}^\ast - w_{i,l}^\ast\right]\right)}{\mathbb{E}_{i'}\left[ q_{i',l}\right]}\right)^2\right] \nonumber \\
    &= \frac{1}{2 \mathbb{E}_i\left[ q_{i,l} \right]^2} \mathbb{E}_i \left[\mathbb{E}_{i'} \left[\mathbb{E}_{i''}\left[ q_{i,l} q_{i',l} q_{i'',l} \left({w}_{i',l}^\ast - {w}_{i,l}^\ast\right)\left({w}_{i'',l}^\ast - {w}_{i,l}^\ast\right)\right]\right]\right] \nonumber
\end{align}
Next we use the definitions of $\lambda_{i,l}$ for the easy and hard tasks.
For the easy tasks, note that
\begin{align}\label{alpha_E_def}
    q_{i,l}^{\text{easy}} &= \lambda_{i,l} \left( 1- \alpha \lambda_{i,l} \right)^2 + \dfrac{ \alpha^2}{m}(\text{tr}(\mathbf{\Sigma}_i^2) \lambda_{i,l} + \lambda_{i,l}^3)  \nonumber \\
    &= \rho_E(1 - \alpha\rho_E)^2 + \dfrac{ (d+1)\alpha^2\rho_E^3}{m} \nonumber \\
    &=: a_E
\end{align}
and similarly for the hard tasks, namely
\begin{align}\label{alpha_H_def}
    q_{i,l}^{\text{hard}} &
    = \rho_H(1 - \alpha\rho_H)^2 + \dfrac{ (d+1)\alpha^2\rho_H^3}{m}
    =: a_H
\end{align}
Next, recall that the task distribution is a mixture of the distribution of the hard tasks and the distribution of the easy tasks, with mixture weights (0.5, 0.5). Let $i_H$ be the index of a task drawn from the distribution of hard tasks, and $i_E$ be the index of a task drawn from the distribution of easy tasks.
Then using the law of total expectation, the excess risk for MAML in the $l$-th dimension can then be written as:
\begin{align}
    \mathcal{E}^{(l)}_m(\mathbf{w}_{MAML}) 
    &= \frac{1}{2}\mathbb{E}_i \left[ q_{i,l} \left(\dfrac{\mathbb{E}_{i'} \left[ q_{i',l} \left(w_{i',l}^\ast - w_{i,l}^\ast\right]\right)}{\mathbb{E}_{i'}\left[ q_{i',l}\right]}\right)^2\right] \nonumber \\
    &= \frac{1}{8 \mathbb{E}_i\left[ q_{i,l} \right]^2} \mathbb{E}_i \Bigg[q_{i,l} \mathbb{E}_{i_H'} \left[ a_H\left({w}_{i_H',l}^\ast - {w}_{i,l}^\ast\right) \right]^2 +  q_{i,l}\mathbb{E}_{i_E} \left[ a_E\left({w}_{i_E',l}^\ast - {w}_{i,l}^\ast\right) \right]^2 \nonumber \\
    & \quad \quad \quad \quad \quad \quad \quad \quad +
    2q_{i,l}\mathbb{E}_{i_H} \left[ a_H \left({w}_{i_H',l}^\ast - {w}_{i,l}^\ast\right) \right]\mathbb{E}_{i_E} \left[ a_E\left({w}_{i_E',l}^\ast - {w}_{i,l}^\ast\right) \right] \Bigg] \nonumber \\
&= \frac{1}{16 \mathbb{E}_i\left[ q_{i,l} \right]^2} \mathbb{E}_{i_H} \Bigg[a_H \mathbb{E}_{i_H'} \left[ a_H\left({w}_{i_H',l}^\ast - {w}_{i_H,l}^\ast\right) \right]^2 +  a_H\mathbb{E}_{i_E'} \left[ a_E\left({w}_{i_E',l}^\ast - {w}_{i_H,l}^\ast\right) \right]^2 \nonumber \\
    & \quad \quad \quad \quad \quad \quad \quad \quad +
    2a_H\mathbb{E}_{i_H'} \left[ a_H \left({w}_{i_H',l}^\ast - {w}_{i_H,l}^\ast\right) \right]\mathbb{E}_{i_E'} \left[ a_E\left({w}_{i_E',l}^\ast - {w}_{i_H,l}^\ast\right) \right] \Bigg] \nonumber \\
&\quad + \frac{1}{16 \mathbb{E}_{i}\left[ q_{i,l} \right]^2} \mathbb{E}_{i_E} \Bigg[a_E \mathbb{E}_{i_H'} \left[ a_E\left({w}_{i_H',l}^\ast - {w}_{i_E,l}^\ast\right) \right]^2 +  a_E \mathbb{E}_{i_E'} \left[ a_E\left({w}_{i_E',l}^\ast - {w}_{i_E,l}^\ast\right) \right]^2 \nonumber \\
    & \quad \quad \quad \quad \quad \quad \quad \quad \quad +
    2a_E\mathbb{E}_{i_H'} \left[ a_H \left({w}_{i_H',l}^\ast - {w}_{i_E,l}^\ast\right) \right]\mathbb{E}_{i_E'} \left[ a_E\left({w}_{i_E',l}^\ast - {w}_{i_E,l}^\ast\right) \right] \Bigg] \nonumber \\
    &= \frac{1}{16 \mathbb{E}_i\left[ q_{i,l} \right]^2}  \Big(a_H^3 r_H +  a_H a_E^2(r_H+ R^2) + 2a_H^2 a_E r_H \nonumber \\
    &\quad \quad \quad \quad \quad \quad \quad \quad + a_E a_H^2(r_E + R^2) +  a_E^3 r_E  + 2a_E^2 a_H r_E \Big) \label{hh1} \\
    &= \frac{1}{4 (a_E + a_H)^2}  \Big(a_H^3 r_H +  a_H a_E^2 (r_H+R^2) + 2a_H^2 a_E r_H \nonumber \\
    &\quad \quad \quad \quad \quad \quad \quad \quad + a_E a_H^2 (r_E+R^2) +  a_E^3 r_E  + 2a_E^2 a_H r_E \Big) \nonumber 
\end{align}
where (\ref{hh1}) follows by the properties of the distributions of the hard and easy tasks. Likewise, for NAL we have
\begin{align}
    \mathcal{E}^{(l)}_m(\mathbf{w}_{NAL})
    &= \frac{1}{2}\mathbb{E}_i \left[ q_{i,l} \left(\dfrac{\mathbb{E}_{i'} \left[ \lambda_{i',l} \left(w_{i',l}^\ast - w_{i,l}^\ast\right]\right)}{\mathbb{E}_{i'}\left[ \lambda_{i',l}\right]}\right)^2\right] \nonumber \\
&= \frac{1}{16 \mathbb{E}_i\left[ \lambda_{i,l} \right]^2} \mathbb{E}_{i_H} \Bigg[a_H \mathbb{E}_{i_H'} \left[ \rho_H\left({w}_{i_H',l}^\ast - {w}_{i_H,l}^\ast\right) \right]^2 +  a_H\mathbb{E}_{i_E'} \left[ \rho_E\left({w}_{i_E',l}^\ast - {w}_{i_H,l}^\ast\right) \right]^2 \nonumber \\
    & \quad \quad \quad \quad \quad \quad \quad \quad +
    2a_H\mathbb{E}_{i_H'} \left[ \rho_H \left({w}_{i_H',l}^\ast - {w}_{i_H,l}^\ast\right) \right]\mathbb{E}_{i_E'} \left[ \rho_E\left({w}_{i_E',l}^\ast - {w}_{i_H,l}^\ast\right) \right] \Bigg] \nonumber \\
&\quad + \frac{1}{16 \mathbb{E}_{i}\left[ \lambda_{i,l} \right]^2} \mathbb{E}_{i_E} \Bigg[a_E \mathbb{E}_{i_H'} \left[ \rho_E\left({w}_{i_H',l}^\ast - {w}_{i_E,l}^\ast\right) \right]^2 +  a_E \mathbb{E}_{i_E'} \left[ \rho_E\left({w}_{i_E',l}^\ast - {w}_{i_E,l}^\ast\right) \right]^2 \nonumber \\
    & \quad \quad \quad \quad \quad \quad \quad \quad \quad +
    2a_E\mathbb{E}_{i_H'} \left[ a_H \left({w}_{i_H',l}^\ast - {w}_{i_E,l}^\ast\right) \right]\mathbb{E}_{i_E'} \left[ a_E\left({w}_{i_E',l}^\ast - {w}_{i_E,l}^\ast\right) \right] \Bigg] \nonumber \\
    &= \frac{1}{4 (a_E + a_H)^2}  \Big(a_H\rho_H^2 r_H +  a_H \rho_E^2 (r_H+R^2) + 2a_H \rho_H \rho_E r_H + \nonumber \\
    &\quad \quad \quad \quad \quad \quad \quad \quad a_E \rho_H^2 (r_E+R^2) +  a_E \rho_E^2 r_E  + 2a_E \rho_H \rho_E \Big) \nonumber
\end{align} 
Note that in the this setting the excess risks are symmetric across dimension. Thus multiplying by $d$ completes the proof.

\subsection{Proof of Corollary \ref{prop3}}

For the case that  $\alpha = 1/\rho_E$ and $\frac{\rho_H}{\rho_E} (1- \frac{\rho_H}{\rho_E})^2 \gg \frac{d}{m}$, we can compare MAML and NAL by the weights that are placed on $r_H$, $r_E$, $r_H+R^2$ and $r_E+R^2$ in their excess risks. Using the fact that $\alpha = 1/\rho_E$, the expressions in (\ref{alpha_E_def}) and (\ref{alpha_H_def}) can be simplified as
\begin{align}
    a_E &= \frac{(d+1)\rho_E}{m} \nonumber \\
    a_H &= \rho_H (1- \tfrac{\rho_H}{\rho_E})^2 +  \frac{(d+1)\rho_H^3}{m\rho_E^2}
\end{align}
Under the assumption that $\frac{\rho_H}{\rho_E} (1- \frac{\rho_H}{\rho_E})^2 \gg \frac{d}{m}$, the bias term $\rho_H (1- \frac{\rho_H}{\rho_E})^2$ dominates the gradient variance term $\frac{(d+1)\rho_E}{m}$, thus $a_H$ dominates $a_E$. As a result, we can ignore $a_E$ in the expressions. Furthermore, $\rho_H (1- \frac{\rho_H}{\rho_E})^2$ dominates $\frac{(d+1)\rho_H^3}{m\rho_E^2}$ within the expression for $a_H$, meaning we can also drop the latter term. Thus we have
    \begin{align}
&\mathcal{E}_m(\mathbf{w}_{MAML}^\ast) \!\approx\! \frac{da_H^3}{4 a_H^2} r_H  = \frac{d\rho_H (1- \frac{\rho_H}{\rho_E})^2}{4}r_H
 \label{aaaa}
\end{align}
and
\begin{align}
\mathcal{E}_m(\mathbf{w}_{NAL}^\ast) &\approx \frac{d a_H \rho_H^2 + 2da_H \rho_E \rho_H}{4 (\rho_H+\rho_E)^2} r_H  + \frac{d a_H \rho_E^2}{4 (\rho_H+\rho_E)^2} (r_H + R^2)  \nonumber \\ 
&= \frac{d \rho_H (1- \tfrac{\rho_H}{\rho_E})^2}{4} r_H + \frac{d \rho_H \rho_E^2 (1- \frac{\rho_H}{\rho_E})^2 }{4(\rho_H+\rho_E)^2}R^2
 \label{aaa}
\end{align}
Dividing (\ref{aaa}) by (\ref{aaaa}) yields
\begin{align}
\frac{\mathcal{E}_m(\mathbf{w}_{NAL}^\ast)}{\mathcal{E}_m(\mathbf{w}_{MAML}^\ast)} &\approx \left(\frac{d \rho_H (1- \tfrac{\rho_H}{\rho_E})^2}{4} r_H + \frac{d \rho_H \rho_E^2 (1- \frac{\rho_H}{\rho_E})^2 }{4(\rho_H+\rho_E)^2}R^2\right)\Bigg/   \left(\frac{d\rho_H (1- \frac{\rho_H}{\rho_E})^2}{4}r_H\right) \nonumber \\ 
&= \left( r_H + \frac{ \rho_E^2  }{(\rho_H+\rho_E)^2} R^2\right)\Bigg/ r_H \nonumber \\ 
&= 1 + \frac{ \rho_E^2  }{(\rho_H+\rho_E)^2} \frac{R^2}{r_H^2} \nonumber \\
&\approx 1+ \frac{R^2}{r_H^2}
\end{align}
where the last approximation follows since $\frac{1}{4} \leq \frac{\rho_E^2}{(\rho_E+\rho_H)^2} \leq 1$.

\section{Proof of Theorem \ref{lem:sc}} \label{app:nn}

In this section we prove Theorem \ref{lem:sc}.  We first show a general result.

\begin{lemma}\label{prop:hess}

Let $\mathbf{w}_{MAML} \in \mathbb{R}^D$ be a stationary point of the MAML objective \ref{exp_test} with $m=\infty$ whose Hessians of the task loss functions are bounded and positive definite, i.e. $\mathbf{w}_{MAML}$ satisfies 
\begin{align}
    \beta_1 \mathbf{I}_{D} \preceq \nabla^2 f_1 (\mathbf{w}_{MAML}) \preceq L_1 \mathbf{I}_{D}, \quad \beta_2 \mathbf{I}_{D} \preceq \nabla^2 f_2 (\mathbf{w}_{MAML}) \preceq L_2 \mathbf{I}_{D}
\end{align}
for some $L_1 \geq \beta_1 > 0$ and $L_2 \geq \beta_2 > 0$. Suppose that $\beta_2 > \beta_1$.
Define $\mathbf{w}_1 \coloneqq \mathbf{w}_{MAML} -\alpha \nabla f_1(\mathbf{w}_{MAML})$ and $\mathbf{w}_2 \coloneqq \mathbf{w}_{MAML} -\alpha \nabla f_2(\mathbf{w}_{MAML})$.
 Then the following holds for any $\alpha < 1/\max(L_1,L_2)$:
\begin{align}
    \|\nabla f_1(\mathbf{w}_{MAML})\|_2 \leq\frac{1 - 2\alpha\beta_2}{(1 - \alpha L_1)^2} \|\nabla f_2(\mathbf{w}_{MAML})\| + O(\alpha^2).
\end{align}
\end{lemma}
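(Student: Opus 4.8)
The plan is to exploit the first-order stationarity condition of the MAML population objective and then linearize the inner gradient step in $\alpha$. With $m=\infty$ and the uniform two-task distribution, the objective is $F(\mathbf{w}) = \tfrac{1}{2}\sum_{i=1}^2 f_i(\mathbf{w} - \alpha\nabla f_i(\mathbf{w}))$, and the chain rule (using that the inner Jacobian $\mathbf{I}_D - \alpha\nabla^2 f_i(\mathbf{w})$ is symmetric) gives $\nabla F(\mathbf{w}) = \tfrac{1}{2}\sum_{i=1}^2 (\mathbf{I}_D - \alpha\nabla^2 f_i(\mathbf{w}))\nabla f_i(\mathbf{w} - \alpha\nabla f_i(\mathbf{w}))$. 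First I would abbreviate $\mathbf{H}_i := \nabla^2 f_i(\mathbf{w}_{MAML})$ and $\mathbf{g}_i := \nabla f_i(\mathbf{w}_{MAML})$, so that stationarity $\nabla F(\mathbf{w}_{MAML}) = 0$ reads $(\mathbf{I}_D - \alpha\mathbf{H}_1)\nabla f_1(\mathbf{w}_1) = -(\mathbf{I}_D - \alpha\mathbf{H}_2)\nabla f_2(\mathbf{w}_2)$, where $\mathbf{w}_i = \mathbf{w}_{MAML} - \alpha\mathbf{g}_i$ are exactly the one-step adapted iterates defined in the statement.

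Next I would Taylor-expand each inner-adapted gradient about $\mathbf{w}_{MAML}$: $\nabla f_i(\mathbf{w}_i) = \nabla f_i(\mathbf{w}_{MAML} - \alpha\mathbf{g}_i) = (\mathbf{I}_D - \alpha\mathbf{H}_i)\mathbf{g}_i + O(\alpha^2)$, the remainder being controlled by the third derivative of $f_i$ and the bounded gradient norm in the region. Substituting into the stationarity relation collapses it to the clean identity $(\mathbf{I}_D - \alpha\mathbf{H}_1)^2\mathbf{g}_1 = -(\mathbf{I}_D - \alpha\mathbf{H}_2)^2\mathbf{g}_2 + O(\alpha^2)$. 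The key structural point is that each side is now a squared symmetric positive-definite matrix acting on a task gradient, so I can pass to Euclidean norms and bound each factor using the Loewner inequalities supplied by the hypotheses.

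I would then take norms and apply the triangle inequality to obtain $\|(\mathbf{I}_D - \alpha\mathbf{H}_1)^2\mathbf{g}_1\| \le \|(\mathbf{I}_D - \alpha\mathbf{H}_2)^2\mathbf{g}_2\| + O(\alpha^2)$. Because $\alpha < 1/\max(L_1,L_2)$, the eigenvalues of $\mathbf{I}_D - \alpha\mathbf{H}_1$ lie in $[1-\alpha L_1,\,1-\alpha\beta_1]\subset(0,1)$, which yields the lower bound $\|(\mathbf{I}_D - \alpha\mathbf{H}_1)^2\mathbf{g}_1\| \ge (1-\alpha L_1)^2\|\mathbf{g}_1\|$; symmetrically, $\beta_2$ being the smallest eigenvalue of $\mathbf{H}_2$ makes $1-\alpha\beta_2$ the largest eigenvalue of $\mathbf{I}_D-\alpha\mathbf{H}_2$, so $\|(\mathbf{I}_D-\alpha\mathbf{H}_2)^2\mathbf{g}_2\| \le (1-\alpha\beta_2)^2\|\mathbf{g}_2\|$. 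Chaining these bounds and dividing through by $(1-\alpha L_1)^2$ gives $\|\mathbf{g}_1\| \le \frac{(1-\alpha\beta_2)^2}{(1-\alpha L_1)^2}\|\mathbf{g}_2\| + O(\alpha^2)$, and expanding $(1-\alpha\beta_2)^2 = 1 - 2\alpha\beta_2 + O(\alpha^2)$ while folding the resulting second-order piece into the remainder produces exactly the claimed inequality.

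The main obstacle I anticipate is the careful bookkeeping of the $O(\alpha^2)$ terms: the Taylor remainder must be shown to be genuinely second order uniformly over the relevant region, which requires the gradients $\mathbf{g}_i$ and third derivatives of $f_i$ to be bounded there (guaranteed since $\mathbf{w}_{MAML}\in\mathcal{S}_1\cap\mathcal{S}_2$ and the $f_i$ are smooth with bounded Hessians). One must also check that dividing by $(1-\alpha L_1)^2$ and reabsorbing the term $\tfrac{O(\alpha^2)}{(1-\alpha L_1)^2}\|\mathbf{g}_2\|$ does not inflate the error past $O(\alpha^2)$, which is immediate because $(1-\alpha L_1)^2$ is bounded away from zero on the admissible range of $\alpha$. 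Everything else reduces to routine symmetric-matrix linear algebra.
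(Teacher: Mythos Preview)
Your proposal is correct and follows essentially the same approach as the paper: write the stationarity condition $(\mathbf{I}_D-\alpha\mathbf{H}_1)\nabla f_1(\mathbf{w}_1)=-(\mathbf{I}_D-\alpha\mathbf{H}_2)\nabla f_2(\mathbf{w}_2)$, Taylor-expand $\nabla f_i(\mathbf{w}_i)=(\mathbf{I}_D-\alpha\mathbf{H}_i)\mathbf{g}_i+O(\alpha^2)$, use the eigenvalue bounds to obtain the ratio $\tfrac{(1-\alpha\beta_2)^2}{(1-\alpha L_1)^2}$, and then drop the $\alpha^2\beta_2^2$ term into the remainder. The only cosmetic difference is ordering: the paper first inverts $(\mathbf{I}_D-\alpha\mathbf{H}_2)$ and takes norms at the level of the adapted gradients, applying the Taylor expansion afterward, whereas you substitute the expansion first to reach $(\mathbf{I}_D-\alpha\mathbf{H}_1)^2\mathbf{g}_1=-(\mathbf{I}_D-\alpha\mathbf{H}_2)^2\mathbf{g}_2+O(\alpha^2)$ and then take norms; both routes give the same bound.
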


\begin{proof} 

For some vector $\mathbf{w}_{MAML} \in \mathbb{R}^D$, let $\mathbf{H}_1(\mathbf{w}_{MAML}) \coloneqq \nabla^2 f_1(\mathbf{w}_{MAML})$ and $\mathbf{H}_2(\mathbf{w}_{MAML}) = \nabla^2 f_2(\mathbf{w}_{MAML})$ and $\mathbf{g}_1(\mathbf{w}_{MAML}) = \nabla f_1(\mathbf{w}_{1})$ and $\mathbf{g}_2(\mathbf{w}_{MAML}) = \nabla f_2(\mathbf{w}_{2})$, recalling that $\mathbf{w}_1 = \mathbf{w}_{MAML} - \alpha \nabla f_1(\mathbf{w}_{MAML})$ and $\mathbf{w}_2 = \mathbf{w}_{MAML} - \alpha \nabla f_2(\mathbf{w}_{MAML})$.

Recall that the MAML objective in this case is given by 
\begin{align}
    F_\infty(\mathbf{w}) = \frac{1}{2}f_1(\mathbf{w} - \alpha \nabla f_1(\mathbf{w})) + \frac{1}{2}f_2(\mathbf{w} - \alpha \nabla f_2(\mathbf{w})).
\end{align}
After setting the gradient of $F_{\infty}(.)$ equal to zero at $\mathbf{w}_{MAML}$, we find that any stationary point $\mathbf{w}_{MAML}$ of the MAML objective must satisfy
\begin{align}
    (\mathbf{I}_D - \alpha \mathbf{H}_1(\mathbf{w}_{MAML}))\mathbf{g}_1(\mathbf{w}_1) = - (\mathbf{I}_D - \alpha \mathbf{H}_2(\mathbf{w}_{MAML}))\mathbf{g}_2(\mathbf{w}_{2})
\end{align}
Since $\mathbf{H}_2(\mathbf{w}_{MAML})\preceq L_2\mathbf{I}_d$ and $\alpha \leq 1/\max(L_1,L_2)$, $\mathbf{I}_d - \alpha \mathbf{H}_2(\mathbf{w}_{MAML})$ is positive definite with minimum eigenvalue at least $1-\alpha L_2 > 0$. Thus we have 
\begin{align}
    (\mathbf{I}_D - \alpha \mathbf{H}_2(\mathbf{w}_{MAML}))^{-1}(\mathbf{I}_D - \alpha \mathbf{H}_1(\mathbf{w}_{MAML}))\mathbf{g}_1(\mathbf{w}_{1}) = - \mathbf{g}_2(\mathbf{w}_{2})
\end{align}
Taking the norm of both sides and using the fact that $\|\mathbf{A}\mathbf{v}\|_2 \geq \sigma_{\min}(\mathbf{A}) \|\mathbf{v}\|_2$ for any matrix $\mathbf{A}$ and vector $\mathbf{v}$, along with the facts that $\beta_1 \mathbf{I}_D \preceq \mathbf{H}_1(\mathbf{w}_{MAML}) \preceq L_1 \mathbf{I}_D$ and $\beta_2 \mathbf{I}_D \preceq \mathbf{H}_2(\mathbf{w}_{MAML}) \preceq L_2 \mathbf{I}_D$ by assumption, yields
\begin{align}
     \| \mathbf{g}_2(\mathbf{w}_{2})\|_2 &= \|(\mathbf{I}_D - \alpha \mathbf{H}_2(\mathbf{w}_{MAML}))^{-1}(\mathbf{I}_D - \alpha \mathbf{H}_1(\mathbf{w}_{MAML}))\mathbf{g}_1(\mathbf{w}_{1}) \|_2 \nonumber \\
     &\geq \sigma_{\max}^{-1}(\mathbf{I}_D - \alpha \mathbf{H}_2(\mathbf{w}_{MAML})) \sigma_{\min}(\mathbf{I}_D - \alpha \mathbf{H}_1(\mathbf{w}_{MAML})) \|\mathbf{g}_1(\mathbf{w}_{1})\|_2 \nonumber \\
     &\geq  \frac{{1- \alpha {L_1}}}{1 - \alpha {\beta_2}} \|\mathbf{g}_1(\mathbf{w}_{1})\|_2 
\end{align}
Next, note that using Taylor expansion, 
\begin{align}
    \mathbf{g}_1(\mathbf{w}_1)  &= \nabla {f}_1(\mathbf{w}_{MAML}) - \alpha \nabla^2 {f}_1(\mathbf{w}_{MAML}) \nabla {f}_1(\mathbf{w}_{MAML}) + O(\alpha^2) \nonumber \\
    &= (\mathbf{I}_D - \alpha \mathbf{H}_1(\mathbf{w}_{MAML}))\mathbf{g}_1 (\mathbf{w}_{MAML}) + O(\alpha^2)  \\
    \mathbf{g}_2(\mathbf{w}_2)  &= \nabla {f}_2(\mathbf{w}_{MAML}) - \alpha \nabla^2 {f}_2(\mathbf{w}_{MAML}) \nabla {f}_2(\mathbf{w}_{MAML}) + O(\alpha^2) \nonumber \\
    &= (\mathbf{I}_D - \alpha \mathbf{H}_2(\mathbf{w}_{MAML})) \mathbf{g}_2 (\mathbf{w}_{MAML})   + O(\alpha^2)  
\end{align}
Therefore 
\begin{align}
    \| \mathbf{g}_2(\mathbf{w}_{MAML}&)\|_2 + O(\alpha^2) \nonumber \\
    &\geq \sigma_{\max}^{-1}(\mathbf{I}_D - \alpha \mathbf{H}_2(\mathbf{w}_{MAML})) \sigma_{\min}(\mathbf{I}_D - \alpha \mathbf{H}_1(\mathbf{w}_{MAML})) \frac{1 - \alpha L_1}{1 - \alpha {\beta_2}} \|\mathbf{g}_1(\mathbf{w}_{1})\|_2  \nonumber \\
    &\geq \left(\frac{1 - \alpha L_1}{1 - \alpha {\beta_2}}\right)^2 \|\mathbf{g}_1(\mathbf{w}_{MAML})\|_2, 
\end{align}
i.e., $\|\nabla f_1(\mathbf{w}_{MAML})\|_2 \leq \frac{1 - 2\alpha {\beta_2}}{(1 - \alpha L_1)^2}\|\nabla f_2(\mathbf{w}_{MAML})\|_2 + O(\alpha^2)$.
\end{proof}

Now we are ready to prove Theorem \ref{lem:sc}. 



\begin{proof}

We first show that  $\beta_i \mathbf{I}_{Nd} \preceq \nabla^2 f_i(\mathbf{W}) \preceq L_i \mathbf{I}_{Nd}$ for all $\mathbf{W} \in \mathcal{S}_i$ for $i \in \{1,2\}$, where $\nabla^2 f_i(\mathbf{W}) \in \mathbb{R}^{Nd\times Nd}$ is the Hessian of the loss of the vectorized $\mathbf{W}$ and each $\beta_i$ and $L_i$ is defined in Theorem \ref{lem:sc}. In other words, we will show that each $f_i$ is $\beta_i$-strongly convex and $L_i$-smooth within $\mathcal{S}_i$.
This will show that any stationary point $\mathbf{W}_{MAML}$ that lies within $\mathcal{S}_1 \cap \mathcal{S}_2$ satisfies the conditions of Lemma \ref{prop:hess}.

For any $i\in\{1,2\}$ and for any $\mathbf{W} \in \mathbb{R}^{d \times N}$ we have by Weyl's Inequality
\begin{align}
    \nabla^2 f_i(\mathbf{W}_{i,\ast}) - \|\nabla^2 f_i(\mathbf{W}) - \nabla^2 f_i(\mathbf{W}_{i,\ast})\|_2 \mathbf{I}_{Nd}
    &\preceq \nabla^2 f_i(\mathbf{W}) \preceq \nabla^2 f_1(\mathbf{W}_{i,\ast}) + \|\nabla^2 f_i(\mathbf{W}) - \nabla^2 f_i(\mathbf{W}_{i,\ast})\|_2\mathbf{I}_{Nd} \label{sc1}
\end{align}
Thus, we will control the spectrum of $\nabla^2 f_i(\mathbf{W})$ by controlling the spectrum of $\nabla^2 f_i(\mathbf{W}_{i,\ast})$ and by upper bounding $\|\nabla^2 f_i(\mathbf{W}) - \nabla^2 f_i(\mathbf{W}_{i,\ast})\|_2$. 

To control the spectrum of $\nabla^2 f_i(\mathbf{W}_{i,\ast})$ we use Lemma D.3 from \cite{zhong2017recovery}, which shows that for absolute constants $c_1$ and $c_2$ and each of the possible $\sigma$ functions listed in Theorem \ref{lem:sc},
\begin{align}
(c_1/(\kappa^2 \lambda))\mathbf{I}_{Nd} \preceq \nabla^2 f_1(\mathbf{W}_{i,\ast}) \preceq (c_2 N  s_{i,1}^{2c} )\mathbf{I}_{Nd}. \label{sc2}
\end{align}
Next, we apply Lemma D.10 from \cite{zhong2017recovery}, which likewise applies for all the mentioned $\sigma$, to obtain
\begin{align}
    \|\nabla^2 f_i(\mathbf{W}) - \nabla^2 f_i(\mathbf{W}_{i,\ast})\|_2 \leq c_3 N^2 s_{i,1}^{c} \|\mathbf{W} - \mathbf{W}_{i,\ast}\|_2 \label{sc3}
\end{align}
for a constant $c_3$. Next, for any $\mathbf{W}\in \mathcal{S}_i$, we have $\|\mathbf{W} - \mathbf{W}_{i,\ast}\|_2 = O(1/( s_{i,1}^c \lambda_i \kappa^2_i N^2))$. Therefore, by combining (\ref{sc1}), (\ref{sc2}) and (\ref{sc3}), we obtain that $\beta_i \mathbf{I}_{Nd} \preceq \nabla^2 f_i(\mathbf{W}) \preceq L_i \mathbf{I}_{Nd}$ for all  $\mathbf{W}\in\mathcal{S}_i$.

Finally we apply Lemma \ref{prop:hess} to complete the proof.
\end{proof}

Note that the constant $c$ used in Theorem \ref{lem:sc} is the homogeneity constant for which $\sigma$ satisfies Property 3.1 from \cite{zhong2017recovery}, namely, $0 \leq \sigma'(z) \leq a |z|^c \; \forall z \in \mathbb{R}$.

\section{Additional Experiments and Details} \label{app:experiments}

\vspace{2mm}
 \begin{figure*}[t]
\begin{center}
     \begin{subfigure}[b]{0.24\textwidth}
         \centering
         \includegraphics[width=\textwidth]{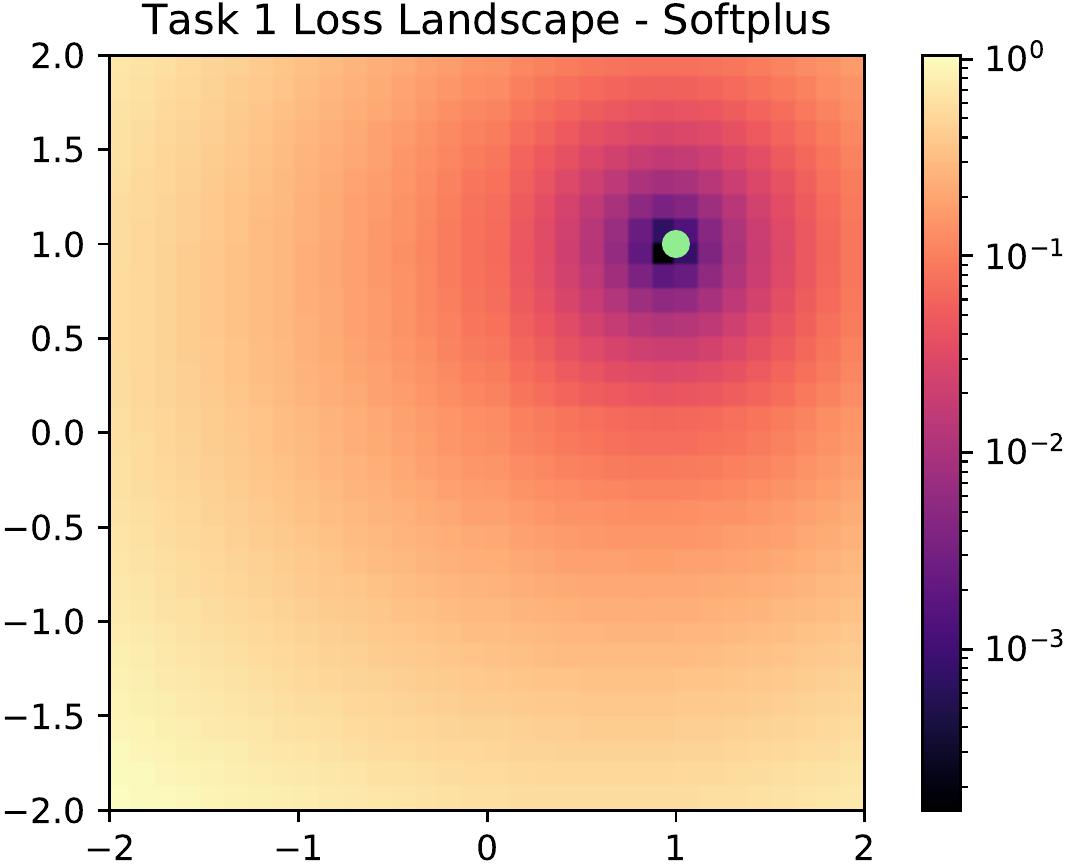}
         \caption{}
         \label{fig:y equals x}
     \end{subfigure}
     \begin{subfigure}[b]{0.24\textwidth}
         \centering
         \includegraphics[width=\textwidth]{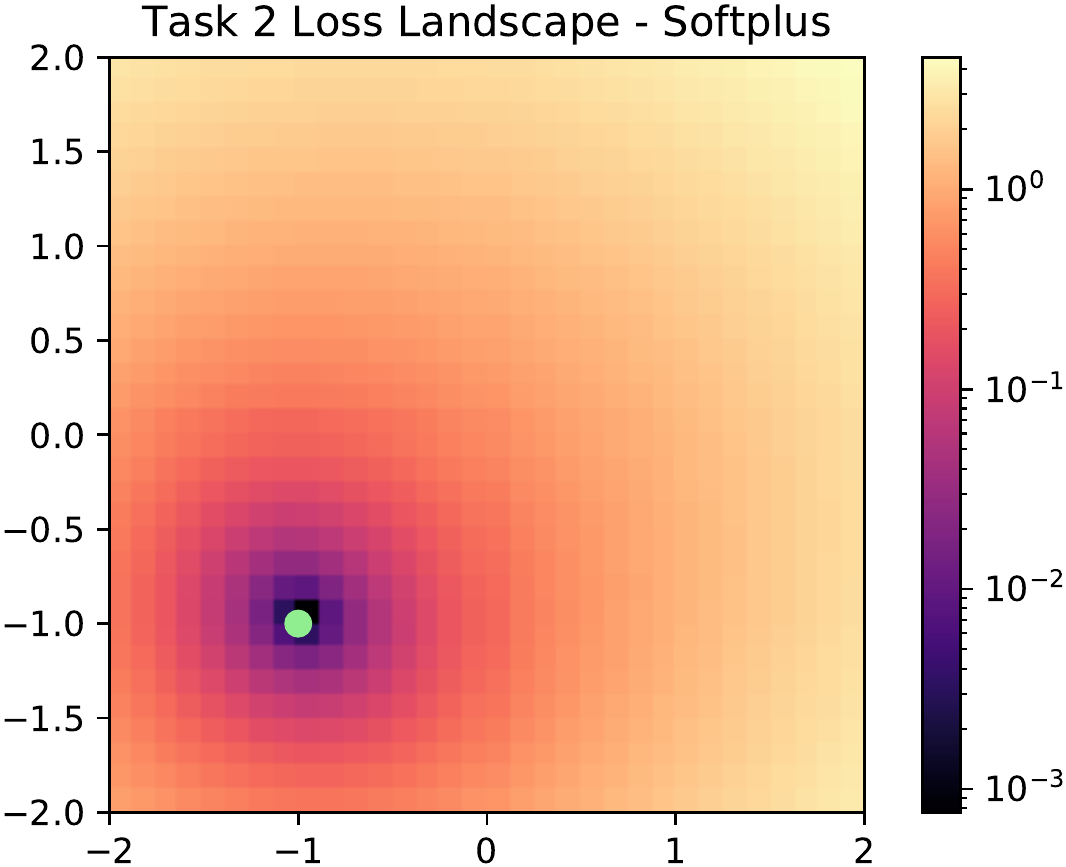}
         \caption{}
         \label{fig:three sin x}
     \end{subfigure}
     \begin{subfigure}[b]{0.24\textwidth}
         \centering
         \includegraphics[width=\textwidth]{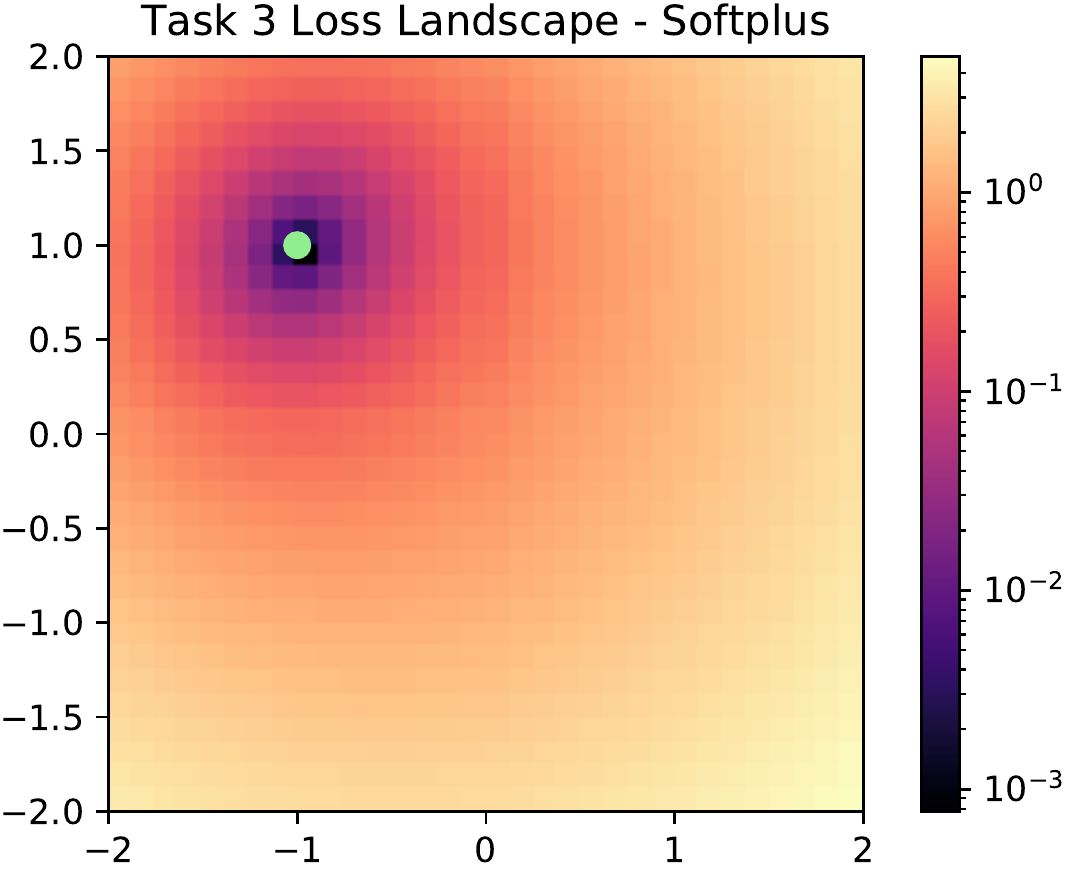}
         \caption{}
         \label{fig:five over x}
     \end{subfigure}
     \begin{subfigure}[b]{0.24\textwidth}
         \centering
         \includegraphics[width=\textwidth]{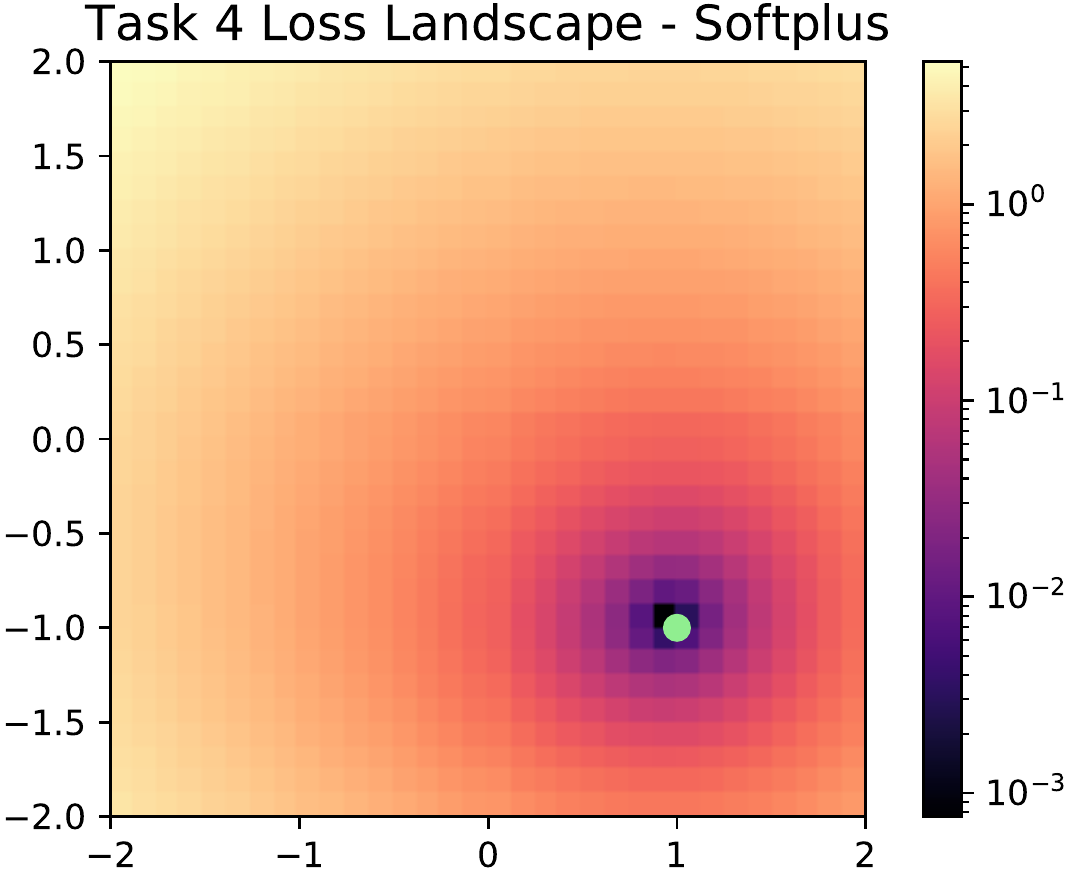}
         \caption{}
         \label{fig:five over x}
     \end{subfigure}
        \caption{Task loss functions $f_1,f_2,f_3,f_4$ for task environment in Figure \ref{fig:softplus} (a-b).} 
        \label{fig:three graphs}
\label{fig:softplus_indiv}
\end{center}
\end{figure*}

\vspace{2mm}
 \begin{figure*}[t]
\begin{center}
     \begin{subfigure}[b]{0.247\textwidth}
         \centering
         \includegraphics[width=\textwidth]{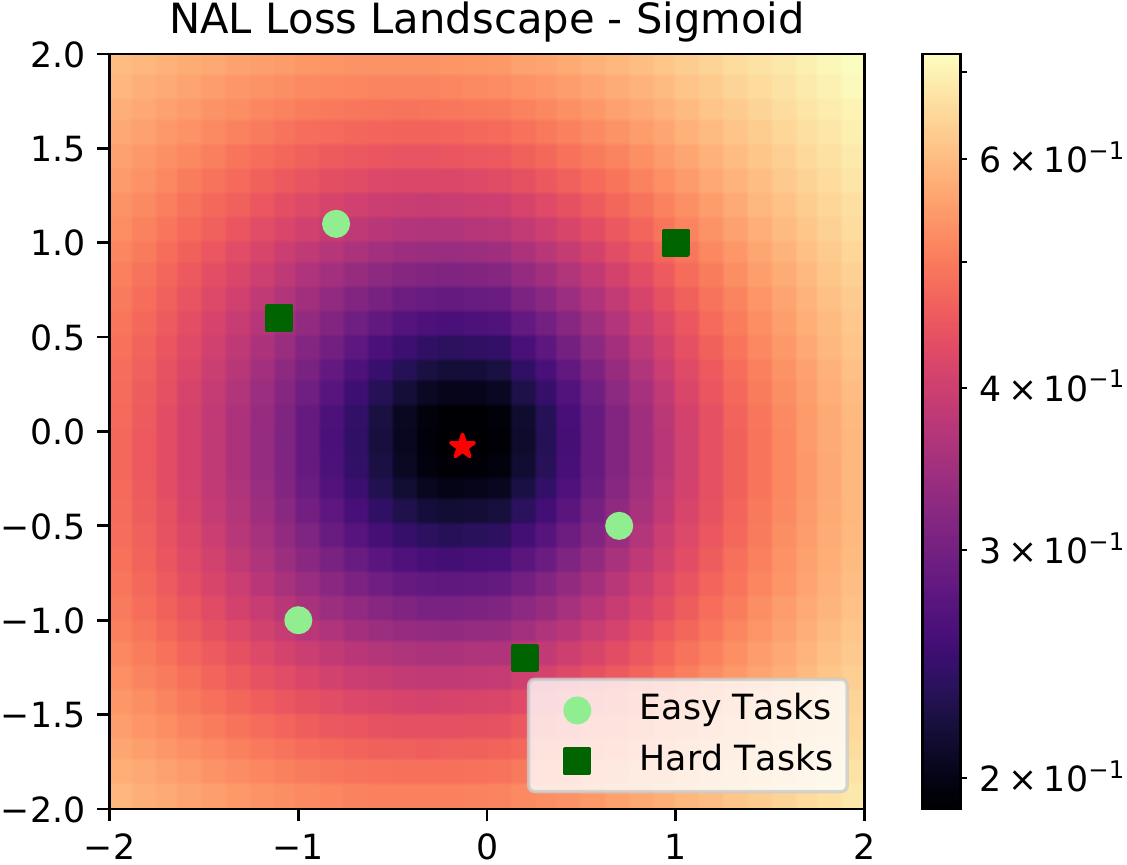}
         \caption{}
         \label{fig:y equals x}
     \end{subfigure}
     \begin{subfigure}[b]{0.237\textwidth}
         \centering
         \includegraphics[width=\textwidth]{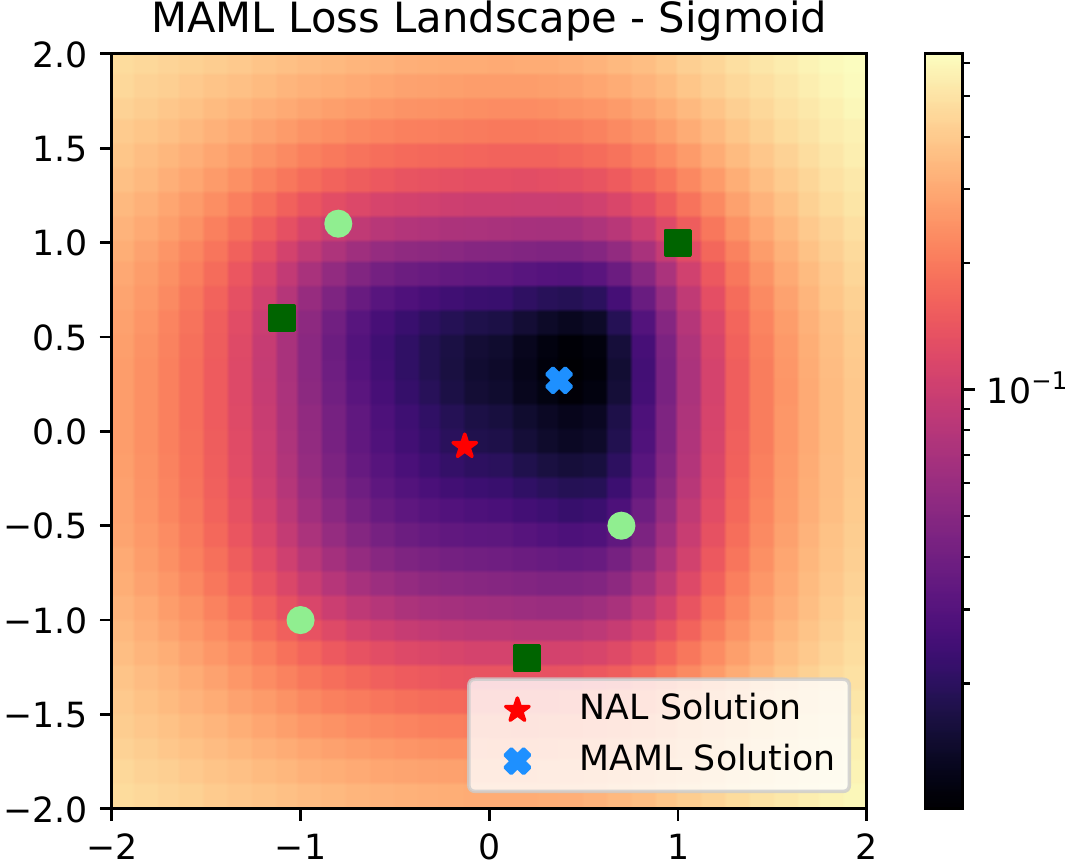}
         \caption{}
         \label{fig:three sin x}
     \end{subfigure}
     \hfill
     \begin{subfigure}[b]{0.237\textwidth}
         \centering
         \includegraphics[width=\textwidth]{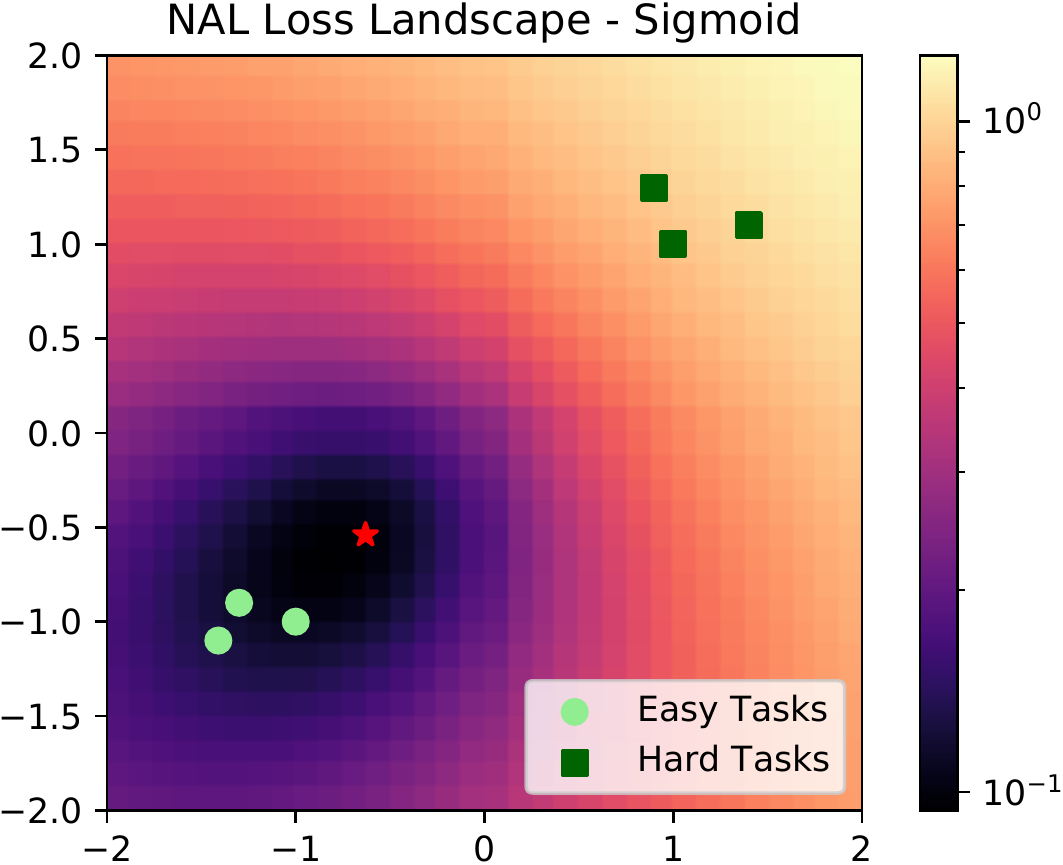}
         \caption{}
         \label{fig:five over x}
     \end{subfigure}
     \begin{subfigure}[b]{0.24\textwidth}
         \centering
         \includegraphics[width=\textwidth]{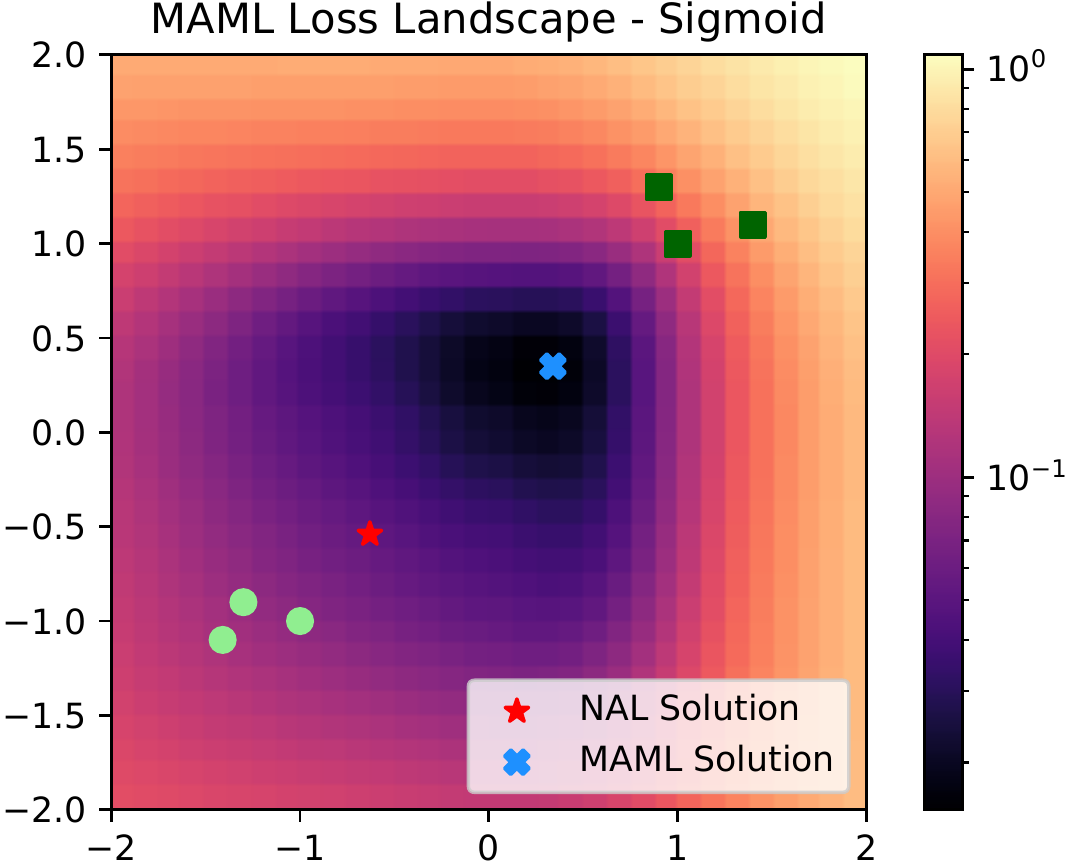}
         \caption{}
         \label{fig:five over x}
     \end{subfigure}
        \caption{
        Loss landscapes for NAL (a,c) and MAML (b,d) for two distinct task environments and Sigmoid activation. 
        }
\label{fig:sigmoid}
\end{center}
\end{figure*}

\begin{figure}[t]
\begin{center}
\centerline{\includegraphics[width=\columnwidth]{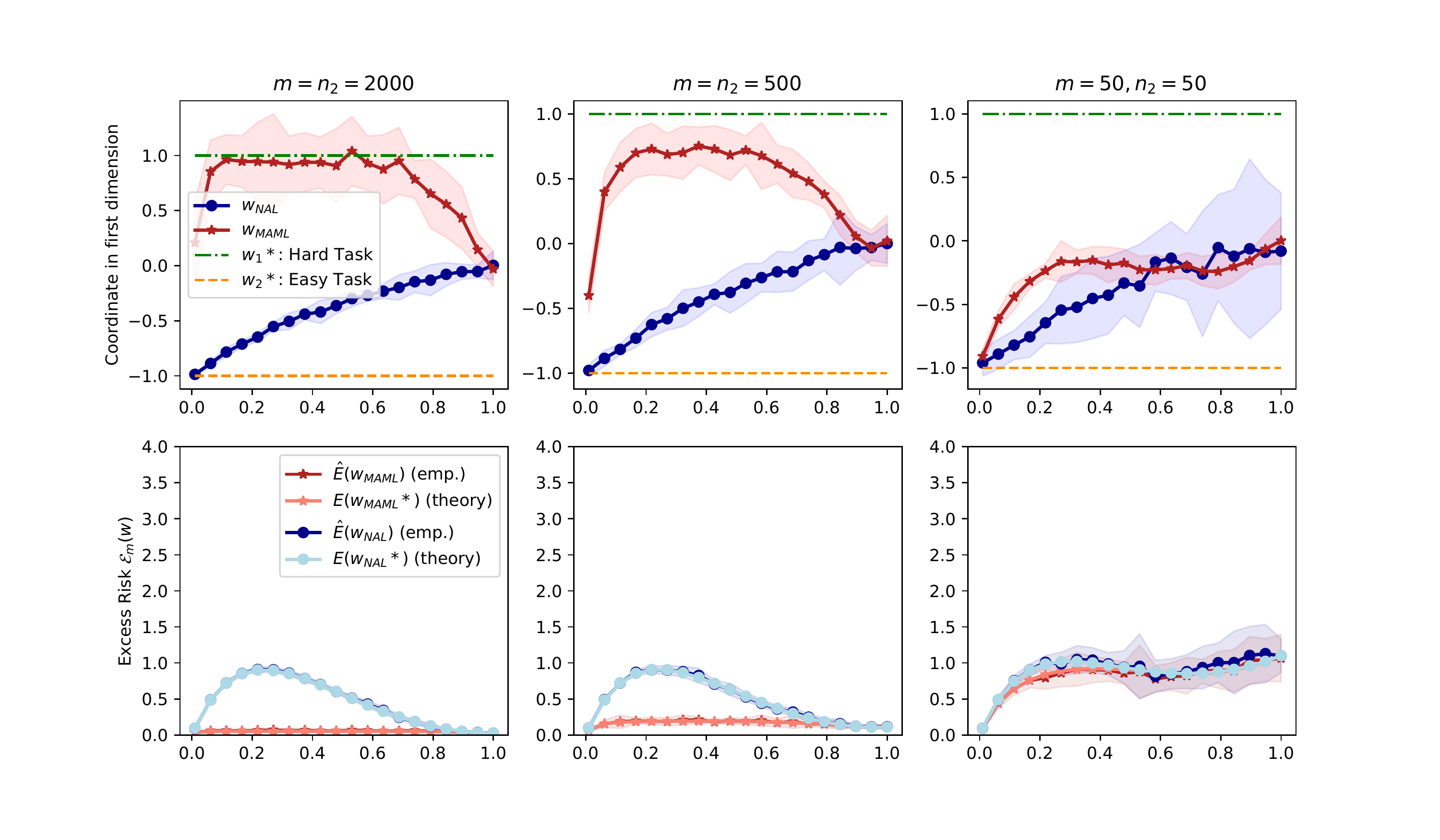}}
\caption{Version of Figure \ref{fig:11} with corresponding empirical results, including 95\% confidence intervals. The hardness parameter $\rho_H$ varies along the $x$-axis.}
\label{fig:lin_empirical}
\end{center}
  \vspace{-6mm}
\end{figure}

\subsection{Linear regression}
In all of the linear regression experiments, to run SGD on the MAML and NAL objectives, we sample one task from the corresponding environment on each iteration for 5,000 iterations. Each task has $n_1=25$ outer loop samples and varying $n_2$ inner loop samples for MAML, and $n=n_1+n_2$ samples for NAL. We appropriately tuned the `meta-learning rates', i.e. the learning rate with which $\mathbf{w}_{MAML}$ and $\mathbf{w}_{NAL}$ are updated after each full iteration, and used $n_2/10000$ for MAML and 0.025 for NAL. 
After 5,000 iterations, the excess risks of the final iterates were estimated using 3,000 randomly samples from the environment. We repeated this procedure ten times to obtain standard deviations.

We also ran an experiment to test whether up-weighting the hard tasks improves NAL performance, in light of our observation that MAML achieves performance gain by initializing closer to the hard task solutions. We use a similar environment as in Section 3.2. Tasks are 10-dimensional linear regression problems with $n_2=25$ inner loop samples and $n_1=500$ outer loop samples, and noise variance $\nu=0.01$. To implement up-weighting for NAL, we introduce a parameter $\zeta$ which is the ratio of the weight placed on the hard tasks to the weight placed on the easy tasks within each batch of tasks. We normalize the weights to sum to 1. For example, if a task batch consists of 6 hard tasks and 4 easy tasks, then NAL with $\zeta=2$ places a weight of $\frac{\zeta}{6\zeta+4}= \frac{1}{8}$ on the hard task loss functions and $\frac{1}{6\zeta+4}=\frac{1}{16}$ on the easy task loss functions (as opposed to $\frac{1}{10}$ on all tasks for standard NAL).

Easy tasks have hardness parameter $\rho_E=1$ and optimal solution drawn from $\mathcal{N}(\mathbf{0}_d, \mathbf{I}_d)$, and hard tasks have hardness parameter $\rho_H=0.1$ and optimal solution drawn from $\mathcal{N}(2\mathbf{1}_d,\mathbf{I}_d)$. We run NAL and MAML for 4000 iterations and use a task-batch size of 10 tasks per iteration, sampling easy and hard tasks with equal probability. We report the average coordinate value for the final solutions $\mathbf{w}_{NAL}$ and $\mathbf{w}_{MAML}$ and their test error (averaged across randomly sampled hard and easy tasks), plus or minus standard deviation over 5 independent random trials.


\begin{table}[t]
\caption{Effect of up-weighting hard tasks for NAL.}
\label{sample-table}
\begin{center}
\begin{tabular}{llllll}
\multicolumn{1}{c}{}  &\multicolumn{1}{c}{NAL}  &\multicolumn{1}{c}{ NAL, $\zeta=2$}  &\multicolumn{1}{c}{ NAL, $\zeta=5$}  &\multicolumn{1}{c}{ NAL, $\zeta=10$}  &\multicolumn{1}{c}{ MAML}
\\ \hline \\
Avg. coord.         &$0.18\pm .02$ & $0.33 \pm .02$& $0.61 \pm.03$& $0.87\pm .03$ & $1.58 \pm 0.01$ \\
Test error            & $0.78\pm .11$ & $0.64 \pm .08$& $0.51\pm 04 $ & $0.39\pm .05$ & $0.26 \pm 0.10$ \\
\end{tabular}
\end{center}
\end{table}

Note that average coordinate value closer to 2 means the solution is closer to optimal solutions of the hard tasks, while closer to 0 means it is closer to the optimal solutions of the easy tasks. Indeed we see that when NAL places more emphasis on the hard tasks, i.e. $\zeta$ is large, its performance correspondingly increases and approaches that of MAML. Indeed, it illustrates that MAML can be interpreted as a reweighing of tasks based on their level of hardness for GD. 

\begin{figure}[t]
\begin{center}
\centerline{\includegraphics[width=0.6\columnwidth]{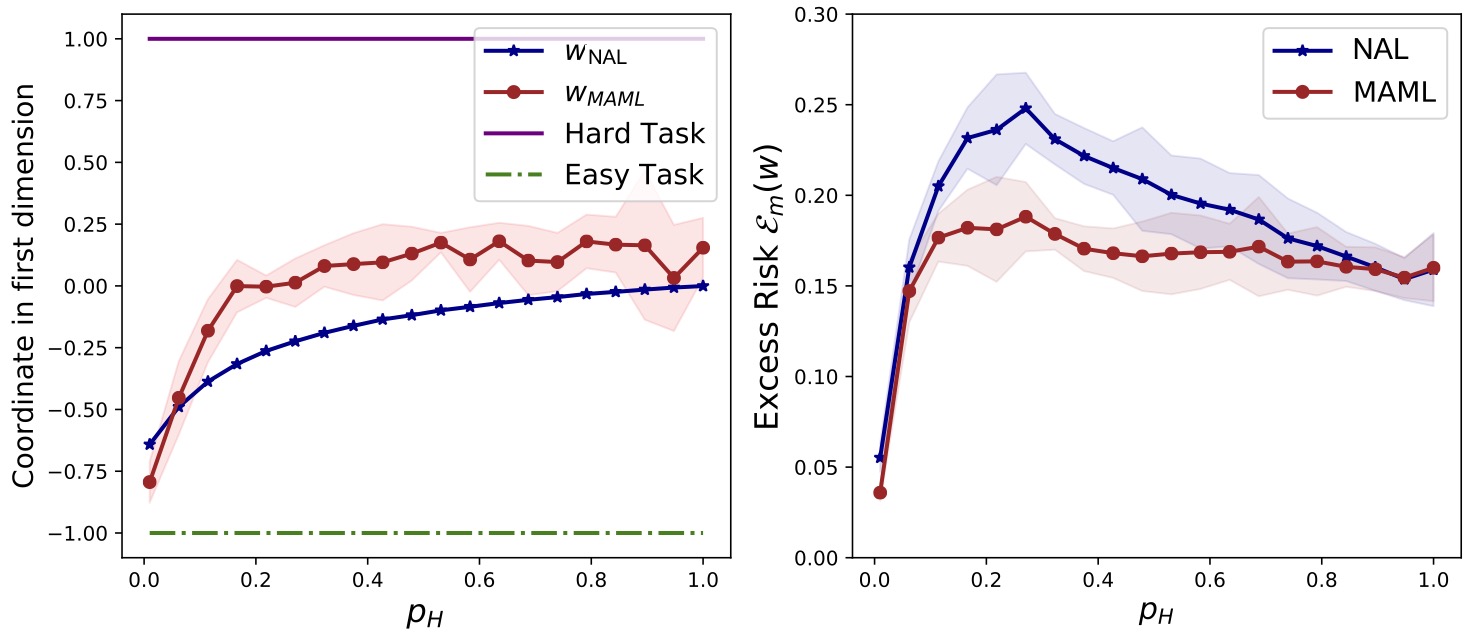}}
\caption{\textbf{Logistic regression} results in analogous
setting to $m\!=\!500$ 
  column in Figure 2 ($T=2$ tasks, $d=10$ dimensions). Recall that $\rho_{H}$ is the strong convexity parameter (data variance) for the hard task, which determines its hardness, while the strong convexity parameter of the easy task is 1. MAML again initializes closer to the harder task, and has smaller excess risk for appropriate $\rho_H$.}
\label{fig:log_reg}
\end{center}
  \vspace{-6mm}
\end{figure}

\subsection{Logistic regression}
We also experimented with logistic regression, please see Figure \ref{fig:log_reg} for details.

\subsection{One-layer neural networks}
We approximate loss landscapes for two types of activations: Softplus (Figures \ref{fig:softplus} and \ref{fig:softplus_indiv}) and Sigmoid (Figure \ref{fig:sigmoid}). To approximate each landscape, we sample Gaussian data as specified in Section \ref{section:nonlinear} and compute the corresponding  empirical losses as in equation (\ref{emp_orig}) for NAL and (\ref{emp_maml}) for MAML. We use $n=500$ for NAL and $n_1 = 20$ and $\tau = 25$ for MAML in all cases. We use $n_2= m = 250$ for Softplus and $n_2 = m = 80$ for Sigmoid. {\em Figure \ref{fig:sigmoid} shows that when the hard and easy task solutions have similar centroids ($R$ is small, as in subfigures (a)-(b)), then the NAL and MAML solutions are close and achieve similar post-adaptation loss (b). On the other hand, if the centroids are spread and the hard tasks are close (large $R$, small $r_H$ as in subfigures (c)-(d)), then the NAL and MAML solutions are far apart and MAML obtains significantly smaller post-adaptation loss (d).}

\subsection{Omniglot}
The full version of Table \ref{table:1}, with error bounds, is given in Table \ref{table:2}.

We use the same 4-layer convolutional neural network architecture as in \citep{finn2017model}, using code adapted from the code that implements in PyTorch the experiments in the paper \citep{antoniou2018train}. We ran SGD on the MAML and NAL objectives using 10 target samples per class for MAML, i.e. $n_1 = 5 \times 10$. Likewise,  $n= n_1 + 5\times n_2$ samples were used in each task to update $\mathbf{w}_{NAL}$ on each iteration, for $n_2 = 5\times K$. Eight tasks were drawn on each iteration for a total of 20,000 iterations. The outer-loop learning rate for MAML was tuned in $\{10^{-2},10^{-3},10^{-4},10^{-5},10^{-6}\}$ and selected as $10^{-3}$. Similarly, the learning rate for NAL was tuned in $\{10^{-2},10^{-3},10^{-4},10^{-5},10^{-6}\}$ and selected as $10^{-6}$. Both MAML and NAL used a task-specific adaptation (inner) learning rate of $10^{-1}$ as in \cite{antoniou2018train}. To select the alphabets corresponding to hard tasks, we ran MAML on tasks drawn from all 50 Omniglot alphabets, and took the 10 alphabets with the lowest accuracies. The train/test split for the other (easy) alphabets was random among the remaining alphabets.

To compute the accuracies in Table \ref{table:1}, we randomly sample 500 tasks from the training classes and 500 from the testing classes, with easy tasks and hard tasks being chosen with equal probability, and take the average accuracies after task-specific adaption from the fixed, fully trained model for each set of sampled tasks. MAML uses 1 step of SGD for task-specific adaptation during both training and testing, and NAL uses 1 for testing. The entire procedure was repeated 5 times with different random seeds to compute the average accuracies in Table \ref{table:1} and the confidence bounds in Figure \ref{table:2}.

\begin{table}[t]
 \vspace{-1mm}
\begin{center}
\begin{small}
\begin{sc}
\begin{tabular}{llllll}
\toprule
   \multicolumn{2}{c}{{Setting}} & \multicolumn{2}{c}{{Train Tasks}}  &  \multicolumn{2}{c}{{Test Tasks}}               \\
\cmidrule(r){1-2}
\cmidrule(r){3-4}
    \cmidrule(r){5-6}
$r_H$ &{Alg}  &{Easy} &{Hard} &{Easy} &{Hard}\\
\hline 
Large &MAML         & $99.2 {\pm .2}$ & $96.0 {\pm .6}$ & $98.0 \pm .2$ & $81.2 \pm .3$ \\
Large &NAL-1             & $69.4 \pm .4$ & $41.5 \pm .3$ & $57.8 \pm .4$ & $45.2 \pm .8$ \\
Large & NAL-10            & $70.0 \pm .8$ & $45.3 \pm .9$ & $67.2 \pm .3$ & $47.9 \pm .7$ \\
\hline 
Small &MAML         & $99.2 \pm .5$ & $99.1 \pm .2$ & $98.1 \pm .3$ & $95.4 \pm .3$ \\
Small &NAL-1             &$69.2 \pm .5$  & $46.0 \pm .6$ & $55.8 \pm .5$ & $45.8 \pm 1.0$ \\
Small &NAL-10             &$70.2 \pm .6$  & $44.0 \pm .8$ & $67.8 \pm .8$ & $48.9 \pm .8$ \\
\bottomrule
\end{tabular}
\end{sc}
\end{small}
\end{center}
\caption{Omniglot accuracies with 95\% confidence intervals.} 
\label{table:2}
\end{table}

\begin{table}[t]
\begin{center}
\begin{small}
\begin{tabular}{llllll}
\toprule
   \multicolumn{2}{c}{{Setting}} & \multicolumn{2}{c}{{Train Tasks}}  &  \multicolumn{2}{c}{{Test Tasks}}          \\
\cmidrule(r){1-2}
\cmidrule(r){3-4}
    \cmidrule(r){5-6}  
$p$ &{Alg.}  &{Easy} &{Hard} &{Easy} &{Hard} \\
\hline
    \multirow{2}{*}{0.99} & NAL     &  $78.2\pm .4$ & $9.9\pm 1.0$ &  $74.8\pm .4$  & $9.5\pm .6$  \\
    & MAML     & $92.9\pm .3$ & $50.1\pm 1.0$ & $84.6\pm .4$     & $21.7 \pm .3$  \\ 
    \hline
    \multirow{2}{*}{0.5} & NAL & $81.9\pm .3$   & $9.5\pm 1.0$ & $76.5\pm .4$ &  $8.6\pm .3$    \\
    & MAML     & $93.6\pm .5$ & $41.1\pm .3$ & $84.3\pm .6$ & $17.9\pm .4$   \\ 
    \hline
    \multirow{2}{*}{0.01} & NAL &  $82.4\pm 1.4$ & $7.6\pm .2$ &  $ 76.9\pm .9$  & $8.2\pm .4$  \\
    & MAML     & $94.0\pm 1.2$ & $15.7\pm .3$  &   $85.0\pm .2$   & $11.8\pm .3$    \\ 
    \bottomrule
\end{tabular}
\end{small}
\end{center}
\caption{FS-CIFAR100 accuracies with 95\% confidence intervals.} 
\label{table:fs-cifar2}
\end{table}

\subsection{FS-CIFAR100}
First, we note a typo from the main body: we used 5x as many samples for task-specific adaptation as stated in the main body  for both easy and hard tasks, that is, $(N,K)=(2,50)$ for easy tasks and $(N,K)=(20,5)$ for hard tasks.

The full version of Table \ref{table:fs-cifar} with error bounds  is given in Table \ref{table:fs-cifar2}.

We use the same CNN as for Omniglot but with a different number of input nodes and channels to account for the larger-sized CIFAR images, which are 32-by-32 RGB images (Omniglot images are 28-by-28 grayscale).
 We ran MAML and NAL (SGD on the MAML and NAL objectives, respectively) using 
 10 target (outer loop) samples per class per task for MAML, and 15 samples per class per task for NAL (recalling that NAL has no inner loop samples). Eight tasks were drawn on each iteration for a total of 20,000 iterations. The outer-loop learning rate for MAML was tuned in $\{10^{-2},10^{-3},10^{-4},10^{-5},10^{-6}\}$ and selected as $10^{-3}$. Similarly, the learning rate for NAL was tuned in $\{10^{-2},10^{-3},10^{-4},10^{-5},10^{-6}\}$ and selected as $10^{-6}$. Both MAML and NAL used a task-specific adaptation (inner loop) learning rate of $10^{-1}$ as in \cite{antoniou2018train}.

We trained for 10,000 iterations with a task batch size of 2, with hard tasks being chosen with probability $p$ and easy tasks with probability $1\!-\!p$. As in the Omniglot experiment, after completing training we randomly sample 500 tasks from the training classes and 500 from the testing classes, with easy tasks and hard tasks being chosen with equal probability. We then take the average accuracies after task-specific adaption from the fixed, fully trained model for each set of sampled tasks. NAL uses 5 steps of task-specific adaptation for testing and MAML uses 1 for both training and testing. The entire procedure was repeated 5 times with different random seeds to compute average accuracies in Table \ref{table:fs-cifar} and the confidence bounds in Figure \ref{table:fs-cifar2}.

\section{
Multi-task Linear Regression Convergence Results
} 
\label{app:converge}

We motivate our analysis of the NAL and MAML population-optimal solutions for multi-task linear regression by showing that their respective empirical training solutions indeed converge to their population-optimal values.

First note that the empirical training problem for NAL can be written as
\begin{align}
    \min_{\mathbf{w}\in \mathbf{R}^d} \frac{1}{T}\sum_{i=1}^T \|\mathbf{X}_i(\mathbf{w} - \mathbf{w}_{i,\ast}) - \mathbf{z}_i\|_2^2,
\end{align}
where the $j$-element of $\mathbf{{z}}_{i} \in \mathbb{R}^n$ contains the noise for the $j$-th sample for task $\mathcal{T}_i$.
Taking the derivative with respect to $\mathbf{w}$ and setting it equal to zero yields that the NAL training solution is:
\begin{align} \label{erm_soln}
    \mathbf{w}_{NAL} &= \big(\sum_{i=1}^T  \mathbf{{X}}_{i}^\top \mathbf{{X}}_{i} \big)^{-1} \sum_{i=1}^T \left( \mathbf{{X}}_{i}^\top \mathbf{{X}}_{i} \mathbf{{w}}_i^\ast +  \mathbf{{X}}_{i}^\top \mathbf{{z}}_{i}\right),
\end{align}
assuming $\sum_{i=1}^T \mathbf{X}_i^\top \mathbf{X}_i$ is invertible. Similarly, using $\mathbf{{z}}^{out}_{i}$ and $\mathbf{{z}}^{in}_{i}$ to denote noise vectors, as well as 
$\mathbf{\hat{Q}}_{i,j} = \mathbf{\hat{P}}_{i,j}(\mathbf{{X}}^{out}_{i,j})^\top \mathbf{{X}}^{out}_{i,j} \mathbf{\hat{P}}_{i,j}$ where $\mathbf{\hat{P}}_{i,j}=\mathbf{I}_d - \frac{\alpha}{n_2} (\mathbf{{X}}^{in}_{i,j})^\top \mathbf{{X}}^{in}_{i,j}$, we have that the empirical training problem for MAML is
\begin{align}
     \min_{\mathbf{w}\in \mathbf{R}^d} \frac{1}{T\tau}\sum_{i=1}^T \sum_{j=1}^\tau \|\mathbf{X}_i^{out} \mathbf{\hat{P}}_{i,j}(\mathbf{w} - \mathbf{w}_{i,\ast}) - \mathbf{z}_{i,j}^{out} -\frac{\alpha}{n_2} \mathbf{X}_{i,j}^{out}(\mathbf{X}_{i,j}^{in})^\top \mathbf{z}_{i,j}^{in}\|_2^2,
\end{align}
therefore
\begin{align} 
    \mathbf{w}_{MAML}\!
    &=\! \Big(\sum_{i=1}^T \sum_{j=1}^\tau \mathbf{\hat{Q}}_{i,j}\Big)^{-1}  \!\sum_{i=1}^T \sum_{j=1}^\tau \bigg( \mathbf{\hat{Q}}_{i,j} \mathbf{w}_i^\ast\! +\! \mathbf{\hat{P}}_{i,j} (\mathbf{X}^{out}_{i,j})^\top \mathbf{z}^{out}_{i,j}\! -\! \tfrac{\alpha}{n_2} \mathbf{\hat{P}}_{i,j} (\mathbf{X}_{i,j}^{out})^\top \mathbf{X}_{i,j}^{out} (\mathbf{{X}}^{in}_{i,j})^\top \mathbf{{z}}_{i,j}^{in} \bigg) 
    \label{maml_soln}
\end{align}
To show that $\mathbf{w}_{NAL}$ and $\mathbf{w}_{MAML}$ indeed converge to their population-optimal values as $T, n, n_1, \tau \rightarrow \infty$, we first make the following regularity assumptions.
\begin{assumption}\label{assump_boundL} There exists
$B > 0 $ s.t. $\|\mathbf{w}_i^\ast\| \leq B \; \forall i$.
\end{assumption}
\begin{assumption}\label{assump1} There exists
$\beta, L>0$ s.t. $\beta \mathbf{I}_d \preceq \mathbf{\Sigma}_i \preceq L\mathbf{I}_d$ $\forall i$.
\end{assumption}
 Assumption \ref{assump_boundL} ensures that the task optimal solutions have bounded norm and Assumption \ref{assump1} ensures that the data covariances are positive definite with bounded spectral norm.

\begin{remark}We would like to note that \cite{gao2020modeling} achieve a similar results as our Theorem \ref{thm:converge_general} and \ref{thm:convergeMAML_general}. However, we arrive at our results using distinct techniques from theirs. Moreover, our MAML convergence result (Theorem \ref{thm:convergeMAML_general}) accounts for convergence over task instances to the population-optimal solution for MAML when a finite number of samples are allowed for task-specific adaptation (a stochastic gradient step), whereas the analogous result in \cite{gao2020modeling} (Theorem 2) does not: it assumes $\tau =1$ and shows convergence as $n_1=n_2\rightarrow \infty$. Since their result relies on $n_2 \rightarrow \infty$, it shows convergence to the population-optimal MAML solution when an {\em infinite} amount of samples are allowed for the inner task-specific update, i.e. a full gradient step. Our dimension-dependence is significantly worse, than theirs, which suggests the extra complexity of the MAML objective with finite samples allowed for task-specific adaptation.
\end{remark}


\subsection{NAL convergence}

Define $\text{Var}(\mathbf{\Sigma}_i) \coloneqq \| \mathbb{E}_i[(\mathbf{\Sigma}_i - \mathbb{E}_{i'}[\mathbf{\Sigma}_{i'}])^2]\|$ and $\text{Var}(\mathbf{\Sigma}_i \mathbf{w}_{i,\ast}) \coloneqq \| \mathbb{E}_i[(\mathbf{\Sigma}_i\mathbf{w}_{i,\ast} - \mathbb{E}_{i'}[\mathbf{\Sigma}_{i'}\mathbf{w}_{i,\ast}])^2]\|$.

\begin{theorem} (NAL Convergence) \label{thm:converge_general}
Under Assumptions \ref{assump_boundL} and \ref{assump1}, the distance of the NAL training solution (\ref{erm_soln}) to its population-optimal value (\ref{maml_pop}) is bounded as
\begin{align}
    \|\mathbf{w}_{NAL} - \mathbf{w}_{NAL}^{\ast} \|_2
     &\leq \left(\frac{c'\sqrt{d}+\beta \sqrt{\frac{dK}{c''}\log(200n)} + \frac{K}{c''}\log(200n)}{\beta^2 \sqrt{n}}+\frac{\sqrt{\log(200T)}L^2 B}{\beta^2 n}\right)\nonumber \\
     &\quad +  \frac{\sqrt{\text{Var}(\mathbf{\Sigma}_i\mathbf{w}_{i,\ast})\log(200d)}}{\beta\sqrt{T}} +  \frac{LB\;\sqrt{\text{Var}(\mathbf{\Sigma}_i)\log(200d)}}{\beta^2\sqrt{T}} 
\end{align}
with probability at least 0.96,
where $K$ is the maximum sub-exponential norm of  pairwise products of data and noise samples, 
for some absolute constants $c, c',$  and  $n \geq 4\left(\frac{cL\sqrt{d}+ \sqrt{c^2 L^2 d + 4 \beta c L \sqrt{\log(200T)}}}{2\beta}\right)^2$ and $T > L^2 B^2 (\text{Var}(\mathbf{\Sigma}_i)+\text{Var}(\mathbf{\Sigma}_i\mathbf{w}_i^\ast))/9$. Informally, 
\begin{align}
    \|\mathbf{w}_{NAL} - \mathbf{w}_{NAL}^{\ast} \|_2 \leq \Tilde{\mathcal{O}}\left(\frac{\sqrt{d}}{\sqrt{n}} + \frac{1}{\sqrt{T}}\right)
\end{align}
with probability at least $1-o(1)$, as long as $n = \Tilde{\Omega}(d)$ and $T = \Tilde{\Omega}( \text{Var}(\mathbf{\Sigma}_i)+\text{Var}(\mathbf{\Sigma}_i\mathbf{w}_i^\ast))$, where $\tilde{\mathcal{O}}$ and $\tilde{\Omega}$ exclude log factors.
\end{theorem}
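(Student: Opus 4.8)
The plan is to decompose the deviation $\mathbf{w}_{NAL}-\mathbf{w}_{NAL}^\ast$ into a \emph{within-task} sampling error (controlled by $n$) and an \emph{across-task} averaging error (controlled by $T$), after first establishing that the empirical aggregate covariance is well-conditioned. Introduce the shorthand $\hat{\mathbf{\Sigma}}_i \coloneqq \tfrac1n \mathbf{X}_i^\top \mathbf{X}_i$, $\bar{\mathbf{\Sigma}} \coloneqq \tfrac1T\sum_{i=1}^T \hat{\mathbf{\Sigma}}_i$, $\mathbf{\Sigma}\coloneqq \mathbb{E}_i[\mathbf{\Sigma}_i]$, and the noise aggregate $\bar{\mathbf{n}}\coloneqq \tfrac{1}{nT}\sum_{i=1}^T \mathbf{X}_i^\top \mathbf{z}_i$. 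Rewriting (\ref{erm_soln}) gives $\mathbf{w}_{NAL}=\bar{\mathbf{\Sigma}}^{-1}\big(\tfrac1T\sum_i \hat{\mathbf{\Sigma}}_i\mathbf{w}_i^\ast+\bar{\mathbf{n}}\big)$, and since $\mathbf{w}_{NAL}^\ast=\mathbf{\Sigma}^{-1}\mathbb{E}_i[\mathbf{\Sigma}_i\mathbf{w}_i^\ast]$, the elementary identity
\[
\mathbf{w}_{NAL}-\mathbf{w}_{NAL}^\ast = \bar{\mathbf{\Sigma}}^{-1}\Big( \Delta_b + \bar{\mathbf{n}} - (\bar{\mathbf{\Sigma}}-\mathbf{\Sigma})\,\mathbf{w}_{NAL}^\ast \Big), \quad \Delta_b\coloneqq \tfrac1T\textstyle\sum_i \hat{\mathbf{\Sigma}}_i\mathbf{w}_i^\ast - \mathbb{E}_i[\mathbf{\Sigma}_i\mathbf{w}_i^\ast],
\]
yields by the triangle inequality $\|\mathbf{w}_{NAL}-\mathbf{w}_{NAL}^\ast\|_2 \le \|\bar{\mathbf{\Sigma}}^{-1}\|_2\big(\|\Delta_b\|_2+\|\bar{\mathbf{n}}\|_2+\|\bar{\mathbf{\Sigma}}-\mathbf{\Sigma}\|_2\,\|\mathbf{w}_{NAL}^\ast\|_2\big)$.

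The first substantive step is to bound $\|\bar{\mathbf{\Sigma}}^{-1}\|_2$. Because Assumption~\ref{assump1} forces $\mathbf{\Sigma}_i\succeq\beta\mathbf{I}_d$ for every task, it suffices to control the within-task fluctuation: applying a Gaussian covariance concentration bound to each task and a union bound over the $T$ tasks shows $\|\hat{\mathbf{\Sigma}}_i-\mathbf{\Sigma}_i\|_2\le \beta/2$ for all $i$ simultaneously once $n$ exceeds the stated threshold $4\big(\tfrac{cL\sqrt d+\sqrt{c^2L^2d+4\beta cL\sqrt{\log(200T)}}}{2\beta}\big)^2$. This gives $\bar{\mathbf{\Sigma}}\succeq(\beta/2)\mathbf{I}_d$, hence $\|\bar{\mathbf{\Sigma}}^{-1}\|_2\le 2/\beta$, and also bounds $\|\mathbf{w}_{NAL}^\ast\|_2\le (L/\beta)B$ via Assumption~\ref{assump_boundL}; the $\log(200T)$ inside the threshold is exactly the cost of this union bound and explains the lower-order $\tfrac{\sqrt{\log(200T)}L^2B}{\beta^2 n}$ term.

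Next I would bound the remaining pieces, each split into a within-task and an across-task component. For the covariance term, $\|\bar{\mathbf{\Sigma}}-\mathbf{\Sigma}\|_2 \le \|\tfrac1T\sum_i(\hat{\mathbf{\Sigma}}_i-\mathbf{\Sigma}_i)\|_2+\|\tfrac1T\sum_i\mathbf{\Sigma}_i-\mathbf{\Sigma}\|_2$: the within-task piece is $\tilde{\mathcal{O}}(L\sqrt{d/n})$ by covariance concentration and, after multiplication by $\|\bar{\mathbf{\Sigma}}^{-1}\|_2\|\mathbf{w}_{NAL}^\ast\|_2$, supplies the leading $\tfrac{c'\sqrt d}{\beta^2\sqrt n}$ contribution, while the across-task piece is $\tilde{\mathcal{O}}(\sqrt{\text{Var}(\mathbf{\Sigma}_i)/T})$ by matrix Bernstein and supplies the $\tfrac{LB\sqrt{\text{Var}(\mathbf{\Sigma}_i)\log(200d)}}{\beta^2\sqrt T}$ term. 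An identical split of $\Delta_b$ gives a within-task piece $\tilde{\mathcal{O}}(LB\sqrt{d/n})$ and an across-task piece $\tilde{\mathcal{O}}(\sqrt{\text{Var}(\mathbf{\Sigma}_i\mathbf{w}_{i,\ast})/T})$ by a vector Bernstein inequality, the latter yielding the $\tfrac{\sqrt{\text{Var}(\mathbf{\Sigma}_i\mathbf{w}_{i,\ast})\log(200d)}}{\beta\sqrt T}$ term. The informal $\tilde{\mathcal{O}}$ statement then follows by inserting $n=\tilde{\Omega}(d)$ and $T=\tilde{\Omega}(\text{Var})$ and absorbing constants and logarithmic factors.

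The main obstacle is the noise aggregate $\|\bar{\mathbf{n}}\|_2=\|\tfrac{1}{nT}\sum_i\mathbf{X}_i^\top\mathbf{z}_i\|_2$. Since each coordinate of $\mathbf{X}_i^\top\mathbf{z}_i$ is a sum of products of a Gaussian feature and the Gaussian label noise, these summands are only \emph{sub-exponential} (with norm $K$), not sub-Gaussian, so one must use a Bernstein inequality exhibiting both a sub-Gaussian $\sqrt{\cdot}$ regime and a heavier linear regime. This is precisely the source of the two distinct numerator terms $\beta\sqrt{\tfrac{dK}{c''}\log(200n)}$ and $\tfrac{K}{c''}\log(200n)$ of the first grouped factor, and tracking the joint dependence on $d$, $n$, $\beta$, and $K$ here, together with how the factors of $\|\bar{\mathbf{\Sigma}}^{-1}\|_2$ and $\|\mathbf{w}_{NAL}^\ast\|_2$ combine into the common $\beta^2\sqrt n$ denominator, is the most delicate calculation. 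A secondary bookkeeping obstacle is allocating the failure probabilities across the four concentration events (per-task covariance, matrix Bernstein, vector Bernstein, and the sub-exponential noise bound) so that a single union bound delivers the stated overall success probability $0.96$; the numeric constants such as the $200$'s are chosen to make this allocation close.
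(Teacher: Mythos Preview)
Your proposal is correct and recovers the same bound, but it is organized differently from the paper. The paper introduces the intermediate quantity $\mathbf{w}_{NAL}^{(T)\ast}\coloneqq(\tfrac1T\sum_i\mathbf{\Sigma}_i)^{-1}\tfrac1T\sum_i\mathbf{\Sigma}_i\mathbf{w}_i^\ast$ and splits via the triangle inequality into $\|\mathbf{w}_{NAL}^{(T,n)}-\mathbf{w}_{NAL}^{(T)\ast}\|+\|\mathbf{w}_{NAL}^{(T)\ast}-\mathbf{w}_{NAL}^\ast\|$, so that the first piece is a pure ``finite-$n$ at fixed tasks'' error and the second is a pure ``finite-$T$'' error. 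For the first piece the paper does an algebraic rearrangement of $\mathbf{C}^{-1}\sum_i\mathbf{B}_i\mathbf{w}_i^\ast$ (swapping the $i,i'$ summations) that makes the bound depend on the differences $\mathbf{w}_i^\ast-\mathbf{w}_{i'}^\ast$ and hence on $2B$, whereas your identity routes the same contribution through $\|\mathbf{w}_{NAL}^\ast\|_2\le LB/\beta$. Both lead to the stated $\tilde{\mathcal O}(\sqrt{d/n})$ scaling; your route is shorter, the paper's route gives a constant that does not carry the extra factor $L/\beta$ on that particular term. The remaining ingredients---per-task Gaussian covariance concentration with a union bound over $T$ (producing the $\sqrt{\log(200T)}/n$ correction and the threshold on $n$), matrix Bernstein for $\tfrac1T\sum_i\mathbf{\Sigma}_i$ and $\tfrac1T\sum_i\mathbf{\Sigma}_i\mathbf{w}_i^\ast$ (producing the two $1/\sqrt{T}$ terms with the $\log(200d)$ factor), and a coordinate-wise sub-exponential Bernstein on $\mathbf{X}_i^\top\mathbf{z}_i$ (producing both the $\sqrt{dK\log n}$ and $K\log n$ numerators)---are exactly the tools the paper invokes, and your allocation of failure probabilities matches how the paper arrives at the $0.96$ figure.
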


\begin{proof}
We first introduce notation to capture the  dependency of the empirical training solution on $T$ and $n$. In particular, for any $T, n \geq 1$, we define
$$\mathbf{w}_{NAL}^{(T,n)} \coloneqq (\frac{1}{Tn}\sum_{i=1}^T \mathbf{X}_i^\top \mathbf{X}_i)^{-1} \frac{1}{Tn}\sum_{i=1}^T \mathbf{X}_i^\top \mathbf{X}_i \mathbf{w}_i^\ast +\left(\frac{1}{Tn}\sum_{i=1}^T  \mathbf{X}_{i}^\top \mathbf{X}_{i} \right)^{-1} \frac{1}{Tn}\sum_{i=1}^T \mathbf{X}_{i}^\top \mathbf{z}_{i}.$$
We next fix the number of tasks $T$ and define the asymptotic solution over $T$ tasks as the number of samples approaches infinity, i.e.,  $n \rightarrow \infty$, namely we define $$\mathbf{w}_{NAL}^{(T)\ast} \coloneqq (\sum_{i=1}^T \mathbf{\Sigma}_i)^{-1} \sum_{i=1}^T \mathbf{\Sigma}_i \mathbf{w}_i^\ast.$$ 

Using the triangle inequality we have
\begin{align}
    \|\mathbf{w}_{NAL}^{(T,n)} - \mathbf{w}_{NAL}^{\ast}\| &= \|\mathbf{w}_{NAL}^{(T,n)} -\mathbf{w}_{NAL}^{(T)\ast} + \mathbf{w}_{NAL}^{(T)\ast} - \mathbf{w}_{NAL}^{\ast}\| \nonumber \\
    &\leq \|\mathbf{w}_{NAL}^{(T,n)} -\mathbf{w}_{NAL}^{(T)\ast} \|+ \| \mathbf{w}_{NAL}^{(T)\ast} - \mathbf{w}_{NAL}^{\ast}\| \label{fin1} 
\end{align}
We will first bound the first term in (\ref{fin1}), which we denote as $\theta =\|\mathbf{w}_{NAL}^{(T,n)} -\mathbf{w}_{NAL}^{(T)\ast}\|$ for convenience.
For this part we implicitly condition on the choice of $T$ training tasks to obtain a bound of the form $\mathbb{P}(\|\theta\| \geq \epsilon | \{\mathcal{T}_i\}_i) \leq 1 - \delta$. Since this holds for all $\{\mathcal{T}_i\}_i$, we will obtain the final result $\mathbb{P}(\|\theta\| \geq \epsilon ) \leq 1 - \delta$ by the Law of Total Probability. We thus make the conditioning on $\{\mathcal{T}_i\}_i$ implicit for the rest of the analysis dealing with $\theta$.

We use the triangle inequality to separate $\theta$ into a term dependent on the data variance and a term dependent on the noise variance as follows:
\begin{align}
    \theta
     &= \Bigg\| \sum_{i=1}^T \Bigg[  \frac{1}{Tn} \left(\sum_{i'=1}^T \frac{1}{Tn}  \mathbf{X}_{i'}^\top \mathbf{X}_{i'} \right)^{-1} \!\!\!\mathbf{X}_i^\top \mathbf{X}_i - \left(\frac{1}{T}\sum_{i'=1}^T \mathbf{\Sigma}_{i'} \right)^{-1}  \frac{1}{T}\mathbf{\Sigma}_i\Bigg] \mathbf{w}_i^\ast \nonumber \\
     &\quad \quad + \left(\frac{1}{Tn}\sum_{i=1}^T  \mathbf{X}_i^\top \mathbf{X}_i \right)^{-1} \left(\frac{1}{Tn}\sum_{i=1}^T \mathbf{X}_{i}^\top\mathbf{z}_i \right) \Bigg\| \nonumber \\
     &\leq \left\| \sum_{i=1}^T \Bigg[  \frac{1}{Tn} \left(\sum_{i'=1}^T \frac{1}{Tn}  \mathbf{X}_{i'}^\top  \mathbf{X}_{i'} \right)^{-1}\!\!\! \mathbf{X}_{i}^\top  \mathbf{X}_{i} - \left(\frac{1}{T}\sum_{i'=1}^T \mathbf{\Sigma}_{i'} \right)^{-1} \!\!\! \frac{1}{T}\mathbf{\Sigma}_i\Bigg] \mathbf{w}_i^\ast \right\| \nonumber \\
     &\quad \quad + \left\| \left(\frac{1}{Tn} \sum_{i=1}^T \mathbf{X}_{i}^\top \mathbf{X}_{i} \right)^{-1} \!\!\!\left(\sum_{i=1}^T \frac{1}{Tn}\mathbf{X}_{i}^\top \mathbf{z}_{i}\right) \right\|  \label{tri_noise}
\end{align}
where (\ref{tri_noise}) follows from the triangle inequality. We analyze the two terms in (\ref{tri_noise}) separately, starting with the first term, which we denote by $\vartheta$, namely
\begin{align} \label{defvar}
   \vartheta \coloneqq  \left\| \sum_{i=1}^T \Bigg[  \frac{1}{Tn} \left(\sum_{i'=1}^T \frac{1}{Tn}  \mathbf{X}_{i'}^\top  \mathbf{X}_{i'} \right)^{-1} \mathbf{X}_{i}^\top  \mathbf{X}_{i} - \left(\frac{1}{T}\sum_{i'=1}^T \mathbf{\Sigma}_{i'} \right)^{-1}  \frac{1}{T}\mathbf{\Sigma}_i\Bigg] \mathbf{w}_i^\ast \right\|
\end{align}
We define the matrix $\mathbf{A}_i$ for each $i$:
\begin{equation}
    \mathbf{A}_i = \frac{1}{Tn} \left(\sum_{i'=1}^T \frac{1}{Tn}  \mathbf{X}_{i'}^\top  \mathbf{X}_{i'} \right)^{-1} \mathbf{X}_{i}^\top  \mathbf{X}_{i} - \left(\frac{1}{T}\sum_{i'=1}^T  \mathbf{\Sigma}_{i'} \right)^{-1} \frac{1}{T} \mathbf{\Sigma}_i,
\end{equation}
which implies that $  \vartheta \coloneqq  \| \sum_{i=1}^T \mathbf{A}_i \mathbf{w}_i^\ast \|$.
We proceed to bound the maximum singular value of $\mathbf{A}_i$ with high probability. Note that if we define $\mathbf{C}$ as 
$$
\mathbf{C}:=\frac{1}{Tn} \sum_{i'=1}^T  \mathbf{X}_{i'}^\top  \mathbf{X}_{i'} 
$$
then $\mathbf{A}_i$ can be written as 
\begin{align}
\mathbf{A}_i &=  \mathbf{C}^{-1} \left(\frac{1}{Tn}  \mathbf{X}_{i}^\top  \mathbf{X}_{i} -  \left(\frac{1}{Tn} \sum_{i'=1}^T  \mathbf{X}_{i'}^\top  \mathbf{X}_{i'} \right) \left(\frac{1}{T} \sum_{i'=1}^T \mathbf{\Sigma}_{i'} \right)^{-1} \frac{1}{T}  \mathbf{\Sigma}_i\right) =\mathbf{C}^{-1} \mathbf{B}_i
\end{align}
where, defining $\mathbf{\Lambda} \coloneqq \frac{1}{T}\sum_{i''=1}^T \mathbf{\Sigma}_{i''}$ for notational convenience,
\begin{align}
    \mathbf{B}_i &= \left(\frac{1}{Tn}  \mathbf{X}_{i}^\top  \mathbf{X}_{i} - \left(\frac{1}{Tn}\sum_{i'=1}^T   \mathbf{X}_{i'}^\top  \mathbf{X}_{i',j} \right) \mathbf{\Lambda}^{-1}  \frac{1}{T}\mathbf{\Sigma}_i\right) \nonumber \\ 
    &= \left(\frac{1}{Tn} \mathbf{X}_{i}^\top  \mathbf{X}_{i}\mathbf{\Lambda}^{-1}  \mathbf{\Lambda} - \left(\frac{1}{Tn}\sum_{i'=1}^T \mathbf{X}_{i'}^\top \mathbf{X}_{i'} \right) \mathbf{\Lambda}^{-1}  \frac{1}{T}\mathbf{\Sigma}_i\right) \nonumber \\
    &= \left(\frac{1}{T^2n} \sum_{i'=1}^T  \mathbf{X}_{i}^\top  \mathbf{X}_{i}\mathbf{\Lambda}^{-1}  \mathbf{\Sigma}_{i'} - \left(\frac{1}{T^2n}\sum_{i'=1}^T   \mathbf{X}_{i'}^\top  \mathbf{X}_{i'} \right) \mathbf{\Lambda}^{-1}  \mathbf{\Sigma}_i\right) \nonumber \\
    &= \frac{1}{T^2n}\sum_{i'=1}^T  \left(  \mathbf{X}_{i}^\top  \mathbf{X}_{i}\mathbf{\Lambda}^{-1}    \mathbf{\Sigma}_{i'} -  \mathbf{X}_{i'}^\top  \mathbf{X}_{i'} \mathbf{\Lambda}^{-1}  \mathbf{\Sigma}_i\right)
    \end{align}
Adding and subtracting terms, we have
    \begin{align}
    \mathbf{B}_i&= \frac{1}{T^2n}\sum_{i'=1}^T  \bigg(  \mathbf{X}_{i}^\top  \mathbf{X}_{i}\mathbf{\Lambda}^{-1}    \mathbf{\Sigma}_{i'} - n\mathbf{\Sigma}_i \mathbf{\Lambda}^{-1}\mathbf{\Sigma}_{i'} + n\mathbf{\Sigma}_{i'} \mathbf{\Lambda}^{-1}\mathbf{\Sigma}_{i} - \mathbf{X}_{i'}^\top  \mathbf{X}_{i'} \mathbf{\Lambda}^{-1}  \mathbf{\Sigma}_i\bigg) \nonumber \\
    &\quad \quad + \frac{1}{T^2}\sum_{i'=1}^T \bigg(\mathbf{\Sigma}_i \mathbf{\Lambda}^{-1}\mathbf{\Sigma}_{i'} - \mathbf{\Sigma}_{i'} \mathbf{\Lambda}^{-1}\mathbf{\Sigma}_{i} \bigg)\nonumber \\
    &= \frac{1}{T}   \left(\frac{1}{n} \mathbf{X}_{i}^\top  \mathbf{X}_{i} - \mathbf{\Sigma}_i\right)\mathbf{\Lambda}^{-1}    \left(\frac{1}{T}\sum_{i'=1}^T  \mathbf{\Sigma}_{i'}\right) + \frac{1}{T^2}\sum_{i'=1}^T \left(\mathbf{\Sigma}_{i'} - \frac{1}{n} \mathbf{X}_{i'}^\top  \mathbf{X}_{i'} \right)\mathbf{\Lambda}^{-1}\mathbf{\Sigma}_{i} \nonumber \\
    &\quad \quad + \frac{1}{T}\mathbf{\Sigma}_i \mathbf{\Lambda}^{-1}\left(\frac{1}{T}\sum_{i'=1}^T  \mathbf{\Sigma}_{i'}\right) - \frac{1}{T}\left(\frac{1}{T}\sum_{i'=1}^T \mathbf{\Sigma}_{i'}\right) \mathbf{\Lambda}^{-1}\mathbf{\Sigma}_{i} \nonumber \\
     &= \frac{1}{T}   \left(\frac{1}{n} \mathbf{X}_{i}^\top  \mathbf{X}_{i} - \mathbf{\Sigma}_i\right) + \frac{1}{T^2}\sum_{i'=1}^T \left(\mathbf{\Sigma}_{i'} - \frac{1}{n} \mathbf{X}_{i'}^\top  \mathbf{X}_{i'} \right)\mathbf{\Lambda}^{-1}\mathbf{\Sigma}_{i} \nonumber 
\end{align}
where the last line follows by the definition of $\mathbf{\Lambda}$.

Next, define 
\begin{align}
    \mathbf{Z}_i &=  \left(\mathbf{\Sigma}_{i} -  \frac{1}{n} \mathbf{X}_{i}^\top  \mathbf{X}_{i}  \right) = \frac{1}{n} \sum_{h=1}^{n} \left(\mathbf{\Sigma}_{i} -  \mathbf{x}_{i}^{(h)} (\mathbf{x}_{i}^{(h)})^\top \right) \nonumber
\end{align}
for all $i=1,...,T$, where $\mathbf{x}_{i}^{(h)} \in \mathbb{R}^d$ is the $h$-th sample for the $i$-th task (the $h$-th row of the matrix $\mathbf{X}_{i}$). Using this expression we can write
$$
\mathbf{B}_i= \frac{1}{T}\left(-\mathbf{Z}_i  +\frac{1}{T}\sum_{i'=1}^T \mathbf{Z}_{i'}\mathbf{\Lambda}^{-1}\mathbf{\Sigma}_{i}
\right)
$$

Note that each $\mathbf{Z}_i$ is the sum of $n$ independent random matrices with mean zero.

Using Lemma 27 from \cite{tripuraneni2020provable} with each $a_i=1$, we have that
\begin{equation}
    \mathbb{P}\left( \|\mathbf{Z}_i\| \leq c_1 \lambda_{i, \max}  \max \left( c_2( \sqrt{d/n} + t_i/n), c_2^2(\sqrt{d/n} + t_i/n)^2 \right) \right) \geq 1 - 2\exp(-t_i^2)
\end{equation}
for any $t_i > 0$, and for each $i \in [n]$.
Next, considering the expressions for $  \vartheta$,  $\mathbf{A}_i$, $\mathbf{B}_i$, $\mathbf{C}$, and $\mathbf{Z}_i$, we can write that 
\begin{align}
    \vartheta
     &=\left\|\sum_{i=1}^T \mathbf{A}_i \mathbf{w}_i^\ast \right\|= \left\|\sum_{i=1}^T\mathbf{C}^{-1} \mathbf{B}_i \mathbf{w}_i^\ast \right\| = \left\| \mathbf{C}^{-1} \sum_{i=1}^T\mathbf{B}_i \mathbf{w}_i^\ast\right\|\nonumber \\
   &  =\left\| \mathbf{C}^{-1} \sum_{i=1}^T\frac{1}{T}\left(-\mathbf{Z}_i  +\frac{1}{T}\sum_{i'=1}^T \mathbf{Z}_{i'}\mathbf{\Lambda}^{-1}\mathbf{\Sigma}_{i}
\right)\mathbf{w}_i^\ast\right\| 
  \end{align}   
Next replace      $\mathbf{\Lambda}$ by its definition $ \frac{1}{T}\sum_{i''=1}^T \mathbf{\Sigma}_{i''}$ to obtain
\begin{align}
    \vartheta
     &= \left\| \mathbf{C}^{-1} \frac{1}{T}\sum_{i=1}^T \left[ -\mathbf{Z}_i +  \sum_{i'=1}^T \mathbf{Z}_{i'} \left(\sum_{i''=1}^T \mathbf{\Sigma}_{i''} \right)^{-1} \mathbf{\Sigma}_i \right]\mathbf{w}_i^\ast\right\| \nonumber \\
     &= \left\| \mathbf{C}^{-1} \frac{1}{T}\sum_{i'=1}^T \left[ \mathbf{Z}_{i'}\left(-\mathbf{w}_{i'}^\ast +   \left(\sum_{i''=1}^T \mathbf{\Sigma}_{i''} \right)^{-1} \sum_{i=1}^T \mathbf{\Sigma}_i \mathbf{w}_i^\ast\right)\right] \right\| \label{swap} 
\end{align}
where in \eqref{swap} we have swapped the summations. This implies
\begin{align}
    \vartheta
     &= \left\| \mathbf{C}^{-1} \frac{1}{T} \sum_{i'=1}^T \left[ \mathbf{Z}_{i'}\left(\sum_{i=1}^T \mathbf{\Sigma}_{i} \right)^{-1} \sum_{i=1}^T \mathbf{\Sigma}_{i} (\mathbf{w}_{i}^\ast-\mathbf{w}_{i'}^\ast) \right] \right\| \nonumber \\
     &\leq \left\|\mathbf{C}^{-1}\right\| \frac{1}{T} \sum_{i'=1}^T \|\mathbf{Z}_{i'}\| \left\|\left(\sum_{i=1}^T \mathbf{\Sigma}_{i} \right)^{-1} \sum_{i=1}^T \mathbf{\Sigma}_{i} (\mathbf{w}_{i}^\ast-\mathbf{w}_{i'}^\ast)\right\| \label{ast1} 
\end{align}
where (\ref{ast1}) follows by the Cauchy-Schwarz and triangle inequalities. Using these inequalities and Assumptions \ref{assump_boundL} and \ref{assump1} we can further bound $\vartheta$ as:
\begin{align}
    \vartheta &\leq \|\mathbf{C}^{-1}\| \frac{1}{T} \sum_{i'=1}^T \|\mathbf{Z}_{i'}\| \left\|\left(\sum_{i=1}^T \mathbf{\Sigma}_{i} \right)^{-1}\right\| \sum_{i=1}^T \|\mathbf{\Sigma}_{i} (\mathbf{w}_{i}^\ast-\mathbf{w}_{i'}^\ast)\| \nonumber \\
    &\leq \|\mathbf{C}^{-1}\| \frac{1}{T} \sum_{i'=1}^T \|\mathbf{Z}_{i'}\| \left\|\left(\sum_{i=1}^T \mathbf{\Sigma}_{i} \right)^{-1}\right \| \sum_{i=1}^T \|\mathbf{\Sigma}_{i}\|\| \mathbf{w}_{i}^\ast-\mathbf{w}_{i'}^\ast\| \nonumber \\
&\leq \|\mathbf{C}^{-1}\| \frac{1}{T} \sum_{i'=1}^T \|\mathbf{Z}_{i'}\| \left\|\left(\sum_{i=1}^T \mathbf{\Sigma}_{i} \right)^{-1} \right\| 2TLB \nonumber 
\end{align}
Next, by the dual Weyl inequality for Hermitian matrices, we have $\lambda_{\min}\big( \sum_{i=1}^T \mathbf{\Sigma}_{i}\big) \geq \sum_{i=1}^T \lambda_{\min}\big(\mathbf{\Sigma}_{i}\big)$. Thus by Assumption \ref{assump1}, we have $\lambda_{\min}\big( \sum_{i=1}^T \mathbf{\Sigma}_{i}\big) \geq T \beta$, so
\begin{align}
    \vartheta 
&\leq \|\mathbf{C}^{-1}\| \frac{1}{T} \sum_{i=1}^T \|\mathbf{Z}_{i}\| 2LB/\beta  = \|\mathbf{C}^{-1}\| \frac{2LB}{T\beta} \sum_{i=1}^T \|\mathbf{Z}_{i}\|  \label{vartheta1}
\end{align}

By a union bound and Lemma 27 from \cite{tripuraneni2020provable} with each $a_i=1$, the probability that any 
\begin{equation}
    \|\mathbf{Z}_{i}\| \geq c_1 \lambda_{\max}(\mathbf{\Sigma}_i)  \max \left( c_2( \sqrt{d/n} + t_i/n), c_2^2(\sqrt{d/n} + t_i/n)^2 \right) 
\end{equation} is at most $2\sum_{i=1}^T \exp(-t_{i}^2)$. Thus, with probability at least $1-2\sum_{i=1}^T \exp(-t_{i}^2)$
\begin{align}
    \vartheta &\leq \|\mathbf{C}^{-1}\| \frac{2LB}{T\beta}\sum_{i=1}^T c_1 \lambda_{\max}(\mathbf{\Sigma}_{i})  \max \left( c_2( \sqrt{d/n} + t_{i}/n), c_2^2(\sqrt{d/n} + t_{i}/n)^2 \right) \nonumber
\end{align}
Let $ t\coloneqq \max_i t_i$, then using Assumption \ref{assump1} we have
\begin{align}
    \vartheta 
&\leq  \|\mathbf{C}^{-1}\| \frac{2L^2B}{\beta}c_1  \max \left( c_2( \sqrt{d/n} + t/n), c_2^2(\sqrt{d/n} + t/n)^2 \right)  \label{weyl1}
\end{align}
with probability at least $1 - 2T \exp{(-t^2)}$ for some absolute constants $c_1$ and $c_2$ and any $t>0$.

Next we bound $\|\mathbf{C}^{-1}\|$, where $\mathbf{C}$ is the random matrix 
\begin{align}
    \mathbf{C} &= \frac{1}{Tn} \sum_{i=1}^T  \mathbf{X}_{i}^\top  \mathbf{X}_{i}
\end{align}
Using the dual Weyl inequality again, we have
\begin{align}
    \lambda_{\min}(\mathbf{C}) &\geq \frac{1}{T} \sum_{i=1}^T  \lambda_{\min}\left(\frac{1}{n}\mathbf{X}_{i}^\top  \mathbf{X}_{i}\right) \label{weyl2}
\end{align}
Next, using again using Lemma 27 from \cite{tripuraneni2020provable} with each $a_i=1$, as well as Weyl's Inequality (Theorem 4.5.3 in \cite{vershynin2018high}), we have 
\begin{align}
    \lambda_{\min}\left( \frac{1}{n}\mathbf{X}_{i}^\top  \mathbf{X}_{i}\right) &\geq \lambda_{\min}\left(\mathbf{\Sigma}_i\right) - c_1 \left\| \mathbf{\Sigma}_i \right\| \max \left(c_2 (\sqrt{d/n} + s/n), c_2^2 (\sqrt{d/n}+t/n)^2\right) \nonumber \\
    &\geq \beta - c_1 L \max \left(c_2 (\sqrt{d/n} + t/n), c_2^2 (\sqrt{d/n}+t/n)^2\right) \nonumber \\
    &= \beta - c L (\sqrt{d/n} + t/n) =: \phi \label{36}
\end{align}
    with probability at least $1-2\exp{(-t^2)}$ for any $t >0$ and sufficiently large $n$ such that $\phi > 0$, where $c_1$, $c_2$, and $c$ are absolute constants (note that since $L \geq \beta$, in order for $\phi$ to be positive $n$ must be such that $c_2(\sqrt{d/n} + t/n) \leq 1$ assuming $c_1 \geq 1$, so we can eliminate the maximization. In particular, we must have $n \geq \left(\frac{cL\sqrt{d}+ \sqrt{c^2L^2 d + 4 \beta c L t}}{2\beta}\right)^2$.
    Now combining (\ref{36}) with (\ref{weyl2}) and using a union bound over $i$, we have
    \begin{align}
        \|\mathbf{C}^{-1}\| = \frac{1}{\lambda_{\min}(\mathbf{C})} &\leq \frac{T}{\sum_{i}^T \beta - c L (\sqrt{d/n} + s/n)} \nonumber \\
        &= \frac{1}{\beta - c L (\sqrt{d/n} + t/n)} \label{weyl3}
    \end{align}
    for any $t>0$ and $n$ sufficiently large, where $c$ is an absolute constant, with probability at least $1- 2T\exp{(-t^2)}$. Using (\ref{weyl3}) with (\ref{weyl1}), and noting that both inequalities are implied by the same event so no union bound is necessary, and  $n\geq \left(\frac{cL\sqrt{d}+ \sqrt{c^2L^2 d + 4 \beta c L t}}{2\beta}\right)^2$ sufficiently large, we have
    \begin{align} \label{varthetabound}
    \vartheta 
&\leq  \frac{c'(\sqrt{d/n} + t/n)}{\beta - c L (\sqrt{d/n} + t/n)}  \frac{L^2B}{\beta} 
\end{align}
with probability at least $1 - 2T \exp{(-t^2)}$ for some absolute constants $c$ and $c'$ and any $t>0$.

So far, we derived an upper bound for the first term in (\ref{tri_noise}) which we denoted by $ \vartheta $. Next, we consider the second term of (\ref{tri_noise}), which is due to the effect on the additive noise on the empirical solution. To be more precise, we proceed to provide an upper bound for
$
\left\| \left(\frac{1}{Tn} \sum_{i=1}^T \mathbf{X}_{i}^\top \mathbf{X}_{i} \right)^{-1} \left(\sum_{i=1}^T \frac{1}{Tn}\mathbf{X}_{i}^\top \mathbf{z}_{i}\right) \right\| .
$ Using the Cauchy-Schwarz inequality, we can bound this term as 
\begin{align}
   &\left \| \left(\frac{1}{T}\sum_{i=1}^T  \mathbf{X}_{i}^\top \mathbf{X}_{i} \right)^{-1} \left(\frac{1}{T} \sum_{i=1}^T \mathbf{X}_{i}^\top \mathbf{z}_{i}\right) \right\| \nonumber \\
    &\leq \left\| \left(\frac{1}{T} \sum_{i=1}^T \frac{1}{n} \mathbf{X}_{i}^\top \mathbf{X}_{i} \right)^{-1}\right\| \left\| \frac{1}{T} \sum_{i=1}^T \frac{1}{n} \mathbf{X}_{i}^\top \mathbf{z}_{i}\right\|\leq \| \mathbf{C}^{-1}\| \frac{1}{T} \sum_{i=1}^T \left\|\frac{1}{n}\mathbf{X}_{i}^\top \mathbf{z}_{i} \right\|\label{3terms}
\end{align}
We have already bounded $\|\mathbf{C}^{-1}\|$, so we proceed to bound  the term 
$
     \frac{1}{T}\sum_{i=1}^T \|\frac{1}{n}\mathbf{X}_{i}^\top \mathbf{z}_{i} \|
$. 
Note that each element of the vector $\mathbf{X}_{i}^\top \mathbf{z}_{i}$ is the sum of products of a Gaussian random variable with another Gaussian random variable. Namely, denoting the $(h,s)$-th element of the matrix $\mathbf{X}_i$ as $x_{i}^{(h)}(s)$ and the $s$-th element of the vector $\mathbf{z}_i$ as $z_i(s)$, then the $s$-th element of $\mathbf{X}_i^\top \mathbf{z}_i$ is $ \sum_{s = 1}^d x_{i}^{(h)}(s) z_{i}(s)$. The products $x_{i}^{(h)}(s) z_{i}(s)$ are each sub-exponential, since the products of subgaussian random variables is sub-exponential (Lemma 2.7.7 in \cite{vershynin2018high}), have mean zero, and are independent from each other. Thus by the Bernstein Inequality,
\begin{align}
    \mathbb{P}\left(\ \left| \sum_{s = 1}^d x_{i}^{(h)}(s) z_{i}(s) \right| \geq b\right) &\leq 2 \exp \left( -c'' \min (\frac{b^2}{\sum_{s=1}^d K_s} , \frac{b}{\max_s K_s}) \right)
\end{align}
for some absolute constant $c''$ and any $b > 0$, where $K_s$ is the sub-exponential norm of the random variable $x_{i}^{(h)}(s) z_{i}(s)$ (for any $h$, since the above random variables indexed by $h$ are i.i.d.). Define $K \coloneqq \max_s K_s$
Using a union bound over $h \in [n]$, we have
\begin{align}
    \mathbb{P}\left(\frac{1}{\sqrt{n}}\|\mathbf{X}_i^\top \mathbf{z}_i\| \geq b\right)&= \mathbb{P}\left(\sum_{l=1}^n \left(\sum_{s = 1}^d x_{i}^{(h)}(s) z_{i}(s)\right)^2 \geq nb^2\right) \nonumber \\
    &\leq \sum_{l=1}^n \mathbb{P}\left(\ \left| \sum_{s = 1}^d x_{i}^{(h)}(s) z_{i}(s) \right| \geq b\right) \nonumber \\
    &\leq 2 n \exp \left( -c'' \min \left(\frac{b^2}{dK} , \frac{b}{K}\right) \right) \nonumber
\end{align}
for any $b>0$. 
Thus we have $\frac{1}{n}\|\mathbf{X}_i^\top \mathbf{z}_i\| \leq b /\sqrt{n}$ with probability at least $1 - 2 n \exp \left( -c'' \min (\frac{b^2}{dK} , \frac{b}{K}) \right)$. Combining this result with (\ref{3terms}) with (\ref{tri_noise}) and (\ref{varthetabound}), we obtain
\begin{align}
   \theta &\leq \frac{c'(\sqrt{d/n} + t/n)}{\beta - c L (\sqrt{d/n} + t/n)}  \frac{L^2B}{\beta} + \frac{b}{\sqrt{n}}\frac{1}{\beta - c L (\sqrt{d/n} + t/n)} \nonumber \\
   &= \frac{1}{\beta - c L (\sqrt{d/n} + t/n)}\left(\frac{c'\sqrt{d}+ \beta b}{\beta\sqrt{n}} + \frac{tL^2B}{\beta n} \right) \label{finaln}
\end{align}
with probability at least \begin{align}
    1 - 2T \exp{ (- t^2)} - 2 n \exp \left( -c'' \min \left(\frac{b^2}{dK} , \frac{b}{K}\right) \right) \label{finalnprob}
\end{align}
for any $t,b > 0$ and  $n\geq \left(\frac{cL\sqrt{d}+ \sqrt{c^2L^2 d + 4 \beta c L t}}{2\beta}\right)^2$. 

Now that we have bounds for the terms in (\ref{tri_noise}), we proceed to bound the second term in (\ref{fin1}). We have
\begin{align}
     &\| \mathbf{w}_{NAL}^{(T)\ast} - \mathbf{w}_{NAL}^{\ast}\| \nonumber \\
     &=\bigg\| \bigg(\frac{1}{T}\sum_{i=1}^T  \mathbf{\Sigma}_{i} \bigg)^{-1} \frac{1}{T}\sum_{i=1}^T \mathbf{\Sigma}_{i} \mathbf{{w}}_i^\ast - \mathbb{E}_{i}[\mathbf{\Sigma}_i]^{-1} \mathbb{E}_{i}[\mathbf{\Sigma}_i \mathbf{w}_i^\ast]\bigg\| \nonumber \\
     &=  \bigg\| \bigg(\frac{1}{T}\sum_{i=1}^T  \mathbf{\Sigma}_{i} \bigg)^{-1} \frac{1}{T}\sum_{i=1}^T \mathbf{\Sigma}_{i} \mathbf{{w}}_i^\ast - \bigg(\frac{1}{T}\sum_{i=1}^T  \mathbf{\Sigma}_{i} \bigg)^{-1} \mathbb{E}_{i}[\mathbf{\Sigma}_i \mathbf{w}_i^\ast] \nonumber \\
     &\quad \quad \quad \quad \quad \quad + \bigg(\frac{1}{T}\sum_{i=1}^T  \mathbf{\Sigma}_{i} \bigg)^{-1} \mathbb{E}_{i}[\mathbf{\Sigma}_i \mathbf{w}_i^\ast] - \mathbb{E}_{i}[\mathbf{\Sigma}_i]^{-1} \mathbb{E}_{i}[\mathbf{\Sigma}_i \mathbf{w}_i^\ast]\bigg\| \nonumber \\
     &\leq  \bigg\| \bigg(\frac{1}{T} \sum_{i=1}^T  \mathbf{\Sigma}_{i} \bigg)^{-1} \frac{1}{T}\sum_{i=1}^T \mathbf{\Sigma}_{i} \mathbf{{w}}_i^\ast - \bigg(\sum_{i=1}^T  \mathbf{\Sigma}_{i} \bigg)^{-1} \mathbb{E}_{i}[\mathbf{\Sigma}_i \mathbf{w}_i^\ast] \bigg\| \nonumber \\
     &\quad \quad \quad \quad \quad \quad + \bigg\|\bigg(\frac{1}{T}\sum_{i=1}^T  \mathbf{\Sigma}_{i} \bigg)^{-1} \mathbb{E}_{i}[\mathbf{\Sigma}_i \mathbf{w}_i^\ast] - \mathbb{E}_{i}[\mathbf{\Sigma}_i]^{-1} \mathbb{E}_{i}[\mathbf{\Sigma}_i \mathbf{w}_i^\ast]\bigg\| \label{tgl1} \\
     &\leq \bigg\|\bigg(\frac{1}{T}\sum_{i=1}^T  \mathbf{\Sigma}_{i} \bigg)^{-1}\bigg\| \bigg\| \frac{1}{T} \sum_{i=1}^T \mathbf{\Sigma}_{i} \mathbf{{w}}_i^\ast - \mathbb{E}_{i}[\mathbf{\Sigma}_i \mathbf{w}_i^\ast] \bigg\|\nonumber \\
     &\quad \quad + \quad \quad \quad \quad\bigg\|\bigg(\frac{1}{T}\sum_{i=1}^T  \mathbf{\Sigma}_{i} \bigg)^{-1}  - \mathbb{E}_{i}[\mathbf{\Sigma}_i]^{-1} \bigg\|\|\mathbb{E}_{i}[\mathbf{\Sigma}_i \mathbf{w}_i^\ast]\| \label{cau1} \\
     &= \frac{1}{\beta} \bigg\| \frac{1}{T} \sum_{i=1}^T \mathbf{\Sigma}_{i} \mathbf{{w}}_i^\ast - \mathbb{E}_{i}[\mathbf{\Sigma}_i \mathbf{w}_i^\ast] \bigg\|+ 
     \bigg\|\bigg(\frac{1}{T}\sum_{i=1}^T  \mathbf{\Sigma}_{i} \bigg)^{-1}  - \mathbb{E}_{i}[\mathbf{\Sigma}_i]^{-1} \bigg\|LB \label{weyl4}
\end{align}
where in equations (\ref{tgl1}) and (\ref{cau1}) we have used the triangle and Cauchy-Schwarz inequalities, respectively, and in (\ref{weyl4}) we have used the dual Weyl's inequality and Assumption \ref{assump1}. We first consider the second term in (\ref{weyl4}). We can bound this term as
\begin{align}
    \bigg\|\bigg(\frac{1}{T}\sum_{i=1}^T  \mathbf{\Sigma}_{i} \bigg)^{-1}  - \mathbb{E}_{i}[\mathbf{\Sigma}_i]^{-1} \bigg\| &= \bigg\|\bigg(\frac{1}{T}\sum_{i=1}^T  \mathbf{\Sigma}_{i} \bigg)^{-1}\bigg( \frac{1}{T}\sum_{i=1}^T  (\mathbf{\Sigma}_{i} - \mathbb{E}_{i'}[\mathbf{\Sigma}_{i'}]) \bigg) \mathbb{E}_{i}[\mathbf{\Sigma}_i]^{-1} \bigg\| \nonumber \\
    &\leq \bigg\|\bigg(\frac{1}{T}\sum_{i=1}^T  \mathbf{\Sigma}_{i} \bigg)^{-1}\bigg\| \bigg\| \frac{1}{T}\sum_{i=1}^T  \bigg(\mathbf{\Sigma}_{i} - \mathbb{E}_{i'}[\mathbf{\Sigma}_{i'}]\bigg)  \bigg\| \mathbb{E}_{i}[\mathbf{\Sigma}_i]^{-1} \bigg\| \nonumber \\
    &\leq \frac{1}{\beta^2}\bigg\| \frac{1}{T}\sum_{i=1}^T  (\mathbf{\Sigma}_{i} - \mathbb{E}_{i'}[\mathbf{\Sigma}_{i'}]) \bigg\| \label{pre_bern}
\end{align}
using Assumptions \ref{assump_boundL} and \ref{assump1}. 

We use the matrix Bernstein  inequality (Theorem 6.5 in \cite{tropp2015introduction}) to bound $\| \frac{1}{T}\sum_{i=1}^T  (\mathbf{\Sigma}_{i} - \mathbb{E}_{i'}[\mathbf{\Sigma}_{i'}]) \| $, noting that each $\mathbf{\Sigma}_{i} - \mathbb{E}_{i'}[\mathbf{\Sigma}_{i'}]$ is an iid matrix with mean zero. Recall
that $\text{Var}(\mathbf{\Sigma}_i) \coloneqq \| \mathbb{E}_i[(\mathbf{\Sigma}_i - \mathbb{E}_{i'}[\mathbf{\Sigma}_{i'}])^2]\|$. By matrix Bernstein and equation (\ref{pre_bern}) we obtain
\begin{align}
    \bigg\|\bigg(\frac{1}{T}\sum_{i=1}^T  \mathbf{\Sigma}_{i} \bigg)^{-1}  - \mathbb{E}_{i}[\mathbf{\Sigma}_i]^{-1} \bigg\| &\leq \frac{1}{\beta^2} \frac{\delta\; \text{Var}(\mathbf{\Sigma}_i)}{\sqrt{T}}
\end{align}
with probability at least $1- 2d \exp\left(-\frac{\delta^2\;\text{Var}(\mathbf{\Sigma}_i)/2}{1 + L\delta/(3\sqrt{T})} \right)$, for any $\delta>0$. Similarly, we have that 
\begin{align}
    \frac{1}{\beta} \bigg\| \frac{1}{T} \sum_{i=1}^T \mathbf{\Sigma}_{i} \mathbf{{w}}_i^\ast - \mathbb{E}_{i}[\mathbf{\Sigma}_i \mathbf{w}_i^\ast] \bigg\| &\leq \frac{1}{\beta} \frac{\delta\; \text{Var}(\mathbf{\Sigma}_i\mathbf{w}_{i,\ast})}{\sqrt{T}}
\end{align}
with probability at least $1- 2d \exp\left(-\frac{\delta^2\;\text{Var}(\mathbf{\Sigma}_i\mathbf{w}_i^\ast)/2}{1 + LB\delta/(3\sqrt{T})} \right)$. 

Thus (\ref{weyl4}) reduces to:
\begin{align}
     \| \mathbf{w}_{NAL}^{(T)\ast} - \mathbf{w}_{NAL}^{\ast}\|
     &\leq \frac{1}{\beta} \frac{\delta\; \text{Var}(\mathbf{\Sigma}_i\mathbf{w}_{i,\ast})}{\sqrt{T}} + \frac{LB}{\beta^2} \frac{\delta\; \text{Var}(\mathbf{\Sigma}_i)}{\sqrt{T}}
\end{align}
with probability at least $1-2d \exp\left(-\frac{\delta^2\;\text{Var}(\mathbf{\Sigma}_i)/2}{1 + L\delta/(3\sqrt{T})} \right) -  2d \exp\left(-\frac{\delta^2\;\text{Var}(\mathbf{\Sigma}_i\mathbf{w}_i^\ast)/2}{1 + LB\delta/(3\sqrt{T})} \right)$.



We combine this result with (\ref{tri_noise}) and (\ref{finaln}) via a union bound to obtain
\begin{align}
    \|\mathbf{w}_{NAL} - \mathbf{w}_{NAL}^{\ast} \| 
     &\leq \frac{1}{\beta - c L (\sqrt{d/n}+t/n)}\left( \frac{c'\sqrt{d}+\beta b}{\beta \sqrt{n}}+\frac{tL^2 B}{\beta n}\right)\nonumber \\
     &\quad \quad \quad \quad + \frac{\delta\; \text{Var}(\mathbf{\Sigma}_i\mathbf{w}_{i,\ast})}{\beta\sqrt{T}} + \frac{\delta\; \text{Var}(LB\mathbf{\Sigma}_i)}{\beta^2\sqrt{T}} 
\end{align}
with probability at least \begin{align} \label{hp1}
    &1 - 2T\exp (-t^2) - 2n \exp \left(-c'' \min\left(\frac{b^2}{dK}, \frac{b}{K} \right)\right) - 2d \exp\left(-\frac{\delta^2\;\text{Var}(\mathbf{\Sigma}_i)/2}{1 + L\delta/(3\sqrt{T})} \right) \nonumber \\
    &\quad \quad \quad \quad -  2d \exp\left(-\frac{\delta^2\;\text{Var}(\mathbf{\Sigma}_i\mathbf{w}_i^\ast)/2}{1 + LB\delta/(3\sqrt{T})} \right) 
\end{align}
as long as $n \geq \left(\frac{cL\sqrt{d}+ \sqrt{c^2L^2 d + 4 \beta c L t}}{2\beta}\right)^2$.
Finally, choose $t = \sqrt{\log(200T)}$,  $b= \sqrt{\frac{dK}{c''}\log(200n)} + \frac{K}{c''}\log(200n)$ and $\delta = (\sqrt{\text{Var}(\mathbf{\Sigma}_i)+\text{Var}(\mathbf{\Sigma}_i\mathbf{w}_i^\ast)})^{-1} \log(200d)$ and restrict $n \geq 4 \left(\frac{cL\sqrt{d}+ \sqrt{c^2L^2 d + 4 \beta c L t}}{2\beta}\right)^2$ such that $\frac{1}{\beta - c L (\sqrt{d/n}+t/n)} \leq \frac{2}{\beta}$ and $T > L^2 B^2 \log(200d) /(9(\text{Var}(\mathbf{\Sigma}_i)+\text{Var}(\mathbf{\Sigma}_i\mathbf{w}_i^\ast)))$. This ensures that each negative term in the high probability bound (\ref{hp1}) is at most 0.01 and thereby completes the proof.
\end{proof}



\newpage

\subsection{MAML convergence}


We first state and prove the following lemma.
\begin{lemma} \label{fourth}
Let $\mathbf{A}$ be a fixed symmetric matrix in $\mathbb{R}^{d \times d}$, and let $\mathbf{X}$ be a random matrix in $\mathbb{R}^{n \times d}$ whose rows are i.i.d. multivariate Gaussian random vectors with mean $\mathbf{0}$ and diagonal covariance $\mathbf{\Sigma}$. Then 
\begin{align}
    \mathbf{E}_{\mathbf{X}} \left[\left(\frac{1}{n} \mathbf{X}^\top \mathbf{X} \right) \mathbf{A} \left(\frac{1}{n} \mathbf{X}^\top \mathbf{X}\right)\right] = \mathbf{\Sigma} \mathbf{A} \mathbf{\Sigma} + \frac{1}{n} \left(\text{tr} (\mathbf{\Sigma} \mathbf{A})\mathbf{I}_d+ \mathbf{\Sigma} \mathbf{A}\right)  \mathbf{\Sigma}
\end{align}
\end{lemma}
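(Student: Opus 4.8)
The plan is to expand $\tfrac{1}{n}\mathbf{X}^\top\mathbf{X}$ as a sum over the i.i.d. rows of $\mathbf{X}$ and thereby reduce the claim to a single fourth-moment computation. Writing $\mathbf{x}_1,\dots,\mathbf{x}_n \in \mathbb{R}^d$ for the rows of $\mathbf{X}$ (viewed as column vectors), we have $\tfrac{1}{n}\mathbf{X}^\top\mathbf{X} = \tfrac{1}{n}\sum_{h=1}^n \mathbf{x}_h\mathbf{x}_h^\top$, so that
\[
\left(\tfrac{1}{n}\mathbf{X}^\top\mathbf{X}\right)\mathbf{A}\left(\tfrac{1}{n}\mathbf{X}^\top\mathbf{X}\right) = \frac{1}{n^2}\sum_{h=1}^n\sum_{h'=1}^n \mathbf{x}_h\mathbf{x}_h^\top\mathbf{A}\,\mathbf{x}_{h'}\mathbf{x}_{h'}^\top.
\]
First I would split this double sum into the $n(n-1)$ off-diagonal terms ($h \neq h'$) and the $n$ diagonal terms ($h = h'$), and take expectations term by term.

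For the off-diagonal terms, independence of $\mathbf{x}_h$ and $\mathbf{x}_{h'}$ together with $\mathbb{E}[\mathbf{x}_h\mathbf{x}_h^\top] = \mathbf{\Sigma}$ immediately gives $\mathbb{E}[\mathbf{x}_h\mathbf{x}_h^\top\mathbf{A}\,\mathbf{x}_{h'}\mathbf{x}_{h'}^\top] = \mathbf{\Sigma}\mathbf{A}\mathbf{\Sigma}$, so these contribute $\tfrac{n(n-1)}{n^2}\mathbf{\Sigma}\mathbf{A}\mathbf{\Sigma}$. The substance of the argument is the diagonal term $\mathbb{E}[\mathbf{x}\mathbf{x}^\top\mathbf{A}\mathbf{x}\mathbf{x}^\top]$ for a single $\mathbf{x} \sim \mathcal{N}(\mathbf{0},\mathbf{\Sigma})$. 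I would compute this entrywise: its $(i,j)$ entry equals $\sum_{k,l}A_{kl}\,\mathbb{E}[x_i x_j x_k x_l]$, and I would evaluate the fourth moment via Isserlis' (Wick's) theorem, $\mathbb{E}[x_i x_j x_k x_l] = \mathbb{E}[x_ix_j]\mathbb{E}[x_kx_l] + \mathbb{E}[x_ix_k]\mathbb{E}[x_jx_l] + \mathbb{E}[x_ix_l]\mathbb{E}[x_jx_k]$.

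Here the two structural hypotheses do the work. Because $\mathbf{\Sigma}$ is diagonal, each pairwise covariance collapses to $\mathbb{E}[x_ix_j] = \Sigma_{ii}\delta_{ij}$; and because $\mathbf{A}$ is symmetric the two ``cross'' pairings coincide after contracting. Carrying out the three index contractions, the first pairing yields $\mathrm{tr}(\mathbf{\Sigma}\mathbf{A})\,\mathbf{\Sigma}$ while the two cross pairings each yield $\mathbf{\Sigma}\mathbf{A}\mathbf{\Sigma}$, giving $\mathbb{E}[\mathbf{x}\mathbf{x}^\top\mathbf{A}\mathbf{x}\mathbf{x}^\top] = \mathrm{tr}(\mathbf{\Sigma}\mathbf{A})\,\mathbf{\Sigma} + 2\mathbf{\Sigma}\mathbf{A}\mathbf{\Sigma}$. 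Finally I would reassemble: the diagonal terms contribute $\tfrac{n}{n^2}\big(\mathrm{tr}(\mathbf{\Sigma}\mathbf{A})\mathbf{\Sigma} + 2\mathbf{\Sigma}\mathbf{A}\mathbf{\Sigma}\big)$, and adding the off-diagonal contribution collects the coefficient of $\mathbf{\Sigma}\mathbf{A}\mathbf{\Sigma}$ into $\tfrac{n-1}{n}+\tfrac{2}{n}=\tfrac{n+1}{n}$, so the total equals $\big(1+\tfrac{1}{n}\big)\mathbf{\Sigma}\mathbf{A}\mathbf{\Sigma} + \tfrac{1}{n}\mathrm{tr}(\mathbf{\Sigma}\mathbf{A})\mathbf{\Sigma}$, which rearranges exactly to the claimed $\mathbf{\Sigma}\mathbf{A}\mathbf{\Sigma} + \tfrac{1}{n}\big(\mathrm{tr}(\mathbf{\Sigma}\mathbf{A})\mathbf{I}_d + \mathbf{\Sigma}\mathbf{A}\big)\mathbf{\Sigma}$.

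The main obstacle is purely the bookkeeping in the fourth-moment step: keeping the index contractions in Isserlis' theorem straight and ensuring that the symmetry of $\mathbf{A}$ (not just the diagonality of $\mathbf{\Sigma}$) is invoked to merge the two cross terms into $2\mathbf{\Sigma}\mathbf{A}\mathbf{\Sigma}$. Everything else is routine once the diagonal/off-diagonal split is in place.
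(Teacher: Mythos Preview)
Your proposal is correct and follows essentially the same approach as the paper: both split $\tfrac{1}{n}\mathbf{X}^\top\mathbf{X}$ into a sum over rows, separate diagonal from off-diagonal terms, and reduce to the single-row fourth moment $\mathbb{E}[\mathbf{x}\mathbf{x}^\top\mathbf{A}\mathbf{x}\mathbf{x}^\top] = \mathrm{tr}(\mathbf{\Sigma}\mathbf{A})\mathbf{\Sigma} + 2\mathbf{\Sigma}\mathbf{A}\mathbf{\Sigma}$. The only cosmetic difference is that you invoke Isserlis' theorem by name, whereas the paper computes the $(r,s)$ entry directly from the independence of coordinates and the Gaussian fourth moment $3\lambda_i^2$; these are the same computation.
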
 
\begin{proof}
Letting $\mathbf{x}_k$ denote the $k$-th row of $\mathbf{X}$, we have
\begin{align}
    \mathbf{E}_{\mathbf{X}}& \left[\left(\frac{1}{n} \mathbf{X}^\top \mathbf{X} \right) \mathbf{A} \left(\frac{1}{n} \mathbf{X}^\top \mathbf{X}\right)\right] \nonumber \\
    &= \mathbf{E}_{\mathbf{X}} \left[\left(\frac{1}{n}\sum_{k=1}^n \mathbf{x}^\top_{k} \mathbf{x}_k \right) \mathbf{A} \left(\frac{1}{n}\sum_{k=1}^n \mathbf{x}^\top_{k} \mathbf{x}_k\right)\right] \nonumber \\
    &= \mathbf{E}_{\mathbf{x}_k} \left[\frac{1}{n^2}\sum_{k=1}^n \sum_{k'=1, k' \neq k}^n\mathbf{x}^\top_{k} \mathbf{x}_k  \mathbf{A}  \mathbf{x}^\top_{k'} \mathbf{x}_{k'}\right] + \mathbf{E}_{\mathbf{x}_k} \left[\frac{1}{n^2}\sum_{k=1}^n \mathbf{x}^\top_{k} \mathbf{x}_k \mathbf{A} \mathbf{x}^\top_{k} \mathbf{x}_k \right] \nonumber \\
    &= \frac{n-1}{n} \mathbf{\Sigma} \mathbf{A} \mathbf{\Sigma} + \frac{1}{n} \sum_{k=1}^n \mathbf{E}_{\mathbf{x}_k} \left[ \mathbf{x}^\top_{k} \mathbf{x}_k \mathbf{A} \mathbf{x}^\top_{k} \mathbf{x}_k \right]  \label{67}
\end{align}
Let $\mathbf{C}_k = \mathbf{x}^\top_{k} \mathbf{x}_k \mathbf{A} \mathbf{x}^\top_{k} \mathbf{x}_k$ for $k \in [n]$, and let $\lambda_i$ be the $i$-th diagonal element of $\Sigma$ for $i \in [d]$. Then for any $k$, using the fact that the elements of $\mathbf{x}_k$ are independent, have all odd moments equal to 0, and have fourth moment equal to $3 \lambda_i^2$ for the corresponding $i$, it follows that 
\begin{align}
        \mathbb{E}[C_{r,s}]= 
\begin{cases}
    \lambda_{r} \sum_{j=1}^d \lambda_{j} a_{j,j} + 2 \lambda_r^2 A_{r,r},& \text{if } r=s\\
    \lambda_r \lambda_s(A_{r,s} + A_{s,r}),              & \text{otherwise}
\end{cases}
\label{68}
\end{align}
Using (\ref{68}), we can write 
\begin{align}
    \mathbb{E}[\mathbf{C}_k] &= \text{tr}(\mathbf{\Sigma}\mathbf{A})\mathbf{\Sigma} + \mathbf{\Sigma}(\mathbf{A} + \mathbf{A}^\top) \mathbf{\Sigma} \nonumber \\
    &= \text{tr}(\mathbf{\Sigma}\mathbf{A})\mathbf{\Sigma} + 2\mathbf{\Sigma}\mathbf{A}  \mathbf{\Sigma} \label{symmm}
\end{align}
where (\ref{symmm}) follows by the symmetry of $\mathbf{A}$. Plugging (\ref{symmm}) into (\ref{67}) completes the proof.
\end{proof}

Now we have the main convergence result. Analogously to Theorem \ref{thm:converge_general}, we define  $\text{Var}(\mathbf{Q}_i) \coloneqq \| \mathbb{E}_i[(\mathbf{Q}_i - \mathbb{E}_{i'}[\mathbf{Q}_{i'}])^2]\|_2$ and $\text{Var}(\mathbf{Q}_i \mathbf{w}_{i,\ast}) \coloneqq \| \mathbb{E}_i[(\mathbf{Q}_i\mathbf{w}_{i,\ast} - \mathbb{E}_{i'}[\mathbf{Q}_{i'}\mathbf{w}_{i,\ast}])^2]\|_2$.

\begin{theorem} (MAML Convergence, General Statement) \label{thm:convergeMAML_general}
Define $\hat{\beta} \coloneqq \beta (1-\alpha L)^2 + \frac{\alpha^2 \beta^3(d+1)}{n_2}$ and $\hat{L} \coloneqq L (1-\alpha \beta)^2 + \frac{\alpha^2 L^3(d+1)}{n_2}$. 
If Assumptions \ref{assump_boundL} and \ref{assump1} hold, the distance of the MAML training solution (\ref{emp_maml}) to its population-optimal value (\ref{maml_pop}) is bounded as
\begin{align}
    \|\mathbf{w}_{MAML} - \mathbf{w}_{MAML}^{\ast} \| 
     &\leq  \frac{c\alpha^2 L^3 \hat{L} B d \log^{3}(100Td)}{\hat{\beta}^2\sqrt{\tau}} \nonumber \\
     &\quad + \frac{\sqrt{\text{Var}(\mathbf{Q}_i \mathbf{w}_i^\ast)\log(200d)}}{\hat{\beta}\sqrt{T}} + \frac{\hat{L}B\sqrt{\text{Var}(\mathbf{Q}_i)\log(200d)}}{\hat{\beta}^2\sqrt{T}} 
\end{align}
with probability at least $0.96$,
for some absolute constants $c$ and $c'$, and any $\tau >  32\alpha^4 L^6 d^3  \log^6(100Td)/(c\hat{\beta})$ and $T > \hat{L}^2 B^2 \log(200d)/(9(\text{Var}(\mathbf{Q}_i)+\text{Var}(\mathbf{Q}_i\mathbf{w}_i^\ast)))$. 
\end{theorem}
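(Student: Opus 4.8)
The plan is to mirror the structure of the NAL convergence proof (Theorem \ref{thm:converge_general}). The central observation, supplied by Lemma \ref{fourth}, is that $\frac{1}{n_1}\mathbb{E}[\hat{Q}_{i,j}] = \mathbf{Q}_i^{(n_2)}$: taking the expectation over the outer samples turns $(\mathbf{X}^{out}_{i,j})^\top\mathbf{X}^{out}_{i,j}$ into $n_1\mathbf{\Sigma}_i$, and then Lemma \ref{fourth} (applied with $\mathbf{A}=\mathbf{\Sigma}_i$, $n=n_2$) evaluates $\mathbb{E}[\hat{P}_{i,j}\mathbf{\Sigma}_i\hat{P}_{i,j}]$ to exactly $\mathbf{Q}_i^{(n_2)}$. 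Hence the per-episode matrices are unbiased estimators of the population weights, and the two noise contributions in (\ref{maml_soln}) have mean zero, so the population solution carries no noise term. As in NAL, I would define the intermediate solution $\mathbf{w}_{MAML}^{(T)\ast} := (\sum_{i=1}^T \mathbf{Q}_i^{(n_2)})^{-1}\sum_{i=1}^T \mathbf{Q}_i^{(n_2)}\mathbf{w}_i^\ast$ obtained by letting $\tau\to\infty$ for a fixed set of $T$ tasks, and split $\|\mathbf{w}_{MAML} - \mathbf{w}_{MAML}^\ast\| \le \|\mathbf{w}_{MAML} - \mathbf{w}_{MAML}^{(T)\ast}\| + \|\mathbf{w}_{MAML}^{(T)\ast} - \mathbf{w}_{MAML}^\ast\|$ via the triangle inequality, bounding the two terms separately.

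First I would establish the deterministic spectral bounds $\hat{\beta}\mathbf{I}_d \preceq \mathbf{Q}_i^{(n_2)} \preceq \hat{L}\mathbf{I}_d$ for every $i$. These follow coordinate-wise from $\beta\mathbf{I}_d\preceq\mathbf{\Sigma}_i\preceq L\mathbf{I}_d$: for an eigenvalue $\lambda\in[\beta,L]$ of $\mathbf{\Sigma}_i$ one has $\beta(1-\alpha L)^2 \le \lambda(1-\alpha\lambda)^2 \le L(1-\alpha\beta)^2$, since $(1-\alpha\lambda)^2$ is decreasing on $[\beta,L]$ when $\alpha L\le 1$, while the variance term $\frac{\alpha^2}{n_2}(\text{tr}(\mathbf{\Sigma}_i^2)\lambda + \lambda^3)$ lies between $\frac{\alpha^2(d+1)\beta^3}{n_2}$ and $\frac{\alpha^2(d+1)L^3}{n_2}$ using $d\beta^2\le\text{tr}(\mathbf{\Sigma}_i^2)\le dL^2$. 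These bounds control $\|(\frac{1}{T\tau n_1}\sum\hat{Q}_{i,j})^{-1}\|$ and $\|(\sum\mathbf{Q}_i^{(n_2)})^{-1}\|$ by $1/\hat{\beta}$ (via the dual Weyl inequality) and bound $\|\mathbf{Q}_i^{(n_2)}\|$ by $\hat{L}$.

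For the episode-concentration term I would condition on the $T$ tasks and proceed exactly as in the $\theta$-analysis of NAL: expand the numerator of $\mathbf{w}_{MAML}$ into a data term $\frac{1}{T\tau}\sum_{i,j}\hat{Q}_{i,j}\mathbf{w}_i^\ast$ and the two mean-zero noise terms, then rewrite the data part through the same $\mathbf{A}_i,\mathbf{B}_i,\mathbf{C}$ algebra so that the deviation is governed by the centered matrices $\frac{1}{\tau}\sum_{j}(\frac{1}{n_1}\hat{Q}_{i,j} - \mathbf{Q}_i^{(n_2)})$, which after swapping summations reduces (as in (\ref{swap})) to a sum weighted by the differences $\mathbf{w}_i^\ast - \mathbf{w}_{i'}^\ast$. \textbf{The main obstacle is the concentration of these centered per-episode matrices.} Unlike the NAL case, where the summands $\mathbf{\Sigma}_i - \frac{1}{n}\mathbf{X}_i^\top\mathbf{X}_i$ are sub-exponential, here each $\hat{Q}_{i,j}$ is a degree-four Gaussian chaos in the inner samples multiplied by a quadratic form in the outer samples, so the centered summands are considerably heavier-tailed than sub-exponential and ordinary sub-exponential matrix Bernstein does not apply. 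I would control them with a truncation-plus-Bernstein argument (or a matrix moment/Rosenthal bound) tailored to these tails; this is precisely what produces the extra $\log^3(100Td)$ factor, the degraded linear-in-$d$ dependence (rather than $\sqrt{d}$), and the $\alpha^2 L^3$ prefactor arising from the $\hat{P}$-squared / stochastic-gradient-variance structure of $\hat{Q}_{i,j}$. The two noise terms, products of up to three independent Gaussian factors, are handled analogously by Bernstein-type tail bounds after conditioning and are absorbed into the same $d\log^3(100Td)/\sqrt{\tau}$ rate.

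Finally, the task-concentration term $\|\mathbf{w}_{MAML}^{(T)\ast} - \mathbf{w}_{MAML}^\ast\|$ is treated identically to the second half of the NAL proof with $\mathbf{Q}_i^{(n_2)}$ in place of $\mathbf{\Sigma}_i$: splitting via triangle and Cauchy--Schwarz into $\frac{1}{\hat{\beta}}\|\frac{1}{T}\sum_i(\mathbf{Q}_i^{(n_2)}\mathbf{w}_i^\ast - \mathbb{E}_i[\mathbf{Q}_i^{(n_2)}\mathbf{w}_i^\ast])\|$ and $\frac{\hat{L}B}{\hat{\beta}^2}\|\frac{1}{T}\sum_i(\mathbf{Q}_i^{(n_2)} - \mathbb{E}_i[\mathbf{Q}_i^{(n_2)}])\|$, and applying the matrix Bernstein inequality (Theorem 6.5 in \cite{tropp2015introduction}) to each centered average yields the $\sqrt{\text{Var}(\mathbf{Q}_i)\log(200d)}/\sqrt{T}$ and $\sqrt{\text{Var}(\mathbf{Q}_i\mathbf{w}_i^\ast)\log(200d)}/\sqrt{T}$ terms. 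Combining the two bounds by a union bound, choosing the deviation parameters so that each tail probability is at most a small constant, and restricting $\tau > 32\alpha^4 L^6 d^3\log^6(100Td)/(c\hat{\beta})$ and $T > \hat{L}^2 B^2\log(200d)/(9(\text{Var}(\mathbf{Q}_i)+\text{Var}(\mathbf{Q}_i\mathbf{w}_i^\ast)))$ (the analogues of the NAL restrictions, used to replace $1/(\hat\beta - \text{error})$ by $2/\hat\beta$) completes the proof.
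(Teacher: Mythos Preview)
Your overall architecture---the triangle split through the intermediate $\mathbf{w}_{MAML}^{(T)\ast}$, the spectral sandwich $\hat\beta\mathbf{I}_d\preceq\mathbf{Q}_i\preceq\hat L\mathbf{I}_d$, the $\mathbf{A}_i/\mathbf{B}_i/\mathbf{C}$ algebra reducing the episode term to $\|\mathbf{C}^{-1}\|\frac{2\hat L B}{T\hat\beta}\sum_i\|\mathbf{Z}_i\|$, and matrix Bernstein for the task-average term---is exactly what the paper does. The one substantive divergence is how you propose to control the heavy-tailed deviations $\mathbf{Z}_i=\frac{1}{\tau n_1}\sum_j\hat{\mathbf{Q}}_{i,j}-\mathbf{Q}_i$ and the noise vector.

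You suggest a truncation-plus-Bernstein or matrix Rosenthal argument. The paper instead works \emph{entrywise}: each scalar $\frac{1}{\tau n_1}\sum_j\hat Q_{i,j}(k,s)$ is a degree-$6$ polynomial in independent centered Gaussians (four inner-sample factors and two outer-sample factors in the top-order monomials; your ``degree-four chaos times a quadratic'' is the same thing viewed differently). The paper bounds its variance by $O(\alpha^4 d^2 L^6/\tau)$ via a direct moment count (Lemma \ref{elementwise}) and then invokes the Gaussian-polynomial concentration inequality of Schudy--Sviridenko, which for a degree-$6$ polynomial gives a tail of the form $\exp\bigl(-(\tau\gamma^2/(c\alpha^4 d^2 L^6))^{1/6}\bigr)$. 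A Gershgorin circle argument (Lemma \ref{lem:zbound}) then upgrades the $d^2$ entrywise bounds to a spectral-norm bound on $\mathbf{Z}_i$. The noise vector is handled identically, as another degree-$6$ polynomial. This route is cleaner than truncation and makes the $\log^3(100Td)$ factor transparent: inverting a $\exp(-c\,t^{1/6})$ tail at level $\sim 1/(Td^2)$ forces $t\sim\log^6(Td)$, i.e.\ $\gamma\sim d\alpha^2 L^3\log^3(Td)/\sqrt{\tau}$. Your approach would in principle reach the same rate, but the paper's polynomial-concentration-plus-Gershgorin device avoids the bookkeeping of truncation levels and residual tails.
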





\begin{proof}
The proof follows the same form as the proof of Theorem \ref{thm:converge_general}. Here the empirical covariance matrices $\frac{1}{n}\mathbf{{X}}_i^\top \mathbf{{X}}_i$ and their means $\mathbf{\Sigma}_i$ are replaced by the empirical {\em preconditioned} covariance matrices $\mathbf{\hat{Q}}_{i,j}$ and their means $\mathbf{{Q}}_i$, respectively. 
As before, a critical aspect of the proof will be to show concentration of the empirical (preconditioned) covariance matrix to its mean. To do so, we re-define the perturbation matrices $\mathbf{Z}_i$:
    \begin{align}
    \mathbf{Z}_i &\coloneqq \frac{1}{\tau n_1 } \sum_{j=1}^\tau \mathbf{\hat{Q}}_{i,j} - \mathbf{Q}_i 
\end{align}
where 
    \begin{align}
    \mathbf{\hat{Q}}_{i,j} &\coloneqq  \mathbf{P}_{i,j}^\top (\mathbf{X}^{out}_{i,j})^\top  \mathbf{X}^{out}_{i,j} \mathbf{P}_{i,j}  \nonumber
\end{align}
and 
\begin{align} 
\mathbf{Q}_i &\coloneqq 
 \mathbb{E}\left[\frac{1}{\tau n_1}\sum_{j=1}^\tau \mathbf{\hat{Q}}_{i,j}\right] \nonumber \\
 &= \left(\frac{1}{\tau n_1}  \sum_{j=1}^\tau \mathbb{E}[\mathbf{P}_{i,j}^\top  (\mathbf{X}^{out}_{i,j})^\top  \mathbf{X}^{out}_{i,j} \mathbf{P}_{i,j}] \right) \label{defq} \\
 &= \frac{1}{\tau n_1}  \sum_{j=1}^\tau \mathbb{E}\left[ \left(\mathbf{I}_d - \frac{\alpha}{n_2} (\mathbf{X}^{in}_{i,j})^\top  \mathbf{X}^{in}_{i,j}\right)\mathbf{\Sigma}_i \left(\mathbf{I}_d - \frac{\alpha}{n_2}(\mathbf{X}^{in}_{i,j})^\top  \mathbf{X}^{in}_{i,j}\right)\right] \nonumber \\
 &= \frac{1}{\tau n_1}  \sum_{j=1}^\tau \left( \mathbf{\Sigma}_i  - {2\alpha} \mathbf{\Sigma}_i^2 +  \frac{\alpha^2}{n_2^2}\mathbb{E} \left[ (\mathbf{X}^{in}_{i,j})^\top  \mathbf{X}^{in}_{i,j} (\mathbf{X}^{out}_{i,j})^\top  \mathbf{X}^{out}_{i,j} (\mathbf{X}^{in}_{i,j})^\top  \mathbf{X}^{in}_{i,j}\right] \right) \nonumber \\
 &= (\mathbf{I}_d - \alpha \mathbf{\Sigma}_i)\mathbf{\Sigma}_i (\mathbf{I}_d - \alpha \mathbf{\Sigma}_i) + \frac{\alpha^2}{n_2} (\text{tr}(\mathbf{\Sigma}_i^2)\mathbf{\Sigma}_i + \mathbf{\Sigma}_i^3) \label{uselem4} 
\end{align}
where (\ref{uselem4}) follows by Lemma \ref{fourth}.

Note that here $\mathbf{Z}_i$ has higher-order matrices than previously. Lemma \ref{lem:zbound} nevertheless gives the key concentration result for the $\mathbf{Z}_i$'s, which we will use later. For now, we argue as in Theorem \ref{thm:converge_general}: for any $T, \tau \geq 1$, we define
\begin{align}\mathbf{w}_{MAML}^{(T,\tau)}
&\coloneqq \big(\tfrac{1}{T\tau n_1}\sum_{i=1}^T  \sum_{j=1}^\tau \mathbf{\hat{Q}}_{i,j}\big)^{-1} \tfrac{1}{T\tau  n_1} \sum_{i=1}^T  \sum_{j=1}^\tau( \mathbf{\hat{Q}}_{i,j} \mathbf{w}_i^\ast + \mathbf{P}_{i,j} \mathbf{X}_{i,j}^\top \mathbf{z}_{i,j}^{out} - \frac{\alpha}{n_2} \mathbf{P}_{i,j} \mathbf{X}_{i,j}^\top \mathbf{X}_{i,j} \mathbf{\hat{X}}_{i,j}^\top \mathbf{\hat{z}}_{i,j}^{in}).\nonumber
\end{align} We next fix $T$ and define the asymptotic solution over $T$ tasks as $\tau \rightarrow \infty$, namely $\mathbf{w}_{MAML}^{(T)\ast} \coloneqq (\frac{1}{T}\sum_{i=1}^T \mathbf{Q}_i)^{-1} \frac{1}{T}\sum_{i=1}^T \mathbf{Q}_i \mathbf{w}_i^\ast$.
Again using the triangle inequality we have
\begin{align}
    \|\mathbf{w}_{MAML}^{(T,\tau)} - \mathbf{w}_{MAML}^{\ast}\| &= \|\mathbf{w}_{MAML}^{(T,\tau)} -\mathbf{w}_{MAML}^{(T)\ast} + \mathbf{w}_{MAML}^{(T)\ast} - \mathbf{w}_{MAML}^{\ast}\| \nonumber \\
    &\leq \|\mathbf{w}_{MAML}^{(T,\tau)} -\mathbf{w}_{MAML}^{(T)\ast} \|+ \| \mathbf{w}_{MAML}^{(T)\ast} - \mathbf{w}_{MAML}^{\ast}\| \label{fin1m} 
\end{align}
The first term in (\ref{fin1m}) captures the error due to have limited samples per task during training, and the second term captures the error from having limited tasks from the environment during training.
We first bound the first term in (\ref{fin1m}), which we denote as $\theta =\|\mathbf{w}_{MAML}^{(T,\tau)} -\mathbf{w}_{MAML}^{(T)\ast}\|$ for convenience. Note that by Assumption \ref{assump1}, we have almost surely
\begin{align} \label{assump4}
    \hat{\beta} \mathbf{I}_d \preceq \mathbf{Q}_i \preceq \hat{L} \mathbf{I}_d\quad \quad  \forall \; i
\end{align}
where $\hat{\beta} \coloneqq \beta (1-\alpha L)^2 + \frac{\alpha^2 \beta^3(d+1)}{n_2}$ and $\hat{L} \coloneqq L (1-\alpha \beta)^2 + \frac{\alpha^2 L^3(d+1)}{n_2}$. 
Thus, using the argument from (\ref{tri_noise}) to (\ref{vartheta1}) in the proof of Theorem~\ref{thm:converge_general}, with $\frac{1}{n} \mathbf{{X}}_{i,j}^\top \mathbf{{X}}_{i,j}$ replaced by $\frac{1}{ \tau n_1}\sum_{j=1}^\tau \mathbf{\hat{Q}}_{i,j}$ and $\mathbf{\Sigma}_i$ replaced by $\mathbf{Q}_i$, we obtain 
\begin{align} \label{theta_def}
    \theta &\leq \vartheta + \Bigg\|\left(\frac{1}{T\tau n_1}\sum_{i=1}^T \sum_{j=1}^\tau \mathbf{\hat{Q}}_{i,j}\right)^{-1}  \frac{1}{T \tau n_1} \sum_{i=1}^T \sum_{j=1}^\tau \bigg(\mathbf{{P}}_{i,j} (\mathbf{X}_{i,j}^{out})^\top \mathbf{z}^{out}_{i,j} - \frac{\alpha}{n_2} \mathbf{{P}}_{i,j} (\mathbf{X}_{i,j}^{out})^\top \mathbf{X}^{out}_{i,j} (\mathbf{{X}}^{in}_{i,j})^\top \mathbf{{z}}^{in}_{i,j} \bigg)\Bigg\|
\end{align}
where $$\vartheta \coloneqq \left\| \sum_{i=1}^T \left[   \left(\frac{1}{T}\sum_{i'=1}^T \frac{1}{n_1 \tau} \sum_{j=1}^\tau \mathbf{\hat{Q}}_{i',j} \right)^{-1}\!\!\! \frac{1}{T} \mathbf{\hat{Q}}_{i,j} - \left(\frac{1}{T}\sum_{i'=1}^T \mathbf{Q}_{i'} \right)^{-1} \!\!\! \frac{1}{T}\mathbf{Q}_i\right] \mathbf{w}_i^\ast \right\| .$$
Defining $\mathbf{C} \coloneqq \frac{1}{T}\sum_{i'=1}^T \frac{1}{n_1 \tau} \sum_{j=1}^\tau \mathbf{\hat{Q}}_{i',j}$, we have, 
\begin{align} \label{vartheta11}
    \vartheta = \left\| \sum_{i=1}^T \Bigg[  \frac{1}{T} \mathbf{C}^{-1} \mathbf{\hat{Q}}_{i,j} - \left(\frac{1}{T}\sum_{i'=1}^T \mathbf{Q}_{i'} \right)^{-1} \!\!\! \frac{1}{T}\mathbf{Q}_i\Bigg] \mathbf{w}_i^\ast \right\| &\leq \|\mathbf{C}^{-1}\|\frac{2 \hat{L}B}{T\hat{\beta}}\sum_{i=1}^T \|\mathbf{Z}_i\| 
\end{align}
where to obtain the inequality we have swapped the order of the summations as in (\ref{swap}).
Next we bound each $\|\mathbf{Z}_i\|$ with high probability, by first showing element-wise convergence to 0.

\begin{lemma} \label{elementwise}
Consider a fixed $i \in [T]$, $k \in [d]$, and $s \in [d]$, and let ${\hat{Q}}_{i,j}(k,s)$ be the $(k,s)$-th element of the matrix $\mathbf{\hat{Q}}_{i,j}$ defined in (\ref{zii}). The following concentration result holds for the random variable $\frac{1}{\tau n_1}\sum_{j=1}^\tau {\hat{Q}}_{i,j}(k,s) {\hat{Q}}_{i,j}(k,s)$:
\begin{align}
    \mathbb{P}\left( \left|\frac{1}{\tau n_1}\sum_{j=1}^\tau {\hat{Q}}_{i,j}(k,s) - Q_i(k,s)\right| \leq \gamma \right) \geq e^2 \exp \left(-\left( \dfrac{\tau \gamma^2}{c \alpha^4 d^2 L^6}  \right)^{1/6}\right)
\end{align}
for some any $\gamma>0$, where $\mathbf{Q}_i = \mathbb{E}[\frac{1}{\tau n_1} \sum_{j=1}^\tau \mathbf{\hat{Q}}_{i,j}]$ as defined in e.g. (\ref{defq}).
\end{lemma}

\begin{proof}
We start by computing the $(k,s)$-th element of $\mathbf{Z}_i$. First we compute the $(k,l)$-th element of the matrix $\mathbf{D}_{i,j} \coloneqq (\mathbf{I}_d - \frac{\alpha}{n_2} (\mathbf{{X}}^{in}_{i,j})^\top \mathbf{{X}}^{in}_{i,j}) (\mathbf{{X}}^{out}_{i,j})^\top \mathbf{{X}}^{out}_{i,j}$. Here, we define $\mathbf{{x}}^{(r)}_{i,j} \in \mathbb{R}^{n_1}$ as the $n_1$-dimensional vector of the $r$-th-dimensional elements from all of the $n_1$ outer samples for the $j$-th instance of task $i$, and $\mathbf{\hat{x}}^{(k)}_{i,j} \in \mathbb{R}^{n_2}$ as the $n_2$-dimensional vector of the $k$-th-dimensional elements from all of the $n_2$ outer samples for task $i$. Then we have
\begin{align}
    D_{i,j}(k,l) 
    &= \sum_{r=1}^d (\mathbbm{1}_{r=k} - \frac{\alpha}{n_2} \langle\mathbf{\hat{x}}^{(k)}_{i,j}, \mathbf{\hat{x}}_{i,j}^{(r)}\rangle )\langle\mathbf{{x}}^{(r)}_{i,j}, \mathbf{{x}}_{i,j}^{(l)}\rangle = \langle\mathbf{{x}}^{(k)}_{i,j}, \mathbf{{x}}_{i,j}^{(l)}\rangle  - \sum_{r=1}^d \frac{\alpha}{n_2} \langle\mathbf{\hat{x}}^{(k)}_{i,j}, \mathbf{\hat{x}}_{i,j}^{(r)}\rangle \langle\mathbf{{x}}^{(r)}_{i,j}, \mathbf{{x}}_{i,j}^{(l)}\rangle 
\end{align}
Then, the $(k,s)$-th element of $ \mathbf{\hat{Q}}_{i,j}$ is
\begin{align}
& \mathbf{\hat{Q}}_{i,j}(k,s) \nonumber\\
 &= \Bigg[ \mathbf{P}_{i,j}^\top \mathbf{X}_{i,j}^\top \mathbf{X}_{i,j} \mathbf{P}_{i,j}\Bigg](k,s) \nonumber \\
    &=  \sum_{l=1}^d D_{i,j}(h,l)(\mathbbm{1}_{l=s} - \frac{\alpha}{n_2} \langle\mathbf{\hat{x}}^{(l)}_{i,j}, \mathbf{\hat{x}}_{i,j}^{(s)}\rangle ) \nonumber \\
    &= \sum_{l=1}^d (\mathbbm{1}_{l=s} - \frac{\alpha}{n_2} \langle\mathbf{\hat{x}}^{(l)}_{i,j}, \mathbf{\hat{x}}_{i}^{(s)}\rangle ) \left( \langle\mathbf{{x}}^{(k)}_{i,j}, \mathbf{x}_{i}^{(l)}\rangle  - \sum_{r=1}^d \frac{\alpha}{n_2} \langle\mathbf{\hat{x}}^{(k)}_{i,j}, \mathbf{\hat{x}}_{i,j}^{(r)}\rangle \langle{x}^{(r)}_{i,j}, \mathbf{x}_{i,j}^{(l)}\rangle  \right) \nonumber \\
    &=  \left( \langle\mathbf{x}^{(k)}_{i,j}, \mathbf{x}_{i,j}^{(s)}\rangle  - \sum_{r=1}^d \frac{\alpha}{n_2} \langle\mathbf{\hat{x}}^{(k)}_{i,j}, \mathbf{\hat{x}}_{i}^{(r)}\rangle \langle\mathbf{x}^{(r)}_{i,j}, \mathbf{x}_{i,j}^{(s)}\rangle  \right) \nonumber \\
    &\quad \quad \quad \quad - \frac{\alpha}{ n_2}  \sum_{l=1}^d \langle\mathbf{\hat{x}}^{(l)}_{i,j}, \mathbf{\hat{x}}_{i,j}^{(s)}\rangle \left( \langle\mathbf{x}^{(k)}_{i,j}, \mathbf{x}_{i,j}^{(l)}\rangle  - \sum_{r=1}^d \frac{\alpha}{n_2} \langle\mathbf{\hat{x}}^{(k)}_{i,j}, \mathbf{\hat{x}}_{i,j}^{(r)}\rangle \langle\mathbf{x}^{(r)}_{i,j}, \mathbf{x}_{i,j}^{(l)}\rangle  \right) \nonumber \\
    &=  \langle\mathbf{x}^{(k)}_{i,j}, \mathbf{x}_{i,j}^{(s)}\rangle  - \frac{2 \alpha}{ n_2}\sum_{r=1}^d \langle\mathbf{\hat{x}}^{(k)}_{i,j}, \mathbf{\hat{x}}_{i,j}^{(r)}\rangle \langle\mathbf{x}^{(r)}_{i,j}, \mathbf{x}_{i,j}^{(s)}\rangle + \frac{\alpha^2}{ n_2^2} \sum_{l=1}^d \sum_{r=1}^d \langle\mathbf{\hat{x}}^{(l)}_{i,j}, \mathbf{\hat{x}}_{i,j}^{(s)}\rangle \langle\mathbf{\hat{x}}^{(k)}_{i,j}, \mathbf{\hat{x}}_{i}^{(r)}\rangle \langle\mathbf{x}^{(r)}_{i,j}, \mathbf{x}_{i,j}^{(l)}\rangle  
\label{zi}
\end{align}
Therefore the $(k,s)$ element of $\frac{1}{\tau n_1}\sum_{j=1}^\tau \mathbf{\hat{Q}}_{i,j}$ is 
\begin{align}
        \Big[ \frac{1}{\tau n_1}\sum_{j=1}^\tau \mathbf{\hat{Q}}_{i,j} \Big] (k,s)
        &= \frac{1}{\tau n_1} \sum_{j=1}^\tau \Big[ \langle\mathbf{x}^{(k)}_{i,j}, \mathbf{x}_{i,j}^{(s)}\rangle  - \frac{2 \alpha}{n_2}\sum_{r=1}^d \langle\mathbf{\hat{x}}^{(k)}_{i,j}, \mathbf{\hat{x}}_{i,j}^{(r)}\rangle \langle\mathbf{x}^{(r)}_{i,j}, \mathbf{x}_{i,j}^{(s)}\rangle  \nonumber \\
        &\quad \quad \quad \quad \quad \quad + \frac{\alpha^2}{n_2^2} \sum_{l=1}^d \sum_{r=1}^d \langle\mathbf{\hat{x}}^{(l)}_{i,j}, \mathbf{\hat{x}}_{i,j}^{(s)}\rangle \langle\mathbf{\hat{x}}^{(k)}_{i,j}, \mathbf{\hat{x}}_{i,j}^{(r)}\rangle \langle \mathbf{x}^{(r)}_{i,j}, \mathbf{x}_{i,j}^{(l)}\rangle \Big] \label{zii}
\end{align}
As we can see, this is a polynomial in the independent Gaussian random variables $\{ \hat{x}_{i,j}^{(r)}(q)\}_{j\in [\tau], r \in [d], q \in [n_2]}$ and $\{{x}_{i,j}^{(r)}(q)\}_{j\in [\tau],r \in [d], q \in [n_1]}$, for a total of $ d \tau (n_1 + n_2)$ random variables. Moreover, $\frac{1}{\tau n_1}\sum_{j=1}^\tau {\hat{Q}}_{i,j}(k,s)$ is the average of  $\tau$ i.i.d. random variables indexed by $j$. Define these random variables as $u_j$ for $j \in [\tau]$, i.e.,
\begin{align}
        u_j &= \frac{1}{n_1}  \langle\mathbf{x}^{(k)}_{i,j}, \mathbf{x}_{i,j}^{(s)}\rangle  - \frac{2 \alpha}{ n_1n_2}\sum_{r=1}^d \langle\mathbf{\hat{x}}^{(k)}_{i,j}, \mathbf{\hat{x}}_{i,j}^{(r)}\rangle \langle \mathbf{x}^{(r)}_{i,j}, \mathbf{x}_{i,j}^{(s)}\rangle \nonumber \\
        &\quad \quad \quad \quad + \frac{\alpha^2}{ n_1n_2^2} \sum_{l=1}^d \sum_{r=1}^d \langle\mathbf{\hat{x}}^{(l)}_{i,j}, \mathbf{\hat{x}}_{i,j}^{(s)}\rangle \langle\mathbf{\hat{x}}^{(k)}_{i,j}, \mathbf{\hat{x}}_{i,j}^{(r)}\rangle \langle\mathbf{x}^{(r)}_{i,j}, \mathbf{x}_{i,j}^{(l)}\rangle \label{zii}
\end{align}
such that
\begin{align}
\frac{1}{\tau n_1}\sum_{j=1}^\tau{\hat{Q}}_{i,j}(k,s) = \frac{1}{\tau} \sum_{j=1}^\tau u_j
\end{align}
 Then the variance of $\frac{1}{\tau n_1} \sum_{j=1}^\tau \hat{Q}_{i,j}(k,s)$ decreases linearly with $\tau$, since
\begin{align}
   \text{Var}\left(\frac{1}{\tau n_1}\sum_{j=1}^\tau {\hat{Q}}_{i,j}(k,s)\right) &= \mathbb{E}\left[\left(\frac{1}{\tau n_1}\sum_{j=1}^\tau {\hat{Q}}_{i,j}(k,s) - {Q}_i(k,s)\right)^2\right] \nonumber \\
   &= \mathbb{E}[(\frac{1}{\tau} \sum_{j=1}^\tau u_j - \frac{1}{\tau} \sum_{j=1}^\tau \mathbb{E}[u_j])^2] \nonumber \\
   &= \frac{1}{\tau^2} \sum_{j = 1}^\tau \mathbb{E}[(u_j - \mathbb{E}[u_j])^2] \nonumber \\
   &\quad \quad + \frac{1}{\tau^2}\sum_{j = 1}^\tau \sum_{j' = 1, j'\neq j}^\tau \mathbb{E}[(u_j - \mathbb{E}[u_j])]\mathbb{E}[(u_{j'} - \mathbb{E}[u_j])] \label{independ} \\
   &= \frac{1}{\tau} \mathbb{E}[(u_1 - \mathbb{E}[u_1])^2] \nonumber\\
   &\leq \frac{1}{\tau} \mathbb{E}[u_1^2] \nonumber
\end{align}
where (\ref{independ}) follows from the independence of the $u_j$'s.  Next, recall the definition of $u_1$ given in (\ref{zii}):
\begin{align}
        u_1 &= \frac{1}{n_1}  \langle\mathbf{x}^{(k)}_{i,1}, \mathbf{x}_{i,1}^{(s)}\rangle  - \frac{2 \alpha}{ n_1n_2}\sum_{r=1}^d \langle\mathbf{\hat{x}}^{(k)}_{i,1}, \mathbf{\hat{x}}_{i,1}^{(r)}\rangle \langle \mathbf{x}^{(r)}_{i,1}, \mathbf{x}_{i,1}^{(s)}\rangle \nonumber \\
        &\quad \quad \quad \quad \quad \quad + \frac{\alpha^2}{ n_1 n_2^2} \sum_{l=1}^d \sum_{r=1}^d \langle\mathbf{\hat{x}}^{(l)}_{i,1}, \mathbf{\hat{x}}_{i,1}^{(s)}\rangle \langle\mathbf{\hat{x}}^{(k)}_{i,1}, \mathbf{\hat{x}}_{i,1}^{(r)}\rangle \langle\mathbf{x}^{(r)}_{i,1}, \mathbf{x}_{i,1}^{(l)}\rangle 
\end{align}
The second moment of $u_{1}$ is dominated by the higher-order terms in $u_{1}^2$, since these terms have the largest expectation (being of the highest order) and are the most populous. To see that they are the most populous, note that $u_{1}$ has $d^2 n_1 n_2^2$ monomials with six random variables, and only $d n_1 n_2$ monomials with four random variables and $n_1$ monomials with two random variables. Thus $\mathbb{E}[u_{1}^2]$ is at most a constant times the expectation of the $d^4 n_1^2 n_2^4$ monomials of 12 variables in $u_{1}^2$.
Each of these monomials in 12 variables is the product of 12 one-dimensional, mean-zero Gaussian random variables, 8 corresponding to inner-loop samples and 4 corresponding to outer-loop samples. The maximum expected value of each of these monomials is thus the eighth moment of the inner loop random variable with maximum variance times the fourth moment of the outer-loop random variable with maximum variance. Since the variables are Gaussian with maximum variance $L$, the maximum expected value of each monomial is $105 L^4 \times 3 L^2  = 315 L^6$. This yields our upper bound on the variance. The following analysis formalizes this argument:
\begin{align}
    &\text{Var}\left(\frac{1}{\tau n_1}\sum_{j=1}^\tau {\hat{Q}}_{i,j}(k,s)\right) \nonumber \\
    &\leq  \frac{1}{\tau} \mathbb{E}[u_1^2] \nonumber \\
&= \frac{1}{\tau}  \mathbb{E}\Bigg[\Big(
\frac{1}{n_1}\sum_{h=1}^{n_1} \Big( {x}_{i,1}^{(k)}(h)  {x}_{i,1}^{(s)}(h) - \frac{2 \alpha}{ n_2}\sum_{r=1}^d {x}^{(r)}_{i,j}(h) {x}_{i,j}^{(s)}(h) \langle\mathbf{\hat{x}}^{(k)}_{i,j}, \mathbf{\hat{x}}_{i,j}^{(r)}\rangle
 \nonumber \\
 &\quad \quad \quad \quad \quad \quad  
+\frac{\alpha^2}{ n_2^2} \sum_{l=1}^d \sum_{r=1}^d {x}^{(r)}_{i,j}(h) {x}_{i,j}^{(l)}(h) \langle\mathbf{\hat{x}}^{(h)}_{i,j}, \mathbf{\hat{x}}_{i,j}^{(s)}\rangle \langle\mathbf{\hat{x}}^{(k)}_{i,j}, \mathbf{\hat{x}}_{i,j}^{(r)}\rangle \Big) \Big)^2\Bigg] \nonumber \\
&\leq \frac{c}{n_1^2 \tau}\mathbb{E}\left[\left( \frac{\alpha^2}{ n_2^2} \sum_{h=1}^{n_1} \sum_{l=1}^d \sum_{r=1}^d {x}^{(r)}_{i,j}(h) {x}_{i,j}^{(l)}(h) \langle\mathbf{\hat{x}}^{(l)}_{i,j}, \mathbf{\hat{x}}_{i,j}^{(s)}\rangle \langle\mathbf{\hat{x}}^{(k)}_{i,j}, \mathbf{\hat{x}}_{i,j}^{(r)}\rangle  \right)^2\right] \nonumber \\
&= \frac{c \alpha^4}{n_1^2 n_2^4  \tau}\mathbb{E}\Bigg[\sum_{h=1}^{n_1} \sum_{h'=1}^{n_1} \sum_{l=1}^d \sum_{r=1}^d \sum_{l'=1}^d \sum_{r'=1}^d {x}^{(r)}_{i,j}(h) {x}_{i,j}^{(l)}(h) \langle\mathbf{\hat{x}}^{(l)}_{i,j}, \mathbf{\hat{x}}_{i,j}^{(s)}\rangle \langle\mathbf{\hat{x}}^{(k)}_{i,j}, \mathbf{\hat{x}}_{i,j}^{(r)}\rangle
\nonumber \\&\quad \quad \quad \quad \quad \quad\quad \quad \quad\quad \quad \quad\quad \quad \quad\quad  
{x}^{(r')}_{i,j}(h') {x}_{i,j}^{(l')}(h') \langle\mathbf{\hat{x}}^{(l')}_{i,j}, \mathbf{\hat{x}}_{i,j}^{(s)}\rangle \langle\mathbf{\hat{x}}^{(k)}_{i,j}, \mathbf{\hat{x}}_{i,j}^{(r')}\rangle \Bigg] \nonumber \\
&= \frac{c \alpha^4}{n_1^2 n_2^4  \tau}\mathbb{E}\Bigg[ \sum_{h=1}^{n_1} \sum_{h'=1}^{n_1}\sum_{l=1}^d \sum_{r=1}^d \sum_{l'=1}^d \sum_{r'=1}^d {x}^{(r)}_{i,j}(h) {x}_{i,j}^{(l)}(h)\!\left(\sum_{q=1}^{n_2} \hat{x}_{i,j}^{(l)}(q) \hat{x}_{i,j}^{(s)}(q) \right)\! 
\nonumber \\& \quad \quad 
\left(\sum_{q=1}^{n_2} \hat{x}_{i,j}^{(k)}(q) \hat{x}_{i,j}^{(r)}(q) \right) {x}^{(r')}_{i,j}(h') {x}_{i,j}^{(l')}(h') \left(\sum_{q=1}^{n_2} \hat{x}_{i,j}^{(l')}(q) \hat{x}_{i,j}^{(s)}(q) \right) \left(\sum_{q=1}^{n_2} \hat{x}_{i,j}^{(k)}(q) \hat{x}_{i,j}^{(r')}(q) \right) \Bigg]
\end{align}
By independence, each term in the above summation has expectation zero unless (i) $r=l$ and $r'=l'$, or (ii) $r=r'$, $l=l'$ and $h=h'$. There are $n_1^2 d^2$ distinct values of $(h,h',r,r',l,l')$ that satisfy (i), and $n_1 d^2 \leq n_1^2 d^2$ distinct values of $(h,h',r,r',l,l')$ that satisfy (ii). For simplicity we treat each term in the summations over $n_2$ as having non-zero mean, although this can tightened. Since the expectation of each non-zero-mean monomial is at most $315L^6$, we have 
\begin{align}
    \text{Var}\left(\frac{1}{\tau n_1}\sum_{j=1}^\tau {\hat{Q}}_{i,j}(k,s)\right)  &\leq \frac{730 c d^2 \alpha^4}{ \tau}
\end{align}
for some absolute constant $c$. With this bound on $\text{Var}(\frac{1}{\tau n_1}\sum_{j=1}^\tau {\hat{Q}}_{i,j}(k,s))$, the result directly follows from Theorem 1.9 in \cite{Schudy_2012}, noting that $\frac{1}{\tau n_1}\sum_{j=1}^\tau {\hat{Q}}_{i,j}(k,s) - Q_i(k,s)$ is a degree 6 polynomial in centered independent Gaussian random variables. Note that by the argument in \cite{Schudy_2012}, the 
bound on the elements of $\mathbf{Z}_i$ given in Lemma \ref{elementwise} is tight up to logarithmic factors. 
\end{proof}

Next, we show that this result implies a high probability bound on each $\|\mathbf{Z}_i\|$.

\begin{lemma} \label{lem:zbound}
For all task indices $i$, any $\gamma > 0$, and some absolute constant $c$, the following holds:
\begin{align}
    \mathbb{P}\left(\|\mathbf{Z}_i\|\leq \gamma \right) &\geq 1 - e^2 d^2 \exp \left({-\left(\dfrac{\tau \gamma^2}{c \alpha^4 d^2 L^6} \right)^{1/6}}\right) \label{bound_zi}
\end{align}
\end{lemma}
\begin{proof}

By Lemma \ref{elementwise}, we have a high-probability bound on the size of the elements of $\mathbf{Z}_i$. 

Using this bound, we can bound the spectral norm of $\mathbf{Z}_i$ using the Gershgorin Circle Theorem, which says that every eigenvalue of $\mathbf{Z}_i$ lies within one of the Gershgorin disks $\mathfrak{D}_k$ defined by
\begin{equation}
   \mathfrak{D}_k \coloneqq \{ \lambda : \lambda \in [{Z}_i(k,k) - R_k, {Z}_i(k,k) + R_k] \}
\end{equation}
where $R_k = \sum_{s=1, s \neq k}^d | {Z}_i(k,s) |$. This implies that
\begin{align}
    \|\mathbf{Z}_i\| \leq \max_{k \in d} {Z}_i(k,k) + R_k \label{maximp}
\end{align}
so we attempt to bound the RHS of the above. 
By a union bound over $k$, we have that for any $\gamma > 0$,
\begin{align}
    \mathbb{P}\left(R_{k} \geq (d-1)\gamma \right) &\leq  \mathbb{P}( \cup_{s \neq k} \{|\tilde{Q}_i(k,s) - Q(k,s)| \geq \gamma \} ) \nonumber\\
    &\leq \sum_{s=1, s\neq k}^d
    e^2 \exp\left({-\left(\dfrac{\tau \gamma^2}{c \alpha^4 d^2 L^6} \right)^{1/6}}\right) \nonumber \\
    &= (d-1)e^2 \exp\left({-\left(\dfrac{\tau \gamma^2}{c \alpha^4 d^2 L^6} \right)^{1/6}}\right) \label{rate11}
\end{align}
where (\ref{rate11}) follows from Lemma \ref{elementwise}. We also have that 
\begin{align}
    \mathbb{P}\left(|{Z}_i(k,k)| \geq \gamma \right) 
    &\leq
    e^2 \exp\left({-\left(\dfrac{\tau \gamma^2}{c \alpha^4 d^2 L^6} \right)^{1/6}}\right) \label{rate21}
\end{align}
again using Lemma \ref{elementwise}. Combining (\ref{rate11}) and (\ref{rate21}) via a union bound yields for some particular $s$ 
\begin{align}
    \mathbb{P}\left({Z}_i(k,k) + R_{k} \leq \gamma \right) &\geq \mathbb{P}\left( \left\{ R_{k} \leq (d-1)\gamma  \right\}\cap \left\{  Z_i(k,k) \leq  \gamma\right\}\right) \nonumber \\
    &= 1-  \mathbb{P}\left( \left\{ R_{k} \geq (d-1)\gamma  \right\}\cup \left\{  Z(k,k) \geq \gamma \right\}\right) \nonumber \\
     &\geq 1-  \mathbb{P}\left( \left\{ R_{k} \geq (d-1)\gamma  \right\}\right) - \mathbb{P}\left( \left\{  Z(k,k) \geq \gamma \right\}\right) \nonumber \\
     &= 1 - e^2 d \exp\left({-\left(\dfrac{\tau \gamma^2}{c \alpha^4 d^2 L^6} \right)^{1/6}}\right)
     \label{rate44}
\end{align}
Therefore, via a union bound over $k$, we have
\begin{align}
    \mathbb{P}\left(\max_{k \in [d]}\{{Z}_i(k,k) + R_{k}\} \leq \gamma \right) 
    &= 1-  \mathbb{P}\left( \cup_{k \in [d]}\left\{ {Z}_i(k,k) + R_{k} \geq \gamma\right\}\right) \nonumber \\
     &\geq 1 - e^2 d^2\exp\left({-\left(\dfrac{\tau \gamma^2}{c \alpha^4 d^2 L^6} \right)^{1/6}}\right)
     \label{rate44}
\end{align}
for any $\gamma > 0$. As a result, using 
(\ref{maximp}) we have
\begin{align}
    \mathbb{P}\left(\|\mathbf{Z}_i\|\leq \gamma \right) &\geq 1 - e^2 d^2 \exp\left({-\left(\dfrac{\tau \gamma^2}{c \alpha^4 d^2 L^6} \right)^{1/6}}\right) \label{bound_zi}
\end{align}

\end{proof}


We continue with the proof of Theorem \ref{thm:convergeMAML_general} by bounding $\|\mathbf{C}^{-1}\|$. Using an analogous argument as in the proof of Theorem \ref{thm:converge_general} based on Weyl's Inequality (Theorem 4.5.3 in \citep{vershynin2018high}), we have
\begin{align}
    \lambda_{\min}(\mathbf{C}) &\geq \lambda_{\min}(\frac{1}{T}\sum_{i=1}^T \mathbf{Q}_i) - \|\mathbf{Z}_i\| \nonumber \\
    &\geq \hat{\beta} - \gamma \label{gamma1}
\end{align}
where (\ref{gamma1}) follows by (\ref{assump4}) and Lemma \ref{lem:zbound}, with high probability, for any $\gamma > 0$. Thus choosing $\gamma < \hat{\beta}$, we have
\begin{align}
    \|\mathbf{C}^{-1}\| &\leq \frac{1}{\hat{\beta} - \gamma} 
\end{align}
Using this with (\ref{vartheta11}) and Lemma \ref{lem:zbound}, and using a union bound over $i \in [T]$, we have
\begin{align}
    \mathbb{P}\left(\vartheta \geq \frac{2\hat{L} B \gamma}{\hat{\beta}}\frac{1}{\hat{\beta} - \gamma}\right) &\leq e^2 T d^2 \exp \left({-\left(\dfrac{\tau \gamma^2}{c \alpha^4 d^2 L^6} \right)^{1/6}}\right)  \label{vartheta2}
\end{align}

Next, we turn to bounding the second term in (\ref{theta_def}), which is the error term due to the additive noise in the linear regression setting.
We will do this by again making an argument based on the concentration of a polynomial in independent, centered Gaussian random variables around its mean. First, note that the term we must bound is
\begin{align}
    \Bigg\| &\bigg(\frac{1}{Tn_1 \tau}\sum_{i=1}^T \sum_{j=1}^{\tau} \mathbf{P}_{i,j} (\mathbf{X}_{i,j}^{out})^\top \mathbf{X}_{i,j}^{out} \mathbf{P}_{i,j}\bigg)^{-1}  \nonumber \\
    &\bigg(\frac{1}{Tn_1 \tau}\sum_{i=1}^T \sum_{j=1}^{\tau} \mathbf{P}_{i,j} (\mathbf{X}_{i,j}^{out})^\top \mathbf{z}^{out}_{i,j}  - \frac{\alpha}{n_2} \mathbf{P}_{i,j} (\mathbf{X}_{i,j}^{out})^\top \mathbf{X}_{i,j}^{out} (\mathbf{X}_{i,j}^{in})^\top \mathbf{z}^{in}_{i,j} \bigg) \Bigg\| \nonumber \\
    &\leq \Bigg\| \frac{1}{Tn_1 \tau}\bigg(\sum_{i=1}^T  \sum_{j=1}^{\tau} \mathbf{P}_{i,j} (\mathbf{X}_{i,j}^{out})^\top \mathbf{X}_{i,j}^{out} \mathbf{P}_{i,j}\bigg)^{-1} \Bigg\| \nonumber \\
    &\quad \quad\Bigg\| \frac{1}{Tn_1 \tau}  \bigg(\sum_{i=1}^T \sum_{j=1}^{\tau} \mathbf{P}_{i,j} (\mathbf{X}_{i,j}^{out})^\top \mathbf{z}^{out}_{i,j}  - \frac{\alpha}{n_2} \mathbf{P}_{i,j} (\mathbf{X}_{i,j}^{out})^\top \mathbf{X}_{i,j}^{out} (\mathbf{X}_{i,j}^{in})^\top \mathbf{z}^{in}_{i,j} \bigg) \Bigg\| \label{ineqq} \\
    &= \Big\| \mathbf{C}^{-1} \Big\| \Bigg\|  \frac{1}{Tn_1 \tau} \bigg(\sum_{i=1}^T  \sum_{j=1}^{\tau} \mathbf{P}_{i,j} (\mathbf{X}_{i,j}^{out})^\top \mathbf{z}^{out}_{i,j}  - \frac{\alpha}{n_2} \mathbf{P}_{i,j} (\mathbf{X}_{i,j}^{out})^\top \mathbf{X}_{i,j}^{out} (\mathbf{X}_{i,j}^{in})^\top \mathbf{z}^{in}_{i,j} \bigg) \Bigg\| \label{noise1}
\end{align}
where (\ref{ineqq}) follows from the Cauchy-Schwarz Inequality. Define
\begin{align}
    g(k) \coloneqq \Bigg[\frac{1}{Tn_1 \tau}\bigg(\sum_{i=1}^T \sum_{j=1}^{\tau} \mathbf{P}_{i,j} (\mathbf{X}_{i,j}^{out})^\top \mathbf{z}^{out}_{i,j}  - \frac{\alpha}{n_2} \mathbf{P}_{i,j} (\mathbf{X}_{i,j}^{out})^\top \mathbf{X}_{i,j}^{out} (\mathbf{X}_{i,j}^{in})^\top \mathbf{z}^{in}_{i,j} \bigg)\Bigg](k)
\end{align}
for each $k \in [d]$, i.e. $g(k)$ is the $k$-th element of the $d$-dimensional vector\\
$\left(\frac{1}{Tn_1 \tau} \sum_{i=1}^T \sum_{j=1}^{\tau} \mathbf{P}_{i,j} (\mathbf{X}_{i,j}^{out})^\top \mathbf{z}^{out}_{i,j}  - \frac{\alpha}{n_2} \mathbf{P}_{i,j}(\mathbf{X}_{i,j}^{out})^\top \mathbf{X}_{i,j}^{out} (\mathbf{X}_{i,j}^{in})^\top \mathbf{z}^{in}_{i,j} \right)$, and let $g \coloneqq [g(k)]_{1\leq k \leq d}$. Note that each $g(k)$ is a degree-6 polynomial in independent, centered Gaussian random variables. One can see that it is degree-6 because after expanding the $\mathbf{P}_{i,j}$ matrices, there are five data matrices and a noise vector (six total matrices) in the highest-order product.  Also note that this polynomial has mean zero since the noise has mean zero and is independent of the data. Its variance $\mathbb{E}[(g(k) - \mathbb{E}[g(k)])^2]$ can be upper bounded using a similar argument as in Lemma \ref{elementwise}. We will not write the full calculations and argument since they are very similar to those in Lemma \ref{elementwise}. We again use the fact that the polynomial in question is the sum of $n_1 \tau$ i.i.d. random variables to obtain variance decreasing linearly in $n_1 \tau$. The differences are that in the highest-order monomial of $g(k)^2$, there are there are six inner-loop samples, four outer-loop samples, and two noise samples, for a maximum expectation of $15 L^3 \times 3 L^2 \times \nu^2$. There are $T^2 \tau^2 n_1^2 d^4  n_2^4$ 
of these terms, but as before, only $T^2 \tau  d^4 n_1^2 n_2^4$ of these terms have nonzero expectation due to independence, and the $n_1^2 n_2^4$ coefficient cancels. Thus, the variance is upper bounded as
\begin{align}
    \text{Var}(g(k)) \leq c \frac{\alpha^4 d^2 \sum_{i=1}^T\sum_{j=1}^\tau L^3 \nu^2}{T^2 \tau^2} = c \frac{\alpha^4 d^2  L^5 \nu^2}{ \tau} 
\end{align}
for some absolute constant $c$, which implies, via Theorem 1.9 in \cite{Schudy_2012}, that
\begin{align}
\mathbb{P}\left(|g(k) | \geq b \right) \leq e^2 \exp\left({-\left(\frac{\tau b^2}{c \alpha^4 d^2 L^5 \nu^2}\right)^{1/6}}\right)
\end{align}
thus we have via a union bound over $k \in [d]$,
\begin{align} \label{boundg}
\mathbb{P}\left(\|g\| \leq \sqrt{d} b \right) \geq 1 - e^2 d\exp\left({-\left(\frac{\tau b^2}{c \alpha^4 d^2 L^5 \nu^2}\right)^{1/6}}\right)
\end{align}
for any $b>0$.
Combining this with (\ref{noise1}) via a union bound, we have that
\begin{align}
    \Bigg\| &\bigg(\frac{1}{Tn_1 \tau}\sum_{i=1}^T \sum_{j=1}^{\tau} \mathbf{P}_{i,j} (\mathbf{X}_{i,j}^{out})^\top \mathbf{X}_{i,j}^{out} \mathbf{P}_{i,j}\bigg)^{-1} \nonumber \\
    &\bigg(\frac{1}{Tn_1 \tau}\sum_{i=1}^T \sum_{j=1}^{\tau} \mathbf{P}_{i,j} (\mathbf{X}_{i,j}^{out})^\top \mathbf{z}^{out}_{i,j}  - \frac{\alpha}{n_2} \mathbf{P}_{i,j} (\mathbf{X}_{i,j}^{out})^\top \mathbf{X}_{i,j}^{out} (\mathbf{X}_{i,j}^{in})^\top \mathbf{z}^{in}_{i,j} \bigg) \Bigg\| &&\nonumber \\
    &\leq \frac{\sqrt{d} b}{\hat{\beta} - \gamma}   \label{boundg}
\end{align}
with probability at least
\begin{align}
    \geq 1 - e^2 d\exp\left({-\left(\frac{\tau b^2}{c \alpha^4 d^2L^5 \nu^2}\right)^{1/6}}\right) - e^2 T d^2 \exp \left({-\left(\dfrac{\tau \gamma^2}{c \alpha^4 d^2 L^6} \right)^{1/6}}\right)
\end{align}
This completes the bound on $\theta$. Next, we must bound
$\| \mathbf{w}_{MAML}^{(T)\ast} - \mathbf{w}_{MAML}^{\ast}\|$. We have 
\begin{align}
     &\| \mathbf{w}_{MAML}^{(T)\ast} - \mathbf{w}_{MAML}^{\ast}\| \nonumber \\
     &=\bigg\| \bigg(\frac{1}{T}\sum_{i=1}^T \mathbf{Q}_{i} \bigg)^{-1} \frac{1}{T}\sum_{i=1}^T \mathbf{Q}_{i} \mathbf{{w}}_i^\ast - \mathbb{E}_{i}[\mathbf{Q}_i]^{-1} \mathbb{E}_{i}[\mathbf{Q}_i \mathbf{w}_i^\ast]\bigg\| \nonumber \\
     &=  \bigg\| \bigg(\frac{1}{T}\sum_{i=1}^T  \mathbf{Q}_{i} \bigg)^{-1} \frac{1}{T}\sum_{i=1}^T \mathbf{Q}_{i} \mathbf{{w}}_i^\ast - \bigg(\frac{1}{T}\sum_{i=1}^T  \mathbf{Q}_{i} \bigg)^{-1} \mathbb{E}_{i}[\mathbf{Q}_i \mathbf{w}_i^\ast] \nonumber \\
     &\quad \quad \quad \quad + \bigg(\frac{1}{T}\sum_{i=1}^T  \mathbf{Q}_{i} \bigg)^{-1} \mathbb{E}_{i}[\mathbf{Q}_i \mathbf{w}_i^\ast] - \mathbb{E}_{i}[\mathbf{Q}_i]^{-1} \mathbb{E}_{i}[\mathbf{Q}_i \mathbf{w}_i^\ast]\bigg\| \nonumber \\
     &\leq  \bigg\| \bigg(\frac{1}{T} \sum_{i=1}^T  \mathbf{Q}_{i} \bigg)^{-1} \frac{1}{T}\sum_{i=1}^T \mathbf{Q}_{i} \mathbf{{w}}_i^\ast - (\sum_{i=1}^T  \mathbf{Q}_{i} )^{-1} \mathbb{E}_{i}[\mathbf{Q}_i \mathbf{w}_i^\ast] \bigg\|\nonumber \\
     &\quad \quad + \bigg\|\bigg(\frac{1}{T}\sum_{i=1}^T  \mathbf{Q}_{i} \bigg)^{-1} \mathbb{E}_{i}[\mathbf{Q}_i \mathbf{w}_i^\ast] - \mathbb{E}_{i}[\mathbf{Q}_i]^{-1} \mathbb{E}_{i}[\mathbf{Q}_i \mathbf{w}_i^\ast]\bigg\| \label{tgl1m} \\
     &\leq \bigg\|\bigg(\frac{1}{T}\sum_{i=1}^T  \mathbf{Q}_{i} \bigg)^{-1}\bigg\| \bigg\| \frac{1}{T} \sum_{i=1}^T \mathbf{Q}_{i} \mathbf{{w}}_i^\ast - \mathbb{E}_{i}[\mathbf{Q}_i \mathbf{w}_i^\ast] \bigg\|+ \bigg\|\bigg(\frac{1}{T}\sum_{i=1}^T  \mathbf{Q}_{i} \bigg)^{-1}  - \mathbb{E}_{i}[\mathbf{Q}_i]^{-1} \bigg\|\|\mathbb{E}_{i}[\mathbf{Q}_i \mathbf{w}_i^\ast]\| \label{cau1m} \\
     &= \frac{1}{\hat{\beta}} \bigg\| \frac{1}{T} \sum_{i=1}^T \mathbf{Q}_{i} \mathbf{{w}}_i^\ast - \mathbb{E}_{i}[\mathbf{Q}_i \mathbf{w}_i^\ast] \bigg\|+ \bigg\|\bigg(\frac{1}{T}\sum_{i=1}^T  \mathbf{Q}_{i} \bigg)^{-1}  - \mathbb{E}_{i}[\mathbf{Q}_i]^{-1} \bigg\|LB \label{weyl4m}
\end{align}
where in equations (\ref{tgl1m}) and (\ref{cau1m}) we have used the triangle and Cauchy-Schwarz inequalities, respectively, and in (\ref{weyl4m}) we have used the dual Weyl's inequality and Assumption \ref{assump1}. We first consider the second term in (\ref{weyl4m}). Continuing to argue as in the proof of Theorem \ref{thm:converge_general}, we have 
\begin{align}
     \| \mathbf{w}_{MAML}^{(T)\ast} - \mathbf{w}_{MAML}^{\ast}\|
     &\leq \frac{1}{\hat{\beta}}\bigg\| \frac{1}{T} \sum_{i=1}^T \mathbf{Q}_{i} \mathbf{{w}}_i^\ast - \mathbb{E}_{i}[\mathbf{Q}_i \mathbf{w}_i^\ast] \bigg\| + \frac{\hat{L}B}{\hat{\beta}^2}  \bigg\| \frac{1}{T} \sum_{i=1}^T \mathbf{Q}_{i} - \mathbb{E}_{i}[\mathbf{Q}_i ] \bigg\| \label{weyl55}
\end{align}
As in the proof of Theorem \ref{thm:converge_general}, will use the matrix Bernstein inequality (Theorem 6.1.1 in \cite{tropp2015introduction}) to upper bound $\| \frac{1}{T} \sum_{i=1}^T \mathbf{Q}_{i} \mathbf{{w}}_i^\ast - \mathbb{E}_{i}[\mathbf{Q}_i \mathbf{w}_i^\ast] \|$ and $\| \frac{1}{T} \sum_{i=1}^T \mathbf{Q}_{i} - \mathbb{E}_{i}[\mathbf{Q}_i ] \|$ with high probability. 
Doing so yields
\begin{align}
     \| \mathbf{w}_{MAML}^{(T)\ast} - \mathbf{w}_{MAML}^{\ast}\|
     &\leq \frac{1}{\hat{\beta}}\frac{\delta \; \text{Var}(\mathbf{Q}_i \mathbf{w}_i^\ast)}{\sqrt{T}} + \frac{\hat{L}B}{\hat{\beta}^2}\frac{\delta \; \text{Var}(\mathbf{Q}_i)}{\sqrt{T}}  \label{bern10}
\end{align}
with probability at least $1 - 2d \exp\left( -\frac{\delta^2 \; \text{Var}(\mathbf{Q}_i\mathbf{w}_{i,\ast})/2}{1+\hat{L}\delta/(3\sqrt{T})} \right) - 2d\exp\left( -\frac{\delta^2 \; \text{Var}(\mathbf{Q}_i)/2}{1+\hat{L}B\delta/(3\sqrt{T})} \right)$ for any $\delta>0$.

Combining (\ref{bern10}) with our bound on $\theta$ (from (\ref{theta_def}), (\ref{vartheta2}), and (\ref{boundg})) via a union bound and rescaling $\gamma = \gamma_{\text{new}} =  \gamma_{\text{old}}\sqrt{\tau}$ yields
\begin{align}
    \|\mathbf{w}_{MAML} - \mathbf{w}_{MAML}^{\ast} \| 
     &\leq  \frac{2\hat{L} B \gamma}{\hat{\mu}\sqrt{\tau}}\frac{1}{\hat{\mu} - \gamma/\sqrt{\tau}} + \frac{\sqrt{d} b}{\sqrt{\tau}}\frac{1}{\hat{\mu} - \gamma/\sqrt{\tau}} +  \frac{1}{\hat{\beta}}\frac{\delta \; \text{Var}(\mathbf{Q}_i \mathbf{w}_i^\ast)}{\sqrt{T}} + \frac{\hat{L}B}{\hat{\beta}^2}\frac{\delta \; \text{Var}(\mathbf{Q}_i)}{\sqrt{T}} 
\end{align}
with probability at least \begin{align} \label{hp1}
    &1 - e^2 T d^2 \exp \left({-\left(\frac{c \gamma^2}{ \alpha^4 d^2 L^6} \right)^{1/6}}\right) - e^2 d\exp\left({-\left(\frac{c' b^2}{ \alpha^4 d^2 L^5 \nu^2}\right)^{1/6}}\right) \nonumber \\
    &\quad - 2d \exp\left( -\frac{\delta^2 \; \text{Var}(\mathbf{Q}_i\mathbf{w}_{i,\ast})/2}{1+\hat{L}\delta/(3\sqrt{T})} \right) - 2d\exp\left( -\frac{\delta^2 \; \text{Var}(\mathbf{Q}_i)/2}{1+\hat{L}B\delta/(3\sqrt{T})} \right), 
\end{align}
for some absolute constants $c$ and $c'$, and any $b$ and $\delta > 0$, and  $\gamma \in (0, \sqrt{\tau} \hat{\mu})$.

Finally, choose $\gamma=4d\sqrt{\frac{\alpha^4 L^6}{c}} \log^{3}(100Td)$,  $b=2d\sqrt{\frac{\alpha^4 L^5 \nu^2}{c'}}\log^3(100d)$, and \\
$\delta = (\sqrt{\text{Var}(\mathbf{Q}_i)+\text{Var}(\mathbf{Q}_i\mathbf{w}_i^\ast)})^{-1}\sqrt{\log(200d)}$ and restrict $\tau > 32 d^2 \alpha^4 L^6 \log^6(100Td)/(c\hat{\beta})$ such that $\frac{1}{\hat{\beta} - \gamma/\sqrt{\tau}} \leq \frac{2}{\beta}$ and $T > \hat{L}^2 B^2 \log(200d)/(9(\text{Var}(\mathbf{Q}_i)+\text{Var}(\mathbf{Q}_i\mathbf{w}_i^\ast)))$. This ensures that each negative term in the high probability bound (\ref{hp1}) is at most 0.01 and thereby completes the proof.

\end{proof}

\newpage
\bibliography{refs}

\begin{thebibliography}{45}
\providecommand{\natexlab}[1]{#1}
\providecommand{\url}[1]{\texttt{#1}}
\expandafter\ifx\csname urlstyle\endcsname\relax
  \providecommand{\doi}[1]{doi: #1}\else
  \providecommand{\doi}{doi: \begingroup \urlstyle{rm}\Url}\fi

\bibitem[Antoniou et~al.(2018)Antoniou, Edwards, and
  Storkey]{antoniou2018train}
Antreas Antoniou, Harrison Edwards, and Amos Storkey.
\newblock How to train your maml.
\newblock In \emph{International Conference on Learning Representations}, 2018.

\bibitem[Bai et~al.(2020)Bai, Chen, Zhou, Zhao, Lee, Kakade, Wang, and
  Xiong]{bai2020important}
Yu~Bai, Minshuo Chen, Pan Zhou, Tuo Zhao, Jason~D Lee, Sham Kakade, Huan Wang,
  and Caiming Xiong.
\newblock How important is the train-validation split in meta-learning?
\newblock \emph{arXiv preprint arXiv:2010.05843}, 2020.

\bibitem[Balcan et~al.(2019)Balcan, Khodak, and Talwalkar]{balcan2019provable}
Maria-Florina Balcan, Mikhail Khodak, and Ameet Talwalkar.
\newblock Provable guarantees for gradient-based meta-learning.
\newblock In \emph{International Conference on Machine Learning}, pages
  424--433. PMLR, 2019.

\bibitem[Baxter(1998)]{baxter1998theoretical}
Jonathan Baxter.
\newblock Theoretical models of learning to learn.
\newblock In \emph{Learning to learn}, pages 71--94. Springer, 1998.

\bibitem[Bernacchia(2021)]{bernacchia2021meta}
Alberto Bernacchia.
\newblock Meta-learning with negative learning rates.
\newblock \emph{arXiv preprint arXiv:2102.00940}, 2021.

\bibitem[Charles and Kone{\v{c}}n{\`y}(2020)]{charles2020outsized}
Zachary Charles and Jakub Kone{\v{c}}n{\`y}.
\newblock On the outsized importance of learning rates in local update methods.
\newblock \emph{arXiv preprint arXiv:2007.00878}, 2020.

\bibitem[Chen et~al.(2019)Chen, Liu, Kira, Wang, and Huang]{chen2019closer}
Wei-Yu Chen, Yen-Cheng Liu, Zsolt Kira, Yu-Chiang~Frank Wang, and Jia-Bin
  Huang.
\newblock A closer look at few-shot classification.
\newblock \emph{arXiv preprint arXiv:1904.04232}, 2019.

\bibitem[Chen et~al.(2020)Chen, Wang, Liu, Xu, and Darrell]{chen2020new}
Yinbo Chen, Xiaolong Wang, Zhuang Liu, Huijuan Xu, and Trevor Darrell.
\newblock A new meta-baseline for few-shot learning.
\newblock \emph{arXiv preprint arXiv:2003.04390}, 2020.

\bibitem[Cioba et~al.(2021)Cioba, Bromberg, Wang, Niyogi, Batzolis, Shiu, and
  Bernacchia]{cioba2021distribute}
Alexandru Cioba, Michael Bromberg, Qian Wang, Ritwik Niyogi, Georgios Batzolis,
  Da-shan Shiu, and Alberto Bernacchia.
\newblock How to distribute data across tasks for meta-learning?
\newblock \emph{arXiv preprint arXiv:2103.08463}, 2021.

\bibitem[Collins et~al.(2020)Collins, Mokhtari, and
  Shakkottai]{collins2020task}
Liam Collins, Aryan Mokhtari, and Sanjay Shakkottai.
\newblock Task-robust model-agnostic meta-learning.
\newblock In \emph{Advances in Neural Information Processing (NeurIPS)}, 2020.

\bibitem[Denevi et~al.(2018)Denevi, Ciliberto, Stamos, and
  Pontil]{denevi2018learning}
Giulia Denevi, Carlo Ciliberto, Dimitris Stamos, and Massimiliano Pontil.
\newblock Learning to learn around a common mean.
\newblock \emph{Advances in Neural Information Processing Systems},
  31:\penalty0 10169--10179, 2018.

\bibitem[Dhillon et~al.(2019)Dhillon, Chaudhari, Ravichandran, and
  Soatto]{dhillon2019baseline}
Guneet~S Dhillon, Pratik Chaudhari, Avinash Ravichandran, and Stefano Soatto.
\newblock A baseline for few-shot image classification.
\newblock \emph{arXiv preprint arXiv:1909.02729}, 2019.

\bibitem[Du et~al.(2020)Du, Hu, Kakade, Lee, and Lei]{du2020few}
Simon~S Du, Wei Hu, Sham~M Kakade, Jason~D Lee, and Qi~Lei.
\newblock Few-shot learning via learning the representation, provably.
\newblock \emph{arXiv preprint arXiv:2002.09434}, 2020.

\bibitem[Fallah et~al.(2020)Fallah, Mokhtari, and
  Ozdaglar]{fallah2020convergence}
Alireza Fallah, Aryan Mokhtari, and Asuman Ozdaglar.
\newblock On the convergence theory of gradient-based model-agnostic
  meta-learning algorithms.
\newblock In \emph{International Conference on Artificial Intelligence and
  Statistics}, pages 1082--1092, 2020.

\bibitem[Finn et~al.(2017)Finn, Abbeel, and Levine]{finn2017model}
Chelsea Finn, Pieter Abbeel, and Sergey Levine.
\newblock Model-agnostic meta-learning for fast adaptation of deep networks.
\newblock In \emph{Proceedings of the 34th International Conference on Machine
  Learning-Volume 70}, pages 1126--1135. JMLR. org, 2017.

\bibitem[Finn et~al.(2019)Finn, Rajeswaran, Kakade, and Levine]{finn2019online}
Chelsea Finn, Aravind Rajeswaran, Sham Kakade, and Sergey Levine.
\newblock Online meta-learning.
\newblock In \emph{International Conference on Machine Learning}, pages
  1920--1930, 2019.

\bibitem[Gao and Sener(2020)]{gao2020modeling}
Katelyn Gao and Ozan Sener.
\newblock Modeling and optimization trade-off in meta-learning.
\newblock \emph{Advances in Neural Information Processing Systems}, 33, 2020.

\bibitem[Goldblum et~al.(2020)Goldblum, Reich, Fowl, Ni, Cherepanova, and
  Goldstein]{goldblum2020unraveling}
Micah Goldblum, Steven Reich, Liam Fowl, Renkun Ni, Valeriia Cherepanova, and
  Tom Goldstein.
\newblock Unraveling meta-learning: Understanding feature representations for
  few-shot tasks.
\newblock \emph{arXiv preprint arXiv:2002.06753}, 2020.

\bibitem[Ji et~al.(2020{\natexlab{a}})Ji, Lee, Liang, and
  Poor]{ji2020convergence}
Kaiyi Ji, Jason~D Lee, Yingbin Liang, and H~Vincent Poor.
\newblock Convergence of meta-learning with task-specific adaptation over
  partial parameters.
\newblock \emph{arXiv preprint arXiv:2006.09486}, 2020{\natexlab{a}}.

\bibitem[Ji et~al.(2020{\natexlab{b}})Ji, Yang, and Liang]{ji2020theoretical}
Kaiyi Ji, Junjie Yang, and Yingbin Liang.
\newblock Theoretical convergence of multi-step model-agnostic meta-learning.
\newblock \emph{arXiv e-prints}, pages arXiv--2002, 2020{\natexlab{b}}.

\bibitem[Jose and Simeone(2021)]{jose2021information}
Sharu~Theresa Jose and Osvaldo Simeone.
\newblock An information-theoretic analysis of the impact of task similarity on
  meta-learning.
\newblock \emph{arXiv preprint arXiv:2101.08390}, 2021.

\bibitem[Kalan et~al.(2020)Kalan, Fabian, Avestimehr, and
  Soltanolkotabi]{kalan2020minimax}
Seyed Mohammadreza~Mousavi Kalan, Zalan Fabian, A~Salman Avestimehr, and Mahdi
  Soltanolkotabi.
\newblock Minimax lower bounds for transfer learning with linear and one-hidden
  layer neural networks.
\newblock \emph{arXiv preprint arXiv:2006.10581}, 2020.

\bibitem[Khodak et~al.(2019)Khodak, Balcan, and Talwalkar]{khodak2019adaptive}
Mikhail Khodak, Maria-Florina~F Balcan, and Ameet~S Talwalkar.
\newblock Adaptive gradient-based meta-learning methods.
\newblock In \emph{Advances in Neural Information Processing Systems}, pages
  5915--5926, 2019.

\bibitem[Kong et~al.(2020)Kong, Somani, Song, Kakade, and Oh]{kong2020meta}
Weihao Kong, Raghav Somani, Zhao Song, Sham Kakade, and Sewoong Oh.
\newblock Meta-learning for mixed linear regression.
\newblock In \emph{International Conference on Machine Learning (ICML)}, 2020.

\bibitem[Lake et~al.(2019)Lake, Salakhutdinov, and Tenenbaum]{lake2019omniglot}
Brenden~M Lake, Ruslan Salakhutdinov, and Joshua~B Tenenbaum.
\newblock The omniglot challenge: A 3-year progress report.
\newblock \emph{Current Opinion in Behavioral Sciences}, 29:\penalty0 97--104,
  2019.

\bibitem[Nichol et~al.(2018)Nichol, Achiam, and Schulman]{nichol2018reptile}
Alex Nichol, Joshua Achiam, and John Schulman.
\newblock On first-order meta-learning algorithms.
\newblock \emph{arXiv preprint arXiv:1803.02999}, 2018.

\bibitem[Oh et~al.(2020)Oh, Yoo, Kim, and Yun]{oh2020does}
Jaehoon Oh, Hyungjun Yoo, ChangHwan Kim, and Se-Young Yun.
\newblock Does maml really want feature reuse only?
\newblock \emph{arXiv preprint arXiv:2008.08882}, 2020.

\bibitem[Oreshkin et~al.(2018)Oreshkin, Rodriguez, and
  Lacoste]{oreshkin2018tadam}
Boris~N Oreshkin, Pau Rodriguez, and Alexandre Lacoste.
\newblock Tadam: task dependent adaptive metric for improved few-shot learning.
\newblock In \emph{Proceedings of the 32nd International Conference on Neural
  Information Processing Systems}, pages 719--729, 2018.

\bibitem[Raghu et~al.(2019)Raghu, Raghu, Bengio, and Vinyals]{raghu2019rapid}
Aniruddh Raghu, Maithra Raghu, Samy Bengio, and Oriol Vinyals.
\newblock Rapid learning or feature reuse? towards understanding the
  effectiveness of maml.
\newblock In \emph{International Conference on Learning Representations}, 2019.

\bibitem[Rajeswaran et~al.(2019)Rajeswaran, Finn, Kakade, and
  Levine]{rajeswaran2019meta}
Aravind Rajeswaran, Chelsea Finn, Sham~M Kakade, and Sergey Levine.
\newblock Meta-learning with implicit gradients.
\newblock In \emph{Advances in Neural Information Processing Systems}, pages
  113--124, 2019.

\bibitem[Saunshi et~al.(2020)Saunshi, Zhang, Khodak, and
  Arora]{saunshi2020sample}
Nikunj Saunshi, Yi~Zhang, Mikhail Khodak, and Sanjeev Arora.
\newblock A sample complexity separation between non-convex and convex
  meta-learning.
\newblock In \emph{International Conference on Machine Learning (ICML)}, 2020.

\bibitem[Saunshi et~al.(2021)Saunshi, Gupta, and Hu]{saunshi2021representation}
Nikunj Saunshi, Arushi Gupta, and Wei Hu.
\newblock A representation learning perspective on the importance of
  train-validation splitting in meta-learning.
\newblock In \emph{International Conference on Machine Learning}, pages
  9333--9343. PMLR, 2021.

\bibitem[Schudy and Sviridenko(2012)]{Schudy_2012}
Warren Schudy and Maxim Sviridenko.
\newblock Concentration and moment inequalities for polynomials of independent
  random variables.
\newblock \emph{Proceedings of the Twenty-Third Annual ACM-SIAM Symposium on
  Discrete Algorithms}, Jan 2012.
\newblock \doi{10.1137/1.9781611973099.37}.
\newblock URL \url{http://dx.doi.org/10.1137/1.9781611973099.37}.

\bibitem[Sun et~al.(2020)Sun, Liu, Chen, Chua, and Schiele]{sun2020meta}
Qianru Sun, Yaoyao Liu, Zhaozheng Chen, Tat-Seng Chua, and Bernt Schiele.
\newblock Meta-transfer learning through hard tasks.
\newblock \emph{IEEE Transactions on Pattern Analysis and Machine
  Intelligence}, 2020.

\bibitem[Tian et~al.(2020)Tian, Wang, Krishnan, Tenenbaum, and
  Isola]{tian2020rethinking}
Yonglong Tian, Yue Wang, Dilip Krishnan, Joshua~B Tenenbaum, and Phillip Isola.
\newblock Rethinking few-shot image classification: a good embedding is all you
  need?
\newblock In \emph{Computer Vision--ECCV 2020: 16th European Conference,
  Glasgow, UK, August 23--28, 2020, Proceedings, Part XIV 16}, pages 266--282.
  Springer, 2020.

\bibitem[Tripuraneni et~al.(2020)Tripuraneni, Jin, and
  Jordan]{tripuraneni2020provable}
Nilesh Tripuraneni, Chi Jin, and Michael~I Jordan.
\newblock Provable meta-learning of linear representations.
\newblock \emph{arXiv preprint arXiv:2002.11684}, 2020.

\bibitem[Tropp(2015)]{tropp2015introduction}
Joel~A Tropp.
\newblock An introduction to matrix concentration inequalities.
\newblock \emph{arXiv preprint arXiv:1501.01571}, 2015.

\bibitem[Vershynin(2018)]{vershynin2018high}
Roman Vershynin.
\newblock \emph{High-Dimensional Probability: An Introduction with Applications
  in Data Science}, volume~47.
\newblock Cambridge University Press, 2018.

\bibitem[Vinyals et~al.(2016)Vinyals, Blundell, Lillicrap, Kavukcuoglu, and
  Wierstra]{vinyals2016matching}
Oriol Vinyals, Charles Blundell, Timothy Lillicrap, Koray Kavukcuoglu, and Daan
  Wierstra.
\newblock Matching networks for one shot learning.
\newblock \emph{arXiv preprint arXiv:1606.04080}, 2016.

\bibitem[Wang et~al.(2020{\natexlab{a}})Wang, Sun, and Li]{wang2020global}
Haoxiang Wang, Ruoyu Sun, and Bo~Li.
\newblock Global convergence and induced kernels of gradient-based
  meta-learning with neural nets.
\newblock \emph{arXiv preprint arXiv:2006.14606}, 2020{\natexlab{a}}.

\bibitem[Wang et~al.(2020{\natexlab{b}})Wang, Yuan, Wu, and
  Ge]{wang2020guarantees}
Xiang Wang, Shuai Yuan, Chenwei Wu, and Rong Ge.
\newblock Guarantees for tuning the step size using a learning-to-learn
  approach.
\newblock \emph{arXiv preprint arXiv:2006.16495}, 2020{\natexlab{b}}.

\bibitem[Yehudai and Ohad(2020)]{yehudai2020learning}
Gilad Yehudai and Shamir Ohad.
\newblock Learning a single neuron with gradient methods.
\newblock In \emph{Conference on Learning Theory}, pages 3756--3786. PMLR,
  2020.

\bibitem[Zhong et~al.(2017)Zhong, Song, Jain, Bartlett, and
  Dhillon]{zhong2017recovery}
Kai Zhong, Zhao Song, Prateek Jain, Peter~L Bartlett, and Inderjit~S Dhillon.
\newblock Recovery guarantees for one-hidden-layer neural networks.
\newblock In \emph{International conference on machine learning}, pages
  4140--4149. PMLR, 2017.

\bibitem[Zhou et~al.(2019)Zhou, Yuan, Xu, Yan, and Feng]{zhou2019efficient}
Pan Zhou, Xiaotong Yuan, Huan Xu, Shuicheng Yan, and Jiashi Feng.
\newblock Efficient meta learning via minibatch proximal update.
\newblock In \emph{Advances in Neural Information Processing Systems}, pages
  1532--1542, 2019.

\bibitem[Zhou et~al.(2020)Zhou, Wang, Cai, Zhou, Hu, and Wang]{zhou2020expert}
Yucan Zhou, Yu~Wang, Jianfei Cai, Yu~Zhou, Qinghua Hu, and Weiping Wang.
\newblock Expert training: Task hardness aware meta-learning for few-shot
  classification.
\newblock \emph{arXiv preprint arXiv:2007.06240}, 2020.

\end{thebibliography}
\bibliographystyle{plainnat}

\end{document}